\newcommand\numberthis{\addtocounter{equation}{1}\tag{\theequation}}
\newtheorem{theorem}{Theorem}[section]
\newtheorem{proposition}{Proposition}[section]
\newtheorem{lemma}[theorem]{Lemma}
\newtheorem{corollary}{Corollary}[theorem]
\theoremstyle{definition}
\newtheorem{definition}{Definition}[section]
\newtheorem{remark}{Remark}[section]
\renewcommand{\vec}[1]{\boldsymbol{\mathbf{#1}}}
\title{Uniform convergence may be unable to explain generalization in deep learning}
\author{%
  Vaishnavh Nagarajan \\
  Department of Computer Science\\
 Carnegie Mellon University\\
  Pittsburgh, PA \\
  \texttt{vaishnavh@cs.cmu.edu} \\
  \And
  J. Zico Kolter \\
  Department of Computer Science\\
 Carnegie Mellon University \&\\
 Bosch Center for Artificial Intelligence \\
 Pittsburgh, PA \\
 \texttt{zkolter@cs.cmu.edu}
}
\begin{document}

\maketitle

\begin{abstract}
Aimed at explaining the surprisingly good generalization behavior of overparameterized deep networks, recent works have developed a variety of generalization bounds for deep learning,  all  based on the fundamental learning-theoretic technique of uniform convergence. While
it is well-known that many of these existing bounds are numerically large, through numerous experiments, we bring to light a more concerning aspect of these bounds: 
in practice,  these bounds can {\em increase} with the training dataset size. Guided by our observations,
we then present examples of overparameterized linear classifiers and neural networks trained by  gradient descent (GD) where uniform convergence provably cannot ``explain generalization'' -- even if we take into account the implicit bias of GD {\em to the fullest extent possible}. More precisely, even if we consider only the set of classifiers output by GD, which have test errors less than some small $\epsilon$ in our settings, we show that applying (two-sided) uniform convergence on this set of classifiers will yield only a vacuous generalization guarantee larger than $1-\epsilon$. Through these findings,
we cast doubt on the power of uniform convergence-based generalization bounds to provide a complete picture of why overparameterized deep networks generalize well. 
\end{abstract}

% add comment on how compression and PAC-bayesian methods can help
\section{Introduction}

Explaining why overparameterized deep networks generalize well \cite{neyshabur15inductive,zhang17generalization} has become an important open question in deep learning. 
How is it possible that a large network can be trained to perfectly fit randomly labeled data (essentially by memorizing the labels), and yet, the same network when trained to perfectly fit real training data, generalizes well to unseen data? This called for a ``rethinking'' of conventional, algorithm-\textit{independent} techniques to explain generalization. Specifically, it was argued that learning-theoretic approaches must be reformed by identifying and incorporating the implicit bias/regularization of stochastic gradient descent (SGD) \cite{brutzkus18sgd,soudry18sgd,neyshabur17exploring}. Subsequently, a huge variety of novel and refined, algorithm-\textit{dependent} generalization bounds for deep networks have been developed, all based on \textit{uniform convergence}, the most widely used tool in learning theory. The ultimate goal of this ongoing endeavor	
is to derive bounds on the generalization error that  (a) are small, ideally non-vacuous (i.e., $< 1$), (b) reflect the same width/depth dependence as the generalization error (e.g., become smaller with increasing width, as has been surprisingly observed in practice), (c) apply to the network learned by SGD (without any modification or explicit regularization) and (d) increase with the proportion of randomly flipped training labels (i.e., increase with memorization).

While every bound meets some of these criteria (and sheds a valuable but partial insight into generalization in deep learning), there is no known bound that meets all of them   simultaneously. While most bounds \cite{neyshabur15norm,bartlett17spectral,golowich17size,neyshabur18pacbayes,nagarajan2018deterministic,neyshabur18unitwise} apply to the original network, they are neither numerically small for realistic dataset sizes, nor exhibit the desired width/depth dependencies (in fact, these bounds grow exponentially with the depth). The remaining bounds hold either only on a compressed network \citep{arora18compression} or a stochastic network \citep{langford01not} or a network that has been further modified via optimization or more than one of the above \citep{dziugaite17nonvacuous,zhou2018nonvacuous}. Extending these bounds to the original network is understood to be highly non-trivial \citep{nagarajan2018deterministic}.  While strong width-independent bounds have been derived for two-layer ReLU networks \citep{li18learning,zhu18beyond}, these rely on a carefully curated, small learning rate and/or large batch size. (We refer the reader to Appendix~\ref{app:table} for a tabular summary of these bounds.)

In our paper, we bring to light another fundamental issue with existing bounds. We demonstrate that these bounds violate another natural but largely overlooked criterion for explaining generalization: (e) the bounds should decrease with the dataset size at the same rate as the generalization error. In fact, we empirically observe that these bounds can {\em increase} with dataset size, which is arguably a more concerning observation than the fact that they are large for a specific dataset size.

Motivated by the seemingly insurmountable hurdles towards developing bounds satisfying all the above five necessary criteria, we take a step back and examine how the underlying technique of uniform convergence may itself be inherently limited
in the overparameterized regime. Specifically, we present examples of overparameterized linear classifiers and neural networks trained by GD (or SGD) where 
uniform convergence can {\em provably} fail to explain generalization. 
%Notably, we will see that our example preserves our main empirical findings in deep learning, which also end up playing a crucial role in our theoretical analysis.  
Intuitively,  our examples highlight that overparameterized models trained by gradient descent can learn decision boundaries that are largely ``simple'' -- and hence generalize well -- but have ``microscopic complexities'' which cannot be explained away by uniform convergence. Thus our results call into question the active ongoing pursuit of using uniform convergence to fully explain generalization in deep learning.

%While existing uniform convergence bounds provide partial explanations about generalization in GD-based trainining of overparameterized deep networks, through our arguments, we question their potential to explain the phenomenon fully. 

\subparagraph{Our contributions in more detail.}
%  in this paper, we bring to light a fourth, previously overlooked criterion that generalization bounds should satisfy to ``explain generalization'': d) the bounds should show similar dependence on the dataset size as the generalization error -- namely, decrease with it. Unfortunately, we show that in reality, these bounds are far from satisfying this criterion too. 
%Our paper has two main parts -- one empirical and one theoretical -- both of which highlight issues with uniform convergence based bounds.  
We first show that in practice certain weight norms of deep ReLU networks, such as the distance from initialization, {\em increase} polynomially with the number of training examples (denoted by $m$). % 
We then show that as a result, existing generalization bounds -- all of which depend on such weight norms -- fail to reflect even a dependence on $m$ even reasonably similar to the actual test error, violating criterion (e); for sufficiently small batch sizes, these bounds even grow with the number of examples. 
This observation uncovers a conceptual gap in our understanding of the puzzle, by pointing 
 towards a source of vacuity unrelated to parameter count.

As our second contribution, we consider three example setups of overparameterized models trained by (stochastic) gradient descent -- a linear classifier, a sufficiently wide neural network with ReLUs and an infinite width neural network with exponential activations (with the hidden layer weights frozen) -- that learn some underlying data  distribution with small generalization error (say, at most $\epsilon$). These settings also simulate our observation that norms such as distance from initialization grow with dataset size $m$.
More importantly, we prove that, in these settings, {\em any} two-sided {uniform convergence bound} would yield a (nearly) vacuous generalization bound.  

Notably, this vacuity holds even if we ``aggressively'' take implicit regularization into account while applying uniform convergence --  described more concretely as follows. Recall that roughly speaking a uniform convergence bound essentially evaluates the complexity of a hypothesis class (see Definition~\ref{def:unif}). 
%That is, recall that for a hypothesis class, two-sided uniform convergence demands that for most draws of a dataset $S$ of $m$ training datapoints from an underlying data distribution $\mathcal{D}$, and {\em for every hypothesis in the class} (and not just the hypothesis learned by the given algorithm $\mathcal{A}$ on $S$),  the expected error on $\mathcal{D}$ and the empirical error on $S$ must be close to each other . 
As suggested by \citet{zhang17generalization}, one can tighten uniform convergence bounds by pruning the hypothesis class to remove extraneous hypotheses never picked by the learning algorithm  for the data distribution of interest. In our setups, even if we apply uniform convergence on the set of {only those hypotheses picked by the learner whose test errors are all negligible (at most $\epsilon$)}, one can get no better than a nearly vacuous bound on the generalization error (that is at least $1-\epsilon$). In this sense, we say that uniform convergence provably cannot explain generalization in our settings. Finally, we note that while nearly all existing uniform convergence-based techniques are two-sided, we show that even PAC-Bayesian bounds, which are typically presented only as one-sided convergence, also boil down to nearly vacuous guarantees in our settings. 

% attempts at understanding generalization in deep learning using uniform convergence. % based tools for explaining generalization in deep learning.

\subsection{Related Work}
\label{sec:related}

\subparagraph{Weight norms vs. training set size $m$.} %Some earlier findings have cast suspicion on the relevance of weight norms of deep networks in their generalization behavior.  
Prior works like \citet{neyshabur17exploring} and \citet{nagarajan17role}  have studied the behavior of weight norms in deep learning. Although these works do not explicitly study the dependence of these norms on training set size $m$, one can infer from their plots
 that weight norms of deep networks show some increase with $m$.  \citet{belkin18kernel} reported a similar paradox in kernel learning, observing that norms that appear in kernel generalization bounds increase with $m$, and that this is due to noise in the labels. \citet{kawaguchi17generalization} showed that there exist linear models with arbitrarily large weight norms that can generalize well, although such weights are not necessarily found by gradient descent. We crucially supplement these observations in three ways.
  First, we empirically and theoretically demonstrate how, even with {\em zero label noise} (unlike \citep{belkin18kernel}) and by gradient descent (unlike \cite{kawaguchi17generalization}), a significant level of $m$-dependence can arise in the weight norms -- significant enough to make even the generalization bound grow with $m$. Next, we identify uniform convergence as the root cause behind this issue, and thirdly and most importantly, we provably demonstrate this is so.

\subparagraph{Weaknesses of Uniform Convergence.} 
Traditional wisdom is that uniform convergence bounds are a bad choice for complex classifiers like k-nearest neighbors because these hypotheses classes have infinite VC-dimension (which motivated the need for stability based generalization bounds in these cases \citep{rogerss78finite,bousquet02stability}). However, this sort of an argument against uniform convergence may still leave one with the faint hope that, by aggressively pruning the hypothesis class (depending on the algorithm and the data distribution), one can achieve meaningful uniform convergence. In contrast, we seek to {rigorously} and {thoroughly} rule out uniform convergence in the settings we study. We do this by first defining the tightest form of uniform convergence in Definition~\ref{def:unif-alg} -- one that lower bounds any uniform convergence bound -- and then showing that even this bound is vacuous in our settings. Additionally, we note that
we show this kind of failure of uniform convergence for linear classifiers, which is a much simpler model compared to k-nearest neighbors.
 For deep networks,
\citet{zhang17generalization} showed that applying uniform convergence on the whole hypothesis class fails, and that it should instead be applied in an algorithm-dependent way.  \textit{Ours is a much different claim} -- that uniform convergence is inherently problematic in that even the algorithm-dependent application would fail -- casting doubt on the rich line of post-\citet{zhang17generalization} algorithm-dependent approaches. At the same time, we must add the disclaimer that our results do not preclude the fact that uniform convergence may still work if GD is run with explicit regularization (such as weight decay). Such a regularized setting however, is not the main focus of the generalization puzzle \citep{zhang17generalization,neyshabur15inductive}.

%\subparagraph{Learnability and Uniform Convergence.}
Prior works \cite{vapnik71uniform,shwartz10learnability} have also focused on understanding uniform convergence for {\em learnability of learning problems}. Roughly speaking, learnability is a strict notion that does not have to hold even though an algorithm may generalize well for simple distributions in a learning problem. While we defer the details of these works in Appendix~\ref{sec:learnability}, we emphasize here that these results are orthogonal to (i.e., neither imply nor contradict) our results.

% (see Appendix~\ref{sec:learnability} for a more technical discussion). 
%
%In fact, \citet{vapnik71uniform} showed in their seminal paper that learnability is \textit{equivalent} to {uniform convergence} in binary classification problems. 
% \citet{shwartz10learnability} later proved that uniform convergence is however not necessary for {learnability} of a {\em general learning problem}, which includes a larger class of problems besides binary classification. We emphasize that both these results are orthogonal to (i.e., neither imply nor contradict) our results (we refer the reader to Appendix~\ref{sec:learnability} for more clarity).
% 

% as their result does not imply that uniform convergence may not be able to {\em explain generalization} of a particular algorithm for simple distributions in binary classification problems. %It is also worth noting that their result requires an extreme level of overparameterization ($\Omega(2^m)$ parameters), while we require only mild overparameterization ($\Omega(m)$).

\section{Existing bounds vs. training set size}
\label{sec:experiments}
As we stated in criterion (e) in the introduction, a fundamental requirement from a generalization bound, however  numerically large the bound may be, is that it should vary inversely with the size of the training dataset size $(m)$ like the observed generalization error. Such a requirement is satisfied even by standard parameter-count-based VC-dimension bounds, like $\mathcal{O}(dh/\sqrt{m})$ for depth $d$, width $h$ ReLU networks \citep{harvey17vc}. Recent works have ``tightened'' the parameter-count-dependent terms in these bounds by replacing them with seemingly innocuous norm-based quantities; however, we show below that this has also inadvertently introduced training-set-size-count dependencies in the numerator, contributing to the vacuity of bounds.  With these dependencies, the generalization bounds even increase with training dataset size for small batch sizes.  %TODO Add discussion on batch size

\textbf{Setup and notations.} We focus on fully connected networks of depth $d=5$, width $h=1024$ trained on MNIST, although we consider other settings in Appendix~\ref{app:experiments}. We use SGD with learning rate $0.1$ and batch size $1$ to minimize cross-entropy loss until $99\%$ of the training data are classified correctly by a \textit{margin} of at least $\gamma^\star =10$ i.e., if we denote by $f(\vec{x})[y]$ the real-valued logit output (i.e., pre-softmax) on class $y$ for an input $\vec{x}$, we ensure that for $99\%$ of the data $(\vec{x},y)$, the {margin} $\Gamma(f(\vec{x}),y) := f(\vec{x})[y] - \max_{y'\neq y} f(\vec{x})[y']$ is at least  $\gamma^\star$. We emphasize that, from the perspective of generalization guarantees, this stopping criterion helps standardize training across different hyperparameter values, including different values of $m$ \cite{neyshabur17exploring}. Now, observe that for this particular stopping criterion, the test error empirically decreases with size $m$ as $1/m^{0.43}$ as seen in Figure~\ref{fig:terms} (third plot). However, we will see that the story is starkly different for the generalization bounds. 

\subparagraph{Norms grow with training set size $m$.} 
Before we examine the overall generalization bounds themselves, 
we first focus on two quantities that recur in the numerator of many recent bounds: the $\ell_2$ distance of the weights from their initialization \citep{dziugaite17nonvacuous,nagarajan17role} % TODO: cite our pac bayes paper and distance from initialization paper and role of overparametrization paper
 and the product of spectral norms of the weight matrices of the network \cite{neyshabur18pacbayes,bartlett17spectral}.
  We observe in Figure~\ref{fig:terms} (first two plots, blue lines) that both these quantities grow at a polynomial rate with $m$: the former at the rate of at least $m^{0.4}$ and the latter at a rate of $m$. 
% It is worth noting that \cite{dziugaite17nonvacuous} and \cite{nagarajan17role} advocate incorporating the distance from initialization in the analysis of generalization instead of distance from the origin, as the latter is typically a larger quantity. Specifically,
Our observation is a follow-up to
\citet{nagarajan17role} who argued that while distance of the parameters from {the origin}
  grows with width as $\Omega(\sqrt{h})$, the distance from initialization is width-independent (and even decreases with width); hence, they concluded that incorporating the initialization would improve generalization bounds by a $\Omega(\sqrt{h})$ factor.
However, our observations imply that, even though distance from initialization would help explain generalization better in terms of width, it conspicuously fails to help explain generalization in terms of its dependence on $m$ 
 (and so does distance from origin as we show in Appendix Figure~\ref{fig:more-evidence-dist}). \footnote{It may be tempting to think that our observations are peculiar to the cross-entropy loss for which the optimization algorithm diverges.
 % we have effectively standardized the training across different values of $m$ by stopping it after it reaches a particular level of fit on the data viz., $99\%$ of the training data is fit by a margin of at least $10$. 
But we observe that even for the squared error loss (Appendix~\ref{app:experiments}) where the optimization procedure does not diverge to infinity, distance from initialization grows with $m$.  
}
%\vspace{-2em}
%\subparagraph{Distance between learning trajectories grow with $m$ too.} 

Additionally, we also examine another quantity as an alternative to distance from initialization:  the $\ell_2$ diameter of {\em the parameter space explored by SGD}. That is, for a fixed initialization and data distribution, we consider the set of all parameters learned by SGD across all draws of a dataset of size $m$; we then consider the diameter of the smallest ball enclosing this set. 
If this diameter exhibits a better behavior than the above quantities, one could then explain generalization better by replacing the distance from initialization with the distance from the center of this ball in existing bounds.  As a lower bound on this diameter, we consider the distance between the weights learned on two independently drawn datasets from the given initialization. Unfortunately, we observe that even this quantity shows a similar undesirable behavior with respect to $m$ like distance from initialization (see Figure~\ref{fig:terms}, first plot, orange line).

\begin{figure}[t]
    \centering
    \begin{minipage}[t]{\textwidth}
           \begin{minipage}{.25\textwidth}
        \centering
        \adjincludegraphics[width=1\textwidth,trim={0 {0} 0 0},clip,,valign=t]{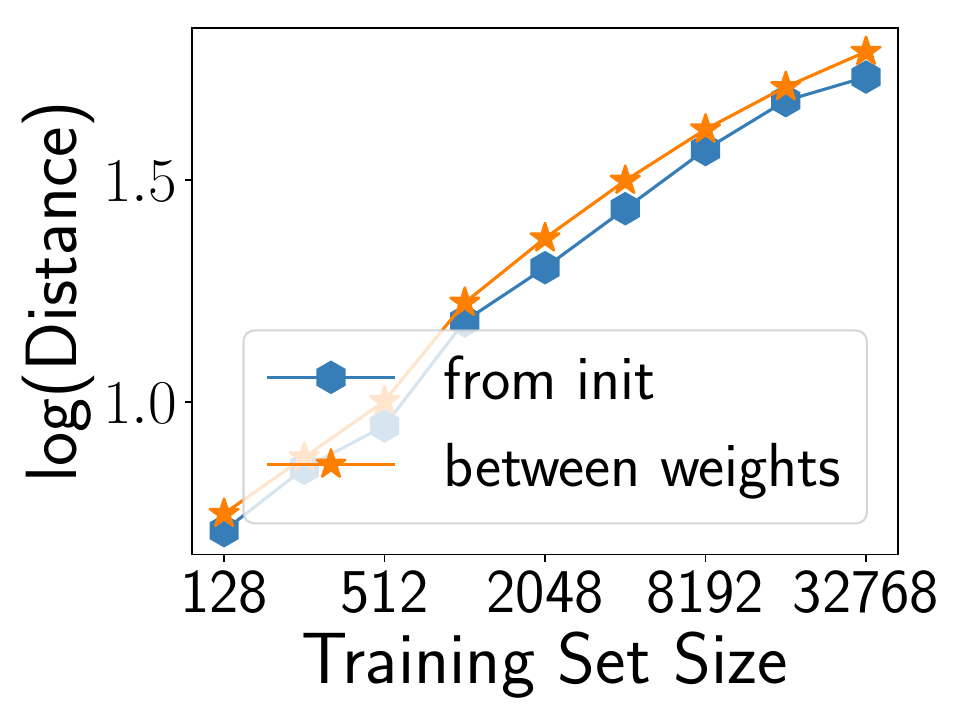} %
    \end{minipage}%
            \begin{minipage}{.25\textwidth}
        \centering
        \adjincludegraphics[width=1\textwidth,trim={0 {0} 0 0},clip,valign=t]{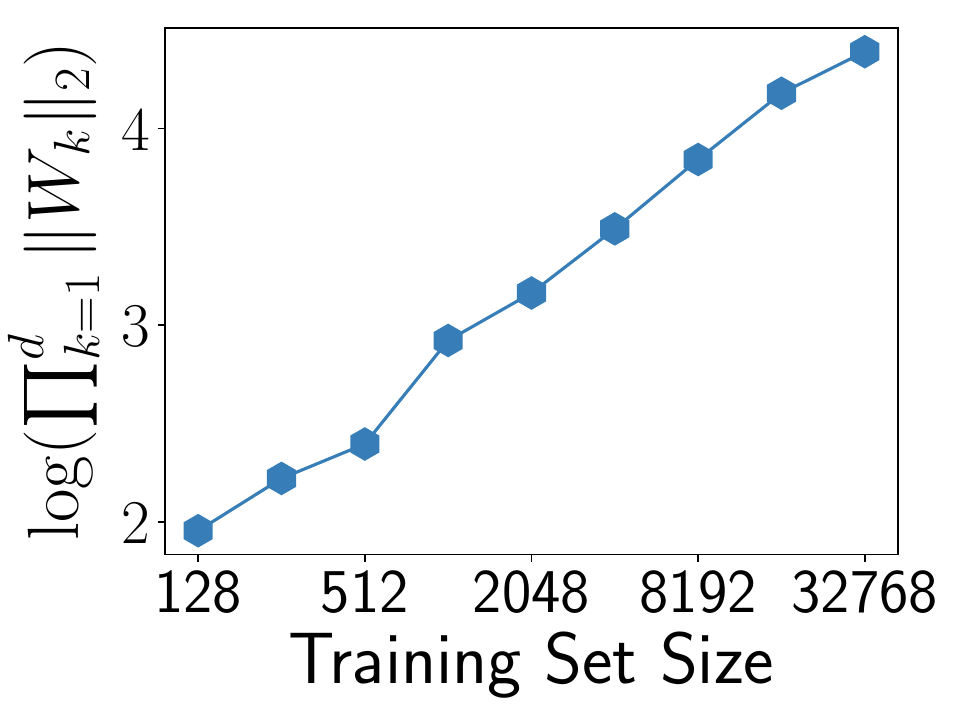} %
    \end{minipage} 
               \begin{minipage}{.25\textwidth}
        \centering
        \adjincludegraphics[width=1\textwidth,trim={0 {0} 0 0},clip,,valign=t]{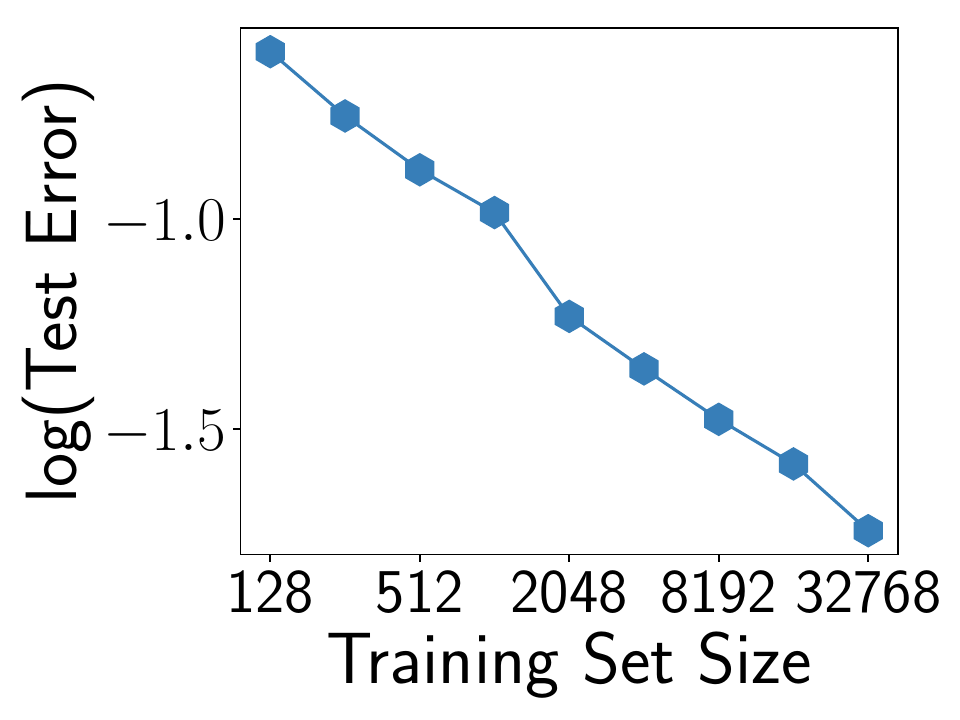} %
    \end{minipage}%
               \begin{minipage}{.25\textwidth}
        \centering
        \adjincludegraphics[width=1\textwidth,trim={0 {0} 0 0},clip,,valign=t]{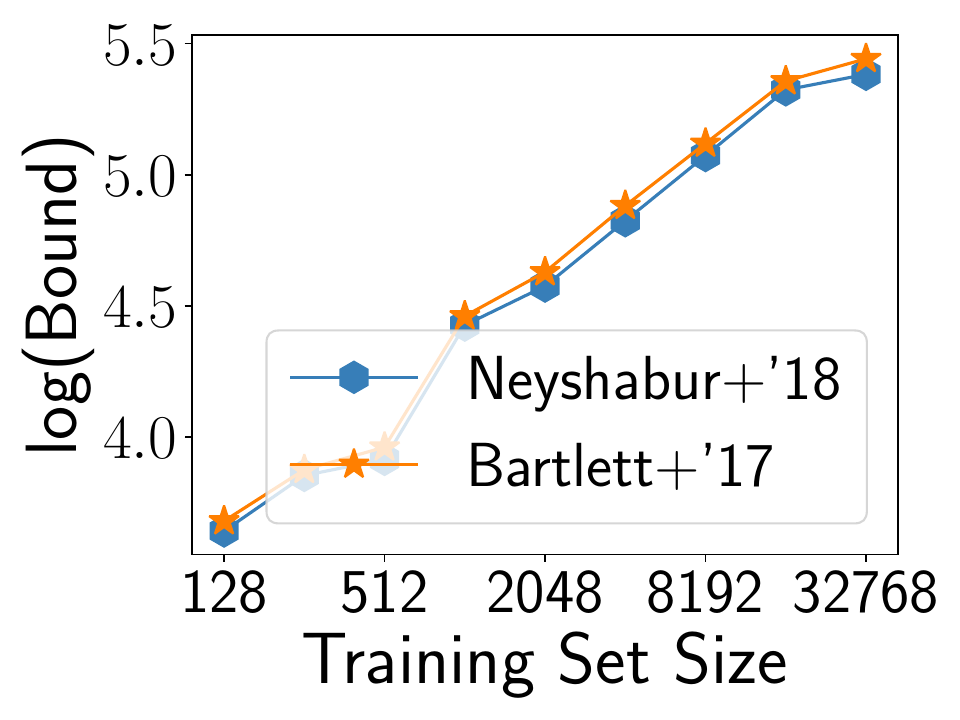} %
    \end{minipage}%
    \end{minipage}
       \caption{\textbf{Experiments in Section~\ref{sec:experiments}:} In the \textbf{first} figure, we plot (i)  $\ell_2$ the distance of the network from the initialization and (ii) the $\ell_2$ distance between the weights learned on two random draws of training data starting from the same initialization.
       % Both these quantities grow as $\Omega(m^{0.42})$.  
       In the \textbf{second} figure we plot the product of spectral norms of the weights matrices.
       % and observe that it grows as fast as $\Omega(m)$ for $d=5$. See Figure~\ref{fig:depth-wise}  in the appendix, for a layer-by-layer plot of these terms.       
       In the \textbf{third} figure, we plot the  test error.
       %which decreases with sample size as $\mathcal{O}(1/m^{0.43})$.  
       In the \textbf{fourth} figure, we plot the bounds from \cite{neyshabur18pacbayes,bartlett17spectral}.
       % and observe that they grow as $\Omega(m^{0.68})$. 
       Note that we have presented log-log plots and the exponent of $m$ can be recovered from the slope of these plots. } 
       \label{fig:terms}
\end{figure}

\subparagraph{The bounds grow with training set size $m$.}
We now turn to evaluating existing guarantees from \citet{neyshabur18pacbayes} and \citet{bartlett17spectral}. As we note later, our observations apply to many other bounds too. % These guarantees can be written as follows, after incorporating the distance from initialization,
% as recommended by \citet{nagarajan17role}, 
% and after ignoring some log factors. 
 Let $W_1, \hdots, W_d$ be the weights of the learned network (with $W_1$ being the weights adjacent to the inputs), $Z_1, \hdots, Z_d$  the random initialization, $\mathcal{D}$ the true data distribution and $S$ the training dataset. For all inputs $\vec{x}$, let $\| \vec{x}\|_2 \leq B$. Let $\|\cdot \|_2, \| \cdot\|_F, \| \cdot \|_{2,1}$ denote the spectral norm, the Frobenius norm and the matrix $(2,1)$-norm respectively; let $\mathbf{1}[\cdot]$ be the indicator function. Recall that 
$\Gamma(f(\vec{x}),y) := f(\vec{x})[y] - \max_{y'\neq y} f(\vec{x})[y']$  denotes the margin of the network on a datapoint.
Then, for any constant $\gamma$, these generalization guarantees are written as follows, ignoring log factors:
\begin{align*}
 \textrm{Pr}_{\mathcal{D}}[\Gamma(f(\vec{x}),y) & \leq 0]  \leq \frac{1}{m}\sum_{(x,y)\in S}\mathbf{1}[\Gamma(f(\vec{x}),y) \leq \gamma]  + \text{generalization error bound}. \numberthis \label{eq:gen-error-bound}
\end{align*}
Here the generalization error bound is of the form $\mathcal{O}\left( \frac{Bd\sqrt{h}}{\gamma\sqrt{m}}\prod_{k=1}^{d} \| W_k\|_2 \times  \texttt{dist} \right)$ where $\texttt{dist}$ equals
$\sqrt{\sum_{k=1}^{d}\frac{\|W_k - Z_k \|_F^2}{\|W_k\|^2_2}}$ in \cite{neyshabur18pacbayes}  and $\frac{1}{d\sqrt{h}} \left( {\sum_{k=1}^{d}\left(\frac{\|W_k - Z_k \|_{2,1}}{\|W_k\|_2}\right)^{2/3}}\right)^{3/2} $ in \cite{bartlett17spectral}.

 In our experiments, since we train the networks to fit at least $99\%$ of the datapoints with a margin of $10$, in the above bounds, we set $\gamma=10$ so that the first train error term in the right hand side of Equation~\ref{eq:gen-error-bound} becomes a small value of at most $0.01$. We then plot in Figure~\ref{fig:terms} (fourth plot), the second term above, namely the generalization error bounds, and 
 %and also variations of it with the distance from initialization swapped with distance from a weight learned on another random draw of the dataset.  We 
 observe that all these bounds {\em grow with the sample size $m$} as $\Omega(m^{0.68})$, thanks to the fact that the terms in the numerator of these bounds grow with $m$. Since we are free to plug in $\gamma$ in Equation~\ref{eq:gen-error-bound}, one may wonder whether there exists a better choice of $\gamma$ for which we can observe a smaller increase on $m$ (since the plotted terms inversely depend on $\gamma$).  However, in Appendix Figure~\ref{fig:median-margin} we establish that even for larger values of $\gamma$, this $m$-dependence remains. Also note that, although we do not plot the bounds from \citep{nagarajan2018deterministic,golowich17size}, these have nearly identical norms in their numerator, and so one would not expect these bounds to show  radically better behavior with respect to $m$.
 Finally,  we defer experiments conducted for other varied settings, and the neural network bound from \citep{neyshabur18unitwise} to Appendix~\ref{app:experiments}. %Notably, the $m$-dependencies observed in this discussion are more pronounced with larger depth, and less pronounced with larger batch sizes. %Unfortunately, this also implies that these bounds also do not reflect, even roughly, the true dependency on the batch size: even though the generalization error decreases slightly with decreasing batch size (at least for our chosen stopping criterion, with all other hyperparameters fixed), these bounds increase by a couple of orders of magnitude  (see Figure~\ref{fig:bs-2}). 

While the bounds might show better $m$-dependence for other settings -- indeed, for larger batches, we show in Appendix~\ref{app:experiments} that the bounds behave better -- we believe that the egregious break down of these bounds in this setting (and many other hyperparameter settings as presented in Appendix~\ref{app:experiments}) must imply fundamental issues with the bounds themselves.
%In effect, our observations reveal that existing generalization bounds fail to meet some fundamental criterion or the other needed to explain generalization.
 While this may be addressed to some extent with a better understanding of implicit regularization in deep learning, we regard our observations as a call for taking a step back and clearly understanding any inherent limitations in the theoretical tool underlying all these bounds namely, uniform convergence. % -- which is what the rest of this paper is dedicated to.
\footnote{\textbf{Side note:} Before we proceed to the next section, where we blame uniform convergence for the above problems, we briefly note that we considered another simpler possibility. Specifically, we hypothesized that, for {\em some} (not all) existing bounds, the above problems could arise from an issue that does not involve uniform convergence, which we term as {\em pseudo-overfitting}. Roughly speaking, a classifier pseudo-overfits when its decision boundary is simple but its {\em real-valued} output has large ``bumps'' around some or all of its training datapoint.  As discussed in Appendix~\ref{app:pseudo-overfit}, deep networks pseudo-overfit only to a limited extent, and hence psuedo-overfitting does not provide a complete explanation for the issues faced by these bounds.} % (although we note that it might still be worth studying pseudo-overfitting in isolation for future work).}

%\subparagraph{A possible fundamental limitation in existing approaches (ruled out).} Before proceeding to our main theoretical result about uniform convergence, we emphasize that we considered the possibility that some existing approaches may suffer from the above problems due to a fundamental issue that does not involve uniform convergence (we defer much of this discussion to Appendix~\ref{app:pseudo-overfit}). Specifically, we considered margin-based Rademacher bounds (e.g., \citet{bartlett17spectral}) which, besides bounding the generalization error, also effectively act as an upper bound on the difference in the mean margins of the network on the test and the training set. Now, even if the generalization error decreases with sample size $m$, it is possible for the difference in the margins to not decrease with $m$, if the classifier suffers from a phenomenon we call as {\em pseudo-overfitting} i.e., roughly speaking, the classifier magnifies its real-valued output specifically near each training point even though it learns a simple decision boundary. Then, the Rademacher bound would  not decrease with $m$ either. As discussed in detail in Appendix~\ref{app:pseudo-overfit}, we establish that deep networks however do {\em not} pseudo-overfit their margins and the differences in test/train margins do decrease substantially with $m$ -- thus, justifying our interest in scrutinizing uniform convergence.

\section{Provable failure of uniform convergence}
\label{sec:setup}
%\label{sec:preliminaries}
%In this section, we formally define the terms we will study theoretically: the generalization error of an algorithm, and the different flavors of uniform convergence based bounds.

\subparagraph{Preliminaries.}
Let $\mathcal{H}$ be a class of hypotheses mapping from $\mathcal{X}$ to $\mathbb{R}$, and let $\mathcal{D}$ be a distribution over $\mathcal{X} \times \{-1, +1 \}$. The loss function we mainly care about is the 0-1 error; but since a direct analysis of the uniform convergence of the 0-1 error is hard, sometimes a more general margin-based surrogate of this error (also called as ramp loss) is  analyzed for uniform convergence. Specifically, given the classifier's logit output $y' \in \mathbb{R}$ and the true label $y \in \{ -1,+1\}$, define 
\[
\mathcal{L}^{(\gamma)}(y',y) = \begin{cases} 
1 & y y' \leq 0 \\
1 - \frac{y y'}{\gamma} & yy' \in (0,\gamma) \\
0 & yy' \geq \gamma.
\end{cases}
\]
%\mathcal{L}^{(\gamma)}(y',y) = \mathbf{1}[y y' < \gamma] \cdot
%\min\left(1 - {y y'}/{\gamma},1\right)$. 
Note that $\mathcal{L}^{(0)}$ is the 0-1 error, and $\mathcal{L}^{(\gamma)}$ an upper bound on the 0-1 error. We define for any $\mathcal{L}$, the expected loss  as $\mathcal{L}_{\mathcal{D}}(h)  := \mathbb{E}_{(\vec{x},y) \sim \mathcal{D}}[\mathcal{L}(h(\vec{x}),y)   ]$ and the empirical loss  on a dataset $S$ of $m$ datapoints $\hat{\mathcal{L}}_{S}(h)  :=\frac{1}{m}\sum_{(\vec{x},y) \in S} \mathcal{L}(h(\vec{x}),y)$.
%
%
% \begin{align*}
%\mathcal{L}_{\mathcal{D}}(h)  := \mathbb{E}_{(\vec{x},y) \sim \mathcal{D}}[\mathcal{L}(h(\vec{x}),y)   ] &&
%\hat{\mathcal{L}}_{S}(h)  :=\frac{1}{m}\sum_{(\vec{x},y) \in S} \mathcal{L}(h(\vec{x}),y) 
% \end{align*}
Let $\mathcal{A}$ be the learning algorithm and let
$h_{S}$ be the hypothesis output by the algorithm on a dataset $S$ (assume that any training-data-independent randomness, such as the initialization/data-shuffling is fixed). 
%For simplicity, throughout our discussion, we will use $\mathcal{S}_{\delta}$ to denote a subset of the sample space $(\mathcal{X} \times \{ -1,+1\})^m$ such that $Pr_{S \sim \mathcal{D}^m}[S \in \mathcal{S}_\delta] \geq 1-\delta$. 

For a given $\delta \in (0,1)$, the generalization error of the algorithm is essentially a bound on the difference between the error of the hypothesis $h_{S}$ learned on a training set $S$ and the expected error over $\mathcal{D}$, that holds with high probability of at least $1-\delta$ over the draws of $S$. More formally: \\

\begin{definition} 
\label{def:gen-error}
The\textbf{ generalization error}  of $\mathcal{A}$
%of $\mathcal{A}$ 
with respect to loss $\mathcal{L}$ is the smallest value $\epsilon_{\text{gen}}(m, \delta)$ such that:
$\textrm{Pr}_{S \sim \mathcal{D}^m}\left[
 \mathcal{L}_{\mathcal{D}}(h_{S}) -
 \hat{\mathcal{L}}_{S}(h_{S}) 
\leq \epsilon_{\text{gen}}(m, \delta) \right] \geq 1- \delta$.
%\numberthis \label{eq:generalization-error}
%\end{align*}
%\begin{align*}
%Pr_{S \sim \mathcal{D}^m}\left[ \mathcal{L}_{\mathcal{D}}(h_{S}) -
% \hat{\mathcal{L}}_{S}(h_{S}) 
%> \epsilon_{\text{gen}}(m, \delta)   \right] <  \delta
%\numberthis \label{eq:generalization-error}
%\end{align*}
\end{definition}

To theoretically bound the generalization error of the algorithm, the most common approach is to provide a two-sided uniform convergence bound on the hypothesis class used by the algorithm, where, for a given draw of $S$, we look at convergence for all the hypotheses in $\mathcal{H}$ instead of just $h_{S}$:\\ % \footnote{ \textbf{Note}: The VC-dimension is a distribution-{\em independent} (or distribution-free) variety of uniform convergence; it is an even stricter notion where the bound in Equation~\ref{eq:uniform-convergence} has to hold for {\em all} possible  $\mathcal{D}$. However, recent deep network bounds such as those based on Rademacher complexity are distribution-dependent as above. } \\

\begin{definition} 
\label{def:unif}
The \textbf{uniform convergence bound} 
with respect to loss $\mathcal{L}$ is the smallest value $\epsilon_{\text{unif}}(m, \delta)$ such that:
$\textrm{Pr}_{S \sim \mathcal{D}^m}\left[
\sup_{h \in \mathcal{H}} \left| \mathcal{L}_{\mathcal{D}}(h) -
 \hat{\mathcal{L}}_{S}(h) \right| \leq \epsilon_{\text{unif}}(m, \delta)\right] \geq 1- \delta$.
 %  \numberthis\label{eq:uniform-convergence}
%\end{align*}
\end{definition}

\subparagraph{Tightest algorithm-dependent uniform convergence.} The bound given by $\epsilon_{\text{unif}}$
can be tightened by ignoring many extraneous hypotheses in $\mathcal{H}$ never picked by $\mathcal{A}$ for a given simple distribution $\mathcal{D}$. This is typically done by focusing on a norm-bounded class of hypotheses that the algorithm $\mathcal{A}$ implicitly restricts itself to. Let
us take this to the extreme by applying uniform convergence on ``the smallest possible class'' of hypotheses, namely, {\em only} those hypotheses that are picked by $\mathcal{A}$ under $\mathcal{D}$, excluding everything else. Observe that pruning the hypothesis class any further would not imply a bound on the generalization error, and hence applying uniform convergence on this aggressively pruned hypothesis class would yield {the tightest possible uniform convergence bound}.
% (and hence would be smaller than $\epsilon_{\text{unif}}(m, \delta)$). 
 Recall that we care about this formulation because our goal is to rigorously and thoroughly rule out the possibility that no kind of uniform convergence bound, however cleverly applied, can explain generalization in our settings of interest (which we will describe later).

To formally capture this bound, it is helpful to first rephrase the above definition of $\epsilon_{\text{unif}}$: we can say that $\epsilon_{\text{unif}}(m,\delta)$ is  the smallest value for which there exists a set of sample sets $\mathcal{S}_{\delta} \subseteq (\mathcal{X} \times \{-1,1 \})^m$ for which $Pr_{S\sim \mathcal{D}^m}[S \in \mathcal{S}_{\delta}] \geq 1-\delta$ and furthermore, $\sup_{S \in \mathcal{S}_{\delta}} \sup_{h\in \mathcal{H}} | \mathcal{L}_{\mathcal{D}}(h) -
 \hat{\mathcal{L}}_{S}(h) | \leq \epsilon_{\text{unif}}(m,\delta)$. Observe that this definition is {\em equivalent} to Definition~\ref{def:unif}. Extending this rephrased definition, we can define the tightest uniform convergence bound by replacing $\mathcal{H}$ here with only those hypotheses that are explored by the algorithm $\mathcal{A}$ under the datasets belonging to $\mathcal{S}_{\delta}$:\\

\begin{definition} 
\label{def:unif-alg}
The \textbf{tightest algorithm-dependent uniform convergence bound}
%of (the hypothesis class $\mathcal{H}$, algorithm $\mathcal{A}$)-pair 
with respect to loss $\mathcal{L}$ is the smallest value $\epsilon_{\text{unif-alg}}(m, \delta)$ for which there exists a set of sample sets $\mathcal{S}_{\delta}$ such that $Pr_{S\sim \mathcal{D}^m}[S \in \mathcal{S}_{\delta}] \geq 1-\delta$ and 
 if we define the space of hypotheses explored  by $\mathcal{A}$ on $\mathcal{S}_{\delta}$ as $\mathcal{H}_{\delta} := \bigcup_{S \in \mathcal{S}_{\delta}}  \{ h_{S} \} \subseteq \mathcal{H}$, the following holds:
$
\sup_{S \in \mathcal{S}_{\delta}}
\sup_{h \in \mathcal{H}_{\delta}}  \left| 
\mathcal{L}_{\mathcal{D}}(h) -
 \hat{\mathcal{L}}_{S}(h)  
\right| \leq \epsilon_{\text{unif-alg}}(m,\delta) 
$.
\end{definition} 

%\begin{remark} Since $\forall S \in \mathcal{S}_{\delta}$, we have $h_{S} \in \mathcal{H}_{\delta}$, we also have $\epsilon_{\text{gen}}(m,\delta)  \leq \epsilon_{\text{unif-alg}}(m,\delta)  $. On  the other hand, since $\mathcal{H}_{\delta} \subseteq \mathcal{H}$, $\epsilon_{\text{unif-alg}}(m,\delta)  \leq \epsilon_{\text{unif}}(m,\delta)$. %More importantly,  $\epsilon_{\text{unif-alg}}$ is essentially a lower bound on all two-sided uniform convergence-based bounds because one cannot apply uniform convergence on a hypothesis smaller than $\mathcal{H}_{\delta}$ and claim to have explained generalization.
%\end{remark}

%\begin{remark} Sometimes uniform convergence bounds are written with an explicit dependence on the weights learned. In contrast, our definition $\epsilon_{\text{unif-alg}}(m,\delta)$ is evidently devoid of such dependence because our bound applies in supremum over all hypotheses picked by $\mathcal{A}$, which is what we ultimately care about in order to explain generalization e.g., if $\mathcal{A}$ always ensures $\|\vec{w} \|_2 \leq 1$, this bound would correspond to $1/\sqrt{m}$.  %Thus, our definition is not restrictive in any sense.  
%\end{remark}

%\textbf{Remark 3.} If the algorithm makes use of random bits independent of the training dataset (such as say the random initialization), one can fix these bits and then define $S$ as the hypothesis learned under these fixed bits. 

% I'm deleting this remark because it's already present in a different form later where I discuss this specifically for our setup.

%\section{Theoretical Model}

In the following sections, through examples of overparameterized models trained by GD (or SGD), we argue how even the above tightest algorithm-dependent uniform convergence can fail to explain generalization. i.e., in these settings, even though $\epsilon_{\text{gen}}$ is smaller than a negligible value $\epsilon$, we show that $\epsilon_{\text{unif-alg}}$ is large (specifically, at least $1-\epsilon$).   Before we delve into these examples, below we quickly outline the key mathematical idea by which uniform convergence is made to fail. 

Consider a scenario where the algorithm generalizes well i.e., for every training set $\tilde{S}$, 
$h_{\tilde{S}}$ has zero error on $\tilde{S}$ and has small test error. While this means that $h_{\tilde{S}}$ has small error on \textit{random} draws of a test set, it may still be possible that for every such $h_{\tilde{S}}$, there exists a corresponding ``bad'' dataset $\tilde{S}'$ -- that is not random, but rather dependent on $\tilde{S}$ -- on which $h_{\tilde{S}}$ has a large empirical error (say $1$).  
Unfortunately, uniform convergence runs into trouble while dealing with such bad datasets. Specifically, as we can see from the above definition, uniform convergence demands that $|\mathcal{L}_{\mathcal{D}}(h_{\tilde{S}}) -
 \hat{\mathcal{L}}_{S}(h_{\tilde{S}})  |$ be small on all datasets in $\mathcal{S}_{\delta}$, which excludes a $\delta$ fraction of the datasets. While it may be tempting to think that we can somehow exclude the bad dataset as part of the  $\delta$ fraction, there is a significant catch here: we can not carve out a $\delta$ fraction specific to each hypothesis; we can ignore only a single chunk of $\delta$ mass common to all hypotheses in $\mathcal{H}_{\delta}$.  This restriction turns out to be a tremendous bottleneck: despite ignoring this $\delta$ fraction, for most $h_{\tilde{S}} \in \mathcal{H}_{\delta}$, the corresponding bad set $\tilde{S}'$ would still be left in  $\mathcal{S}_{\delta}$. Then, for all such $h_{\tilde{S}}$, $\mathcal{L}_{\mathcal{D}}(h_{\tilde{S}})$ would be small but $\hat{\mathcal{L}}_{S}(h_{\tilde{S}}) $ large; we can then set the $S$ inside the $\sup_{S \in \mathcal{S}_{\delta}}$ to be $\tilde{S}'$ to conclude that 
 $\epsilon_{\text{unif-alg}}$ is indeed vacuous. This is the kind of failure we will demonstrate in a high-dimensional linear classifier in the following section, and a ReLU neural network in Section~\ref{sec:hypersphere}, and an infinitely wide exponential-activation neural network in Appendix~\ref{sec:exp} -- all trained by GD or SGD. \footnote{In Appendix~\ref{sec:warm-up}, the reader can find a more abstract setting illustrating this mathematical idea more clearly.}    

\textbf{Note:} Our results about failure of uniform convergence holds even for bounds that output a different value for each hypothesis. In this case, the tightest uniform convergence bound for a given hypothesis would be at least as large as $\sup_{S \in \mathcal{S}_{\delta}} |\mathcal{L}_{\mathcal{D}}(h_{\tilde{S}}) - \hat{\mathcal{L}}_{S}(h_{\tilde{S}})|$ which by a similar argument would be vacuous for most draws of the training set $\tilde{S}$. We discuss this in more detail in Appendix~\ref{sec:weight-dependence}.

% \footnote{Some uniform convergence bounds are written with an explicit dependence on the weights learned, unlike Definition~\ref{def:unif-alg}. However, even these bounds are vacuous as discussed in Appendix~\ref{}.}
 
% 
%that this fraction must be ignored in common to all the hypotheses in $\mathcal{H}_{\delta}$ (as against ignoring a fraction specific to each hypothesis). 
%
%, it is easy to note that deriving a small uniform convergence bound boils down to picking a sample set space $\mathcal{S}_{\delta}$ of mass $1-\delta$ such that $\sup_{S \in \mathcal{S}_{\delta}}
%\sup_{h \in \mathcal{H}_{\delta}}  | 
%\mathcal{L}_{\mathcal{D}}(h) -
% \hat{\mathcal{L}}_{S}(h)  
%|$ is small. On one hand, it is possible that for every dataset $S \in \mathcal{S}_{\delta}$, the corresponding hypothesis $h_{S} \in \mathcal{H}_{\delta}$ has low empirical error on $S$ and low expected error on $\mathcal{D}$, resulting in low generalization error. On the other hand, two-sided uniform convergence demands that {\em every possible pair} of $S \in \mathcal{S}_{\delta}$ and $h \in \mathcal{H}_{\delta}$ have empirical error that is close to the test error of $h \in \mathcal{H}_{\delta}$, which is small. However, we will construct situations where there exist ``bad'' $(S,h)$ pairs {\em with large empirical error} and low test error that ruin uniform convergence.

\subsection{High-dimensional linear classifier}
\label{sec:linear}

\subparagraph{Why a linear model?}  Although we present a neural network example in the next section, we first emphasize why it is also important to understand how uniform convergence could fail for linear classifiers trained using GD. First, it is more natural to expect uniform convergence to yield poorer bounds in more complicated classifiers; linear models are arguably the simplest of classifiers, and hence showing failure of uniform convergence in these models is, in a sense, the most interesting.
Secondly, recent works (e.g., \citep{jacot18ntk}) have shown that as the width of a deep network goes to infinity, under some conditions, the network converges to a high-dimensional linear model (trained on a high-dimensional transformation of the data) -- thus making the study of high-dimensional linear models relevant to us. Note that our example is not aimed at modeling the setup of such linearized neural networks. However, 
it does provide valuable intuition about the mechanism by which uniform convergence fails, and we show how this extends to neural networks in the later sections.

%As we will see, the weight norms of this classifier grow with $m$ while the margins do not, resembling our observations in deep networks -- this disparity is in fact crucial to our lower bound on uniform convergence. We later present extensions of this idea to non-linear neural networks trained by SGD.

\textbf{Setup.} Let each input be a $K+D$ dimensional vector (think of $K$  as a small constant and $D$ much larger than $m$). The value of any input $\vec{x}$ is denoted by $(\vec{x}_1, \vec{x}_2)$ where $\vec{x}_1 \in \mathbb{R}^K$ and $\vec{x}_2 \in \mathbb{R}^D$. Let the centers of the (two) classes be determined by an arbitrary vector $\vec{u} \in \mathbb{R}^K$ such that $\|\vec{u}\|_2=1/\sqrt{m}$.  Let $\mathcal{D}$ be such that the label $y$ has equal probability of being $+1$ and $-1$, and $\vec{x}_1 = 2\cdot y \cdot \vec{u}$ while $\vec{x}_2$ is sampled independently from a spherical Gaussian, $\mathcal{N}(0,\frac{32}{D}I)$.\footnote{As noted in Appendix~\ref{sec:remark}, it is easy to extend the discussion by assuming that $\vec{x}_1$ is spread out around $2y\vec{u}$.} Note that the distribution is linearly separable based on the first few ($K$) dimensions. 
%While some other aspects of our distribution may seem artificial (such as the fact that $\|\vec{u}\| = 1/\sqrt{m}$ or that there are at least $m$ noisy dimensions), we note that these requirements stem only in the linear model. When we look at examples of the more complicated neural network models, we can show failure of uniform convergence in realistic scenarios (where class separation is independent of $m$ and $D$ is small). 
For the learning algorithm $\mathcal{A}$, consider a linear classifier with weights $\vec{w} = (\vec{w}_1, \vec{w}_2)$ and whose output is $h(\vec{x}) = \vec{w}_1 \vec{x}_1 + \vec{w}_2 \vec{x}_2$. Assume the weights are initialized to the origin. Given a dataset $S$, $\mathcal{A}$ takes a gradient step of learning rate $1$ to maximize $y \cdot h(\vec{x})$ for each $(\vec{x},y) \in S$. Hence, regardless of the batch size, the learned weights would satisfy,  $\vec{w}_1 = 2m \vec{u}$ and $\vec{w}_2 = \sum_i y^{(i)} \vec{x}_2^{(i)}$.   Note that effectively $\vec{w}_1$ is aligned correctly along the class boundary while $\vec{w}_2$ is high-dimensional Gaussian noise.  It is fairly simple to show that this algorithm achieves {\em zero training error} for most draws of the training set. At the same time, for this setup, we have the following lower bound on uniform convergence for the $\mathcal{L}^{(\gamma)}$ loss:\footnote{While it is obvious from Theorem~\ref{thm:example} that the bound is nearly vacuous for any $\gamma \in [0,1]$,  in Appendix~\ref{sec:any-gamma}, we argue that even for any $\gamma \geq 1$, the guarantee is nearly vacuous, although in a slightly different sense. 
} \\

%Note that the resulting classifier has an $\ell_2$ norm of $\Theta(\sqrt{m})$ (which largely comes from $\vec{w}_2$), and the training data are classified by a constant margin.

\begin{theorem}
\label{thm:example}
For any $\epsilon,\delta > 0, \delta \leq 1/4$, when $D = \Omega\left(\max\left( m \ln \frac{m}{\delta}, m \ln\frac{1}{\epsilon}\right) \right)$, $\gamma \in [0,1]$,   the $\mathcal{L}^{(\gamma)}$ loss satisfies $\epsilon_{\text{gen}}(m,\delta) \leq \epsilon$, while $\epsilon_{\text{unif-alg}}(m,\delta) \geq 1- \epsilon$. Furthermore,
 for all $\gamma \geq 0$, for the $\mathcal{L}^{(\gamma)}$ loss, $\epsilon_{\text{unif-alg}}(m,\delta) \geq 1- \epsilon_{\text{gen}}(m,\delta)$.
\end{theorem}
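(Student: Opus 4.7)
The plan follows the mathematical template sketched right before the theorem: first derive $h_S$ explicitly and certify its small test error, then construct an adversarial companion dataset $T(S)$ on which the very same classifier looks completely wrong, and finally invoke measure-preservation to force $T(S)$ into any candidate high-probability family $\mathcal{S}_\delta$.

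First I would compute the learned classifier. A single gradient step from the origin gives $\vec{w}_1 = 2m\vec{u}$ and $\vec{w}_2 = \sum_{i=1}^m y^{(i)} \vec{x}_2^{(i)}$, so
\[
y^{(i)} h_S(\vec{x}^{(i)}) = 4 + \|\vec{x}_2^{(i)}\|_2^2 + y^{(i)} \sum_{j\neq i} y^{(j)}\, \vec{x}_2^{(j)} \cdot \vec{x}_2^{(i)}.
\]
Standard $\chi^2$ concentration pins each $\|\vec{x}_2^{(i)}\|_2^2$ near $32$, and a conditional Gaussian bound makes the cross-term scale like $O(\sqrt{m/D})$. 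For $D = \Omega(m\ln(m/\delta))$ with a sufficient constant, a union bound places every training margin close to $36$ (hence $\geq 1$) with probability at least $1-\delta/4$; consequently $\hat{\mathcal{L}}^{(\gamma)}_S(h_S) = 0$ for every $\gamma \in [0,1]$. On an independent test point $(\vec{x},y)\sim\mathcal{D}$, $y h_S(\vec{x}) = 4 + y\vec{w}_2\cdot\vec{x}_2$, and conditional on $\vec{w}_2$ (with $\|\vec{w}_2\|_2^2 \leq 64m$ on a high-probability event), this noise is $\mathcal{N}(0, 32\|\vec{w}_2\|_2^2/D)$, so a Gaussian tail bound gives $\Pr[yh_S(\vec{x}) < 1] \leq \epsilon$ once $D = \Omega(m\ln(1/\epsilon))$. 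Together these yield $\epsilon_{\text{gen}}(m,\delta) \leq \epsilon$.

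The adversarial companion is the involution $T:(\mathcal{X}\times\{\pm 1\})^m\to(\mathcal{X}\times\{\pm 1\})^m$ defined by $T(S)_i = (\vec{x}_1^{(i)},\,-\vec{x}_2^{(i)},\,y^{(i)})$. Because $\mathcal{N}(0,32/D\cdot I)$ is symmetric about the origin, $T$ is measure preserving under $\mathcal{D}^m$, so for any $\mathcal{S}_\delta$ with $\Pr_S[S\in\mathcal{S}_\delta]\geq 1-\delta$ a union bound gives $\Pr_S[S\in\mathcal{S}_\delta \text{ and } T(S)\in\mathcal{S}_\delta]\geq 1-2\delta \geq 1/2$ thanks to $\delta\leq 1/4$. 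A calculation mirroring Step~1 yields
\[
y^{(i)} h_S(T(S)_i) = 4 - \|\vec{x}_2^{(i)}\|_2^2 - y^{(i)}\sum_{j\neq i} y^{(j)}\,\vec{x}_2^{(j)}\cdot\vec{x}_2^{(i)},
\]
which on the same concentration event is negative for every $i$; since $\mathcal{L}^{(\gamma)}(y',y)=1$ whenever $yy'\leq 0$, this forces $\hat{\mathcal{L}}^{(\gamma)}_{T(S)}(h_S) = 1$ for every $\gamma \geq 0$. Plugging the pair $(h_S, T(S))$ into Definition~\ref{def:unif-alg} and using $\mathcal{L}_\mathcal{D}(h_S)\leq\epsilon$ gives $\epsilon_{\text{unif-alg}}(m,\delta)\geq 1-\epsilon$. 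For the ``furthermore'' clause, the same $T(S)$ supplies $\hat{\mathcal{L}}^{(\gamma)}_{T(S)}(h_S) = 1$ for arbitrary $\gamma\geq 0$, while the defining inequality of $\epsilon_{\text{gen}}$ together with $\hat{\mathcal{L}}_S(h_S)=0$ (which holds on the Step~1 event whenever the $\approx 36$ training margin dominates $\gamma$) yields $\mathcal{L}_\mathcal{D}(h_S)\leq\epsilon_{\text{gen}}$, and therefore $\epsilon_{\text{unif-alg}}\geq 1-\epsilon_{\text{gen}}$.

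The hard part will be the measure-preservation step. The involution $T$ introduces a strong deterministic coupling between $S$ and $T(S)$, so a priori an adversarial choice of $\mathcal{S}_\delta$ could try to exclude $T(S)$ precisely when it includes $S$; invariance of $\mathcal{D}^m$ under $T$ is exactly what prevents this, promoting the pointwise ``bad'' event into a genuine lower bound on $\epsilon_{\text{unif-alg}}$. The Gaussian concentration arguments in the other steps are mechanical once the constants against $\epsilon$ and $\delta$ are tracked.
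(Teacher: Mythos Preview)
Your proposal is correct and follows essentially the same route as the paper. The paper likewise computes $h_S$ explicitly, uses $\chi^2$ and Gaussian tail bounds to certify the $\approx 36$ training margin and small test error, constructs the identical noise-negated companion $S'=\{((\vec{x}_1,-\vec{x}_2),y)\}$, shows $h_S$ completely misclassifies $S'$ by the sign-flipped version of the same margin computation, and then uses the very measure-preservation/union-bound argument you single out (the paper bundles all four events---$S\in\mathcal{S}_\delta$, $S'\in\mathcal{S}_\delta$, small test error, full misclassification of $S'$---into one union bound yielding probability $\geq 1-4\delta>0$, whereas you separate the concentration events, but the logic is the same).
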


In other words, even the tightest uniform convergence bound is nearly vacuous despite good generalization. In order to better appreciate the implications of this statement, it will be helpful to look at the bound a standard technique would yield here. For example, the Rademacher complexity of the class of $\ell_2$-norm bounded linear classifiers would yield a bound of the form $\mathcal{O}(\|\vec{w}\|_2/(\gamma^\star\sqrt{m}))$  where $\gamma^\star$ is the margin on the training data. 
In this setup, the weight norm grows with dataset size as $\| \vec{w}\|_2 = \Theta(\sqrt{m})$ (which follows from the fact that $\vec{w}_2$ is a Gaussian with $m/D$ variance along each of the $D$ dimensions) and $\gamma^{\star} = \Theta(1)$. Hence, the Rademacher bound here would evaluate to a constant much larger than $\epsilon$.  One might persist and think that 
%the blame here still does not lie on uniform convergence; 
perhaps, the characterization of $\vec{w}$ to be bounded in $\ell_2$ norm does not  fully capture the implicit bias of the algorithm. Are there other properties of the Gaussian $\vec{w}_2$ that one could take into account to identify an even smaller class of hypotheses for which uniform convergence may work after all? Unfortunately, our statement rules this out:  even after fixing $\vec{w}_1$ to the learned value ($2m\vec{u}$) and for any possible $1-\delta$ truncation of the Gaussian $\vec{w}_2$, the resulting  pruned class of weights -- despite all of them having a test error less than $\epsilon$ -- would give only nearly vacuous uniform convergence bounds as $\epsilon_{\text{unif-alg}}(m,\delta) \geq 1-\epsilon$.

\subparagraph{Proof outline.}
We now provide an outline of our argument for Theorem~\ref{thm:example}, deferring the proof to the appendix. First, the small generalization (and test) error arises from the fact that $\vec{w}_1$ is aligned correctly along the true boundary; at the same time, the noisy part of the classifier $\vec{w}_2$ is poorly aligned with at least $1-\epsilon$ mass of the test inputs, and hence does not dominate the output of the classifier on test data -- preserving the good fit of $\vec{w}_1$ on the test data. On the other hand, at a very high level, under the purview of uniform convergence, we can argue that the noise vector $\vec{w}_2$ is effectively stripped of its randomness. This misleads uniform convergence into believing that the $D$ noisy dimensions (where $D > m$) contribute meaningfully to
the representational complexity of the classifier, thereby giving nearly vacuous bounds. We describe this more concretely below.

As a key step in our argument, we show that w.h.p over draws of $S$, even though the learned classifier $h_S$ correctly classifies most of the randomly picked test data, it completely misclassifies a ``bad'' dataset, namely $S'= \{ ((\vec{x}_1, -\vec{x}_2),y) \; | \; (\vec{x},y) \in S \}$ which is the noise-negated version of $S$.  Now recall that to compute $\epsilon_{\text{unif-alg}}$ one has to begin by picking a sample set space $\mathcal{S}_{\delta}$ of mass $1-\delta$. We first argue that for {\em any} choice of $\mathcal{S}_{\delta}$, there must exist $S_\star$ such that all the following four events hold: (i) $S_\star \in \mathcal{S}_{\delta}$, (ii) the noise-negated $S_\star' \in \mathcal{S}_{\delta}$, (iii) $h_{S_\star}$ has test error less than $\epsilon$ and (iv) $h_{S_\star}$ completely misclassifies $S_{\star}'$. We prove the existence of such an $S_\star$ by arguing that over draws from $\mathcal{D}^m$, there is non-zero probability of picking a dataset that satisfies these four conditions. 
Note that our argument for this crucially makes use of the fact that we have designed the ``bad'' dataset in a way that it has the same distribution as the training set, namely $\mathcal{D}^m$.
 %First, over the draws of $S$, by construction of $\mathcal{S}_{\delta}$, (i) alone fails with probability at most $\delta$. We have established that (iii) and (iv) too fail with probability $\mathcal{O}(\delta)$.  As for (ii), note that under the draws of $S$, the noise-negated dataset $S'$ has the same distribution as that of $\mathcal{D}^m$ (since negating the Gaussian vector does not affect its distribution); hence, by construction of $\mathcal{S}_{\delta}$, even (ii) fails with probability $\delta$. 
Finally, for a given $\mathcal{S}_{\delta}$, if we have an $S_\star$ satisfying (i) to (iv), we can prove our claim as $\epsilon_{\text{unif-alg}}(m,\delta)  =
 \sup_{S \in \mathcal{S}_{\delta}} \sup_{h \in \mathcal{H}_{\delta}} |{\mathcal{L}}_{\mathcal{D}}(h)- \hat{\mathcal{L}}_{S}(h)| 
 \geq  |{\mathcal{L}}_{\mathcal{D}}(h_{S_\star})-\hat{\mathcal{L}}_{S_\star'}(h_{S_\star})|= |\epsilon-1| = 1-\epsilon$. \\

% Specifically, we have by the union bound that:
%\begin{align*}
%& Pr_{S \sim \mathcal{D}^m} \left[ S \in \mathcal{S}_{\delta}, S' \in \mathcal{S}_{\delta}, \mathcal{L}_{\mathcal{D}}(h_S) \leq \epsilon, \hat{\mathcal{L}}_{S'}(h_S) =1\right] \geq  \\
%& 1- Pr_{S \sim \mathcal{D}^m}\left[  \mathcal{L}_{\mathcal{D}}(h_S) > \epsilon\right] 
%- Pr_{S \sim \mathcal{D}^m}\left[   S \notin \mathcal{S}_{\delta}\right]  \\
%&
%- Pr_{S \sim \mathcal{D}^m}\left[   S' \notin \mathcal{S}_{\delta}\right] - Pr_{S \sim \mathcal{D}^m}\left[   \hat{\mathcal{L}}_{S'}(h_S) \neq 1\right] 
%\end{align*}

\begin{remark} Our analysis depends on the fact that $\epsilon_{\text{unif-alg}}$ is a two-sided convergence bound -- which is what existing techniques bound -- and our result would not apply for hypothetical one-sided uniform convergence bounds. 
%Existing uniform convergence based tools are only two-sided as it is more natural to bound the absolute value of the difference in the test/train errors. 
While PAC-Bayes based bounds are typically presented as one-sided bounds, we show in Appendix~\ref{sec:pac-bayes} that even these are lower-bounded by the two-sided  $\epsilon_{\text{unif-alg}}$. To the best of our knowledge, it is non-trivial to make any of these tools purely one-sided. \\
\end{remark} 

\begin{remark} The classifier modified by  setting $\vec{w}_2 \gets 0$, has small test error
and also enjoys non-vacuous bounds as it has very few parameters.  However, such a bound would not fully explain why the original classifier generalizes well. One might then wonder if such a bound could be extended to the original classifier, like it was explored in \citet{nagarajan2018deterministic} for deep networks. Our result implies that no such extension is possible in this particular example. \\
\end{remark}

 \subsection{ReLU neural network}
  \label{sec:hypersphere} 
We now design a non-linearly separable task (with no ``noisy'' dimensions) where a sufficiently wide ReLU network trained in the standard manner, like in the experiments of Section~\ref{sec:experiments} leads to failure of uniform convergence.  For our argument, we will rely on a classifier trained {\em empirically}, in contrast to our linear examples where we rely on an analytically derived expression for the learned classifier. %That is, we theoretically argue how the learned decision boundary hurts uniform convergence, although we do not theoretically analyze how such a decision boundary is learned.
 Thus, this section illustrates that the effects we modeled theoretically in the linear classifier \emph{are} indeed reflected in typical training settings, even though here it is difficult to precisely analyze the learning process. We also refer the reader to Appendix~\ref{sec:exp}, where we present an example of a neural network with exponential activation functions for which we do derive a closed form expression.

\textbf{Setup.} We consider a distribution that was originally proposed in \cite{gilmer18adversarial} as the ``adversarial spheres'' dataset (although with slightly different hyperparameters) and was used to study the independent phenomenon of adversarial examples. Specifically, we consider 1000-dimensional data, where two classes are distributed uniformly over two origin-centered hyperspheres with radius $1$ and $1.1$ respectively.  We vary the number of training examples from $4k$ to $65k$ (thus ranging through typical dataset sizes like that of MNIST). Observe that compared to the linear example, this data distribution is more realistic in two ways. First, we do not have specific dimensions in the data that are noisy and second, the data dimensionality here as such is a constant less than $m$. Given samples from this distribution, we  train a two-layer ReLU network with $h=100k$ to minimize cross entropy loss using SGD with learning rate $0.1$ and batch size $64$. We train the network until $99\%$ of the data is classified by a margin of $10$.

As shown in Figure~\ref{fig:hypersphere} (blue line), in this setup, the 0-1 error  (i.e., $\mathcal{L}^{(0)}$) as approximated by the test set, decreases with $m \in [2^{12}, 2^{16}]$ at the rate of  $O(m^{-0.5})$. 
Now, to prove failure of uniform convergence, we empirically show that a completely misclassified ``bad'' dataset $S'$ can be constructed in a manner similar to that of the previous example. In this setting, we pick $S'$ by simply projecting every training datapoint on the inner hypersphere onto the outer and vice versa, and then flipping the labels. Then, as shown in Figure~\ref{fig:hypersphere} (orange line), $S'$ is completely misclassified by the learned network. Furthermore, like in the previous example, we have  $S' \sim \mathcal{D}^m$ because the distributions are uniform over the hyperspheres. Having established these facts, the rest of the argument follows like in the previous setting, implying failure of uniform convergence as in Theorem~\ref{thm:example} here too.% in this setting too.  

\begin{figure}[t]
    \centering
    \begin{minipage}[t]{0.3\textwidth}
        \adjincludegraphics[width=1\textwidth,trim={0 {0} 0 0},clip,,valign=t]{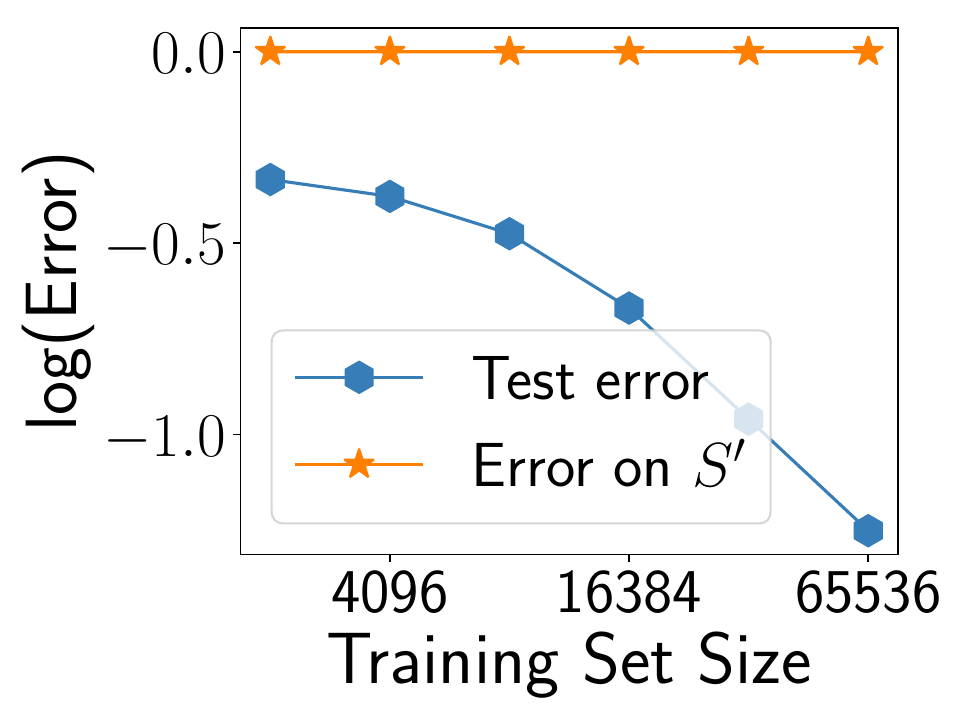} %
    \end{minipage}
    \begin{minipage}[t]{0.6\textwidth}
        \centering
        \adjincludegraphics[width=\textwidth,trim={0 {0} 0 0},clip,,valign=t]{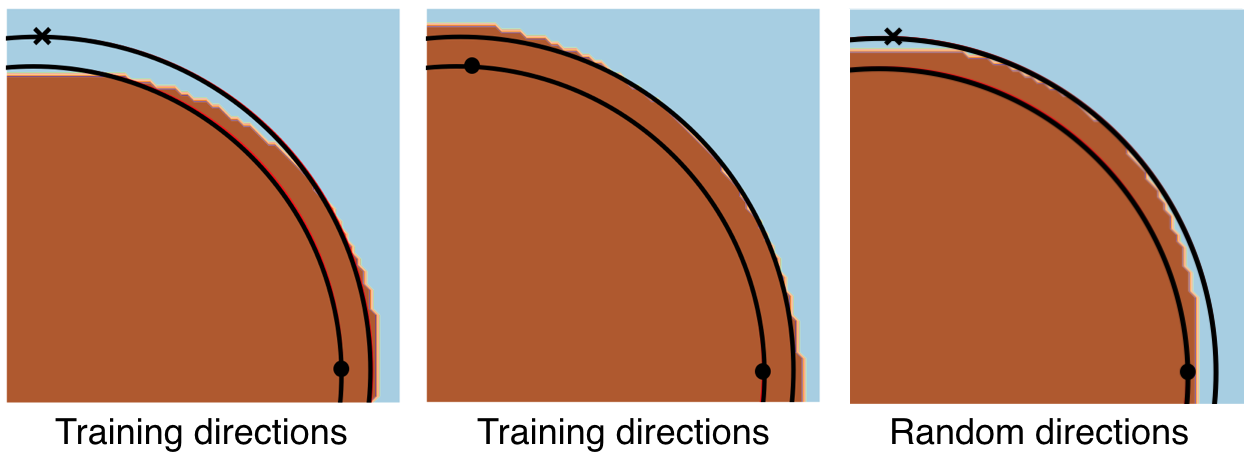} %
    \end{minipage}
       \caption{In the \textbf{first} figure, we plot the error of the ReLU network on test data and on the bad dataset $S'$, in the task described in Section~\ref{sec:hypersphere}. 
       %We observe that while the test 0-1 error decreases with training set size $m$ as $\mathcal{O}(1/m^{0.5})$ (with an error as small as $0.056$ for $m=2^{16}$), the error on the set $S'$ is $1$ (even when $m$ is as large as $65536$), implying nearly vacuous uniform convergence.  
       The \textbf{second and third} images correspond to the decision boundary learned in this task, in the 2D quadrant containing two training datapoints (depicted as $\boldsymbol{\small \times}$ and $\bullet$). The black lines correspond to the two hyperspheres, while the brown and blue regions correspond to the class output by the classifier.  Here, we observe that  the boundaries are skewed around the training data in a way that it misclassifies the nearest point from the opposite class (corresponding to $S'$, that is not explicitly marked). The \textbf{fourth} image corresponds to two random (test) datapoints, where the boundaries are fairly random, and very likely to be located in between the hyperspheres (better confirmed by the low test error).} 
       \label{fig:hypersphere}
\end{figure}

In Figure~\ref{fig:hypersphere}  (right), we visualize how the learned boundaries are skewed around the training data in a way that $S'$ is misclassified. Note that $S'$ is misclassified even when it has as many as $60k$ points, and even though the network was not explicitly trained to misclassify those points. Intuitively, this demonstrates that the boundary learned by the ReLU network has sufficient complexity that hurts uniform convergence while not affecting the generalization error, at least in this setting. We discuss the applicability of this observation to other hyperparameter settings in Appendix~\ref{sec:applicability}. %In Appendix~\ref{sec:exp}, we present another example of a neural network with exponential activations where uniform convergence fails; here we theoretically derive a closed form expression for the weights learned by SGD.

\subparagraph{Relationship to adversarial spheres \cite{gilmer18adversarial}.} While we use the same adversarial spheres distribution as \cite{gilmer18adversarial} and similarly show the existence a certain kind of an adversarial dataset, it is important to note that neither of our observations implies the other. Indeed, the observations in \cite{gilmer18adversarial} are insufficient to prove failure of uniform convergence. Specifically, \cite{gilmer18adversarial} show that in the adversarial spheres setting, it is possible to slightly perturb random {\em test} examples in some {\em arbitrary} direction to discover a misclassified example. However, to show failure of uniform convergence, we need to find a set of misclassified examples $S'$ corresponding to the {\em training} examples $S$, and furthermore, we do {\em not} want $S'$ to be arbitrary. We want $S'$ to have the same underlying distribution, $\mathcal{D}^m$.

\subparagraph{Deep learning conjecture.} 
Extending the above insights more generally, we conjecture that in overparameterized deep networks, SGD finds a fit that is simple at a macroscopic level (leading to good generalization) but also has many microscopic fluctuations (hurting uniform convergence). 
To make this more concrete, for illustration, consider the high-dimensional linear model that sufficiently wide networks have been shown to converge to \cite{jacot18ntk}. That is, roughly, these networks can be written as $h(\vec{x}) = \vec{w}^T \vec{\phi}(\vec{x})$ where $\vec{\phi}(\vec{x})$
is a rich high-dimensional representation of $\vec{x}$ computed from many random features (chosen independent of training data).
Inspired by our linear model  in Section~\ref{sec:linear}, we conjecture that the weights $\vec{w}$ learned on a dataset $S$ can be expressed as $\vec{w}_1 + \vec{w}_2$, where $\vec{w}_1^T\vec{\phi}(\vec{x})$ dominates the output on most test inputs and induces a simple decision boundary. That is, it may be possible to apply uniform convergence on the function $\vec{w}_1^T \vec{\phi}(\vec{x})$ to obtain a small generalization bound.  On the other hand, $\vec{w}_2$ corresponds to meaningless signals that gradient descent gathered from the high-dimensional representation of the training set $S$. Crucially, these signals would be specific to $S$, and hence not likely to correlate with most of the test data i.e., 
$\vec{w}_2\phi(\vec{x})$  would be negligible on most test data, thereby not affecting the generalization error significantly. However, $\vec{w}_2\phi(\vec{x})$ can still create complex fluctuations on the boundary, in low-probability regions of the input space (whose locations would depend on $S$, like in our examples). As we argued, this can lead to failure of uniform convergence. Perhaps, existing works that have achieved strong uniform convergence bounds on modified networks, may have done so by implicitly suppressing  $\vec{w}_2$, either by compression, optimization or stochasticization. Revisiting these works may help verify our conjecture.

\section{Conclusion and Future Work}

A growing variety of uniform convergence based bounds
 \cite{neyshabur15norm,bartlett17spectral,golowich17size,arora18compression,
neyshabur18pacbayes,dziugaite17nonvacuous,zhou2018nonvacuous,
li18learning,zhu18beyond,nagarajan2018deterministic,neyshabur18unitwise}
have sought to explain generalization in deep learning. While these may provide partial intuition about the puzzle, we ask a critical, high level question: by pursuing this broad direction, is it possible to achieve the grand goal of a small generalization bound that shows appropriate dependence on the sample size, width, depth, label noise, and batch size? We cast doubt on this by first, empirically showing that existing bounds can surprisingly increase with training set size for small batch sizes. We then presented example setups, including that of a ReLU neural network, for
 which uniform convergence provably fails to explain generalization, even after taking implicit bias into account.

% has tried to explain why deep networks generalize While uniform convergence bounds may provide partial intuition for why deep networks generalize well,  through empirical and theoretical evidences, we cast doubt on their potential to achieve the grand goal: a small bound that shows appropriate dependence on the sample size, width, depth, label noise, and batch size. First, we empirically showed that existing uniform convergence bounds can surprisingly increase with training set size for small batch sizes. We then presented example setups, including that of a ReLU neural network, for which uniform convergence provably fails to explain generalization even after taking implicit bias into account.  % We conjecture that deep network parameters consist of a low-complexity component and noise, where the latter hurts uniform convergence. Proving this conjecture would be an interesting direction for future work.
%We conjecture that strong bounds have been achieved on modified networks because these effectively strip away noise in the parameters that uniform convergence fails to explain away -- 

Future work in understanding implicit regularization in deep learning may be better guided with our knowledge of the sample-size-dependence in the weight norms. To understand generalization, it may also be promising to explore other learning-theoretic techniques like, say, algorithmic stability \cite{feldman18uniformly,hardt16stability,bousquet02stability,shwartz10learnability}
%which have not been as extensively studied for deep networks
;  our linear setup might also inspire new tools. Overall,  through our work, we call for going beyond uniform convergence to fully explain generalization in deep learning.

% Overall, while the recent, growing line of works in 

 %through our work, we call for going beyond uniform convergence to fully explain generalization in deep learning.

%We envision the following: assume we manage to explicitly characterize the distribution of noise in deep networks; then, by applying uniform convergence on the low-complexity component (e.g., a compressed network), followed by standard tail bounds to argue that the noisy component does not affect the output, one would have a complete story for why deep networks generalize.

\subparagraph{Acknowledgements.} Vaishnavh Nagarajan is supported by a grant from the Bosch Center for AI.

% TODO Connect to compression
\bibliographystyle{plainnat}
\bibliography{references.bib}

\begin{thebibliography}{40}
\providecommand{\natexlab}[1]{#1}
\providecommand{\url}[1]{\texttt{#1}}
\expandafter\ifx\csname urlstyle\endcsname\relax
  \providecommand{\doi}[1]{doi: #1}\else
  \providecommand{\doi}{doi: \begingroup \urlstyle{rm}\Url}\fi

\bibitem[Allen{-}Zhu et~al.(2018)Allen{-}Zhu, Li, and Liang]{zhu18beyond}
Zeyuan Allen{-}Zhu, Yuanzhi Li, and Yingyu Liang.
\newblock Learning and generalization in overparameterized neural networks,
  going beyond two layers.
\newblock abs/1811.04918, 2018.
\newblock URL \url{http://arxiv.org/abs/1811.04918}.

\bibitem[Arora et~al.(2018)Arora, Ge, Neyshabur, and Zhang]{arora18compression}
Sanjeev Arora, Rong Ge, Behnam Neyshabur, and Yi~Zhang.
\newblock Stronger generalization bounds for deep nets via a compression
  approach.
\newblock In \emph{The 35th International Conference on Machine Learning,
  {ICML}}, 2018.

\bibitem[Bartlett et~al.(2017)Bartlett, Foster, and
  Telgarsky]{bartlett17spectral}
Peter~L. Bartlett, Dylan~J. Foster, and Matus~J. Telgarsky.
\newblock Spectrally-normalized margin bounds for neural networks.
\newblock In \emph{Advances in Neural Information Processing Systems 30: Annual
  Conference on Neural Information Processing Systems 2017}, 2017.

\bibitem[Belkin et~al.(2018)Belkin, Ma, and Mandal]{belkin18kernel}
Mikhail Belkin, Siyuan Ma, and Soumik Mandal.
\newblock To understand deep learning we need to understand kernel learning.
\newblock In \emph{Proceedings of the 35th International Conference on Machine
  Learning, {ICML} 2018}, 2018.

\bibitem[Bousquet and Elisseeff(2002)]{bousquet02stability}
Olivier Bousquet and Andr{\'{e}} Elisseeff.
\newblock Stability and generalization.
\newblock \emph{Journal of Machine Learning Research}, 2, 2002.

\bibitem[Brutzkus et~al.(2018)Brutzkus, Globerson, Malach, and
  Shalev{-}Shwartz]{brutzkus18sgd}
Alon Brutzkus, Amir Globerson, Eran Malach, and Shai Shalev{-}Shwartz.
\newblock {SGD} learns over-parameterized networks that provably generalize on
  linearly separable data.
\newblock \emph{International Conference on Learning Representations (ICLR)},
  2018.

\bibitem[Dr{\"{a}}xler et~al.(2018)Dr{\"{a}}xler, Veschgini, Salmhofer, and
  Hamprecht]{felix18barriers}
Felix Dr{\"{a}}xler, Kambis Veschgini, Manfred Salmhofer, and Fred~A.
  Hamprecht.
\newblock Essentially no barriers in neural network energy landscape.
\newblock In \emph{Proceedings of the 35th International Conference on Machine
  Learning, {ICML} 2018}, 2018.

\bibitem[Dziugaite and Roy(2017)]{dziugaite17nonvacuous}
Gintare~Karolina Dziugaite and Daniel~M. Roy.
\newblock Computing nonvacuous generalization bounds for deep (stochastic)
  neural networks with many more parameters than training data.
\newblock In \emph{Proceedings of the Thirty-Third Conference on Uncertainty in
  Artificial Intelligence, {UAI} 2017}, 2017.

\bibitem[Feldman and Vondr{\'{a}}k(2018)]{feldman18uniformly}
Vitaly Feldman and Jan Vondr{\'{a}}k.
\newblock Generalization bounds for uniformly stable algorithms.
\newblock In \emph{Advances in Neural Information Processing Systems 31: Annual
  Conference on Neural Information Processing Systems 2018, NeurIPS 2018},
  2018.

\bibitem[Garipov et~al.(2018)Garipov, Izmailov, Podoprikhin, Vetrov, and
  Wilson]{garipov18loss}
Timur Garipov, Pavel Izmailov, Dmitrii Podoprikhin, Dmitry~P. Vetrov, and
  Andrew~G. Wilson.
\newblock Loss surfaces, mode connectivity, and fast ensembling of dnns.
\newblock In \emph{Advances in Neural Information Processing Systems 31: Annual
  Conference on Neural Information Processing Systems 2018, NeurIPS 2018},
  2018.

\bibitem[Gilmer et~al.(2018)Gilmer, Metz, Faghri, Schoenholz, Raghu,
  Wattenberg, and Goodfellow]{gilmer18adversarial}
Justin Gilmer, Luke Metz, Fartash Faghri, Samuel~S. Schoenholz, Maithra Raghu,
  Martin Wattenberg, and Ian~J. Goodfellow.
\newblock Adversarial spheres.
\newblock In \emph{6th International Conference on Learning Representations,
  {ICLR} 2018}, 2018.

\bibitem[Golowich et~al.(2018)Golowich, Rakhlin, and Shamir]{golowich17size}
Noah Golowich, Alexander Rakhlin, and Ohad Shamir.
\newblock Size-independent sample complexity of neural networks.
\newblock \emph{Computational Learning Theory, {COLT} 2018}, 2018.

\bibitem[Hardt et~al.(2016)Hardt, Recht, and Singer]{hardt16stability}
Moritz Hardt, Ben Recht, and Yoram Singer.
\newblock Train faster, generalize better: Stability of stochastic gradient
  descent.
\newblock In \emph{Proceedings of the 33nd International Conference on Machine
  Learning, {ICML}}, 2016.

\bibitem[Harvey et~al.(2017)Harvey, Liaw, and Mehrabian]{harvey17vc}
Nick Harvey, Christopher Liaw, and Abbas Mehrabian.
\newblock Nearly-tight vc-dimension bounds for piecewise linear neural
  networks.
\newblock In \emph{Proceedings of the 30th Conference on Learning Theory,
  {COLT} 2017}, 2017.

\bibitem[Hinton and van Camp(1993)]{hinton93mdl}
Geoffrey~E. Hinton and Drew van Camp.
\newblock Keeping the neural networks simple by minimizing the description
  length of the weights.
\newblock In \emph{Proceedings of the Sixth Annual {ACM} Conference on
  Computational Learning Theory, {COLT}}, 1993.

\bibitem[Hochreiter and Schmidhuber(1997)]{hochreiter97flat}
Sepp Hochreiter and J{\"{u}}rgen Schmidhuber.
\newblock Flat minima.
\newblock \emph{Neural Computation}, 9\penalty0 (1), 1997.

\bibitem[Hoffer et~al.(2017)Hoffer, Hubara, and Soudry]{hoffer17longer}
Elad Hoffer, Itay Hubara, and Daniel Soudry.
\newblock Train longer, generalize better: closing the generalization gap in
  large batch training of neural networks.
\newblock \emph{Advances in Neural Information Processing Systems ({\em to
  appear})}, 2017.

\bibitem[Jacot et~al.(2018)Jacot, Hongler, and Gabriel]{jacot18ntk}
Arthur Jacot, Cl{\'{e}}ment Hongler, and Franck Gabriel.
\newblock Neural tangent kernel: Convergence and generalization in neural
  networks.
\newblock In \emph{Advances in Neural Information Processing Systems 31: Annual
  Conference on Neural Information Processing Systems 2018, NeurIPS 2018},
  2018.

\bibitem[Jastrzebski et~al.(2018)Jastrzebski, Kenton, Arpit, Ballas, Fischer,
  Bengio, and Storkey]{jastrzebski18width}
Stanislaw Jastrzebski, Zachary Kenton, Devansh Arpit, Nicolas Ballas, Asja
  Fischer, Yoshua Bengio, and Amos~J. Storkey.
\newblock Width of minima reached by stochastic gradient descent is influenced
  by learning rate to batch size ratio.
\newblock In \emph{Artificial Neural Networks and Machine Learning - {ICANN}
  2018 - 27th International Conference on Artificial Neural Networks}, 2018.

\bibitem[Kawaguchi et~al.(2017)Kawaguchi, Kaelbling, and
  Bengio]{kawaguchi17generalization}
Kenji Kawaguchi, Leslie~Pack Kaelbling, and Yoshua Bengio.
\newblock Generalization in deep learning.
\newblock 2017.
\newblock URL \url{http://arxiv.org/abs/1710.05468}.

\bibitem[Keskar et~al.(2017)Keskar, Mudigere, Nocedal, Smelyanskiy, and
  Tang]{keskar17largebatch}
Nitish~Shirish Keskar, Dheevatsa Mudigere, Jorge Nocedal, Mikhail Smelyanskiy,
  and Ping Tak~Peter Tang.
\newblock On large-batch training for deep learning: Generalization gap and
  sharp minima.
\newblock \emph{International Conference on Learning Representations (ICLR)},
  2017.

\bibitem[Langford and Caruana(2001)]{langford01not}
John Langford and Rich Caruana.
\newblock (not) bounding the true error.
\newblock In \emph{Advances in Neural Information Processing Systems 14 [Neural
  Information Processing Systems: Natural and Synthetic, {NIPS} 2001]}, 2001.

\bibitem[Langford and Shawe{-}Taylor(2002)]{langford02pacbayes}
John Langford and John Shawe{-}Taylor.
\newblock Pac-bayes {\&} margins.
\newblock In \emph{Advances in Neural Information Processing Systems 15 [Neural
  Information Processing Systems, {NIPS} 2002}, 2002.

\bibitem[Li and Liang(2018)]{li18learning}
Yuanzhi Li and Yingyu Liang.
\newblock Learning overparameterized neural networks via stochastic gradient
  descent on structured data.
\newblock In \emph{Advances in Neural Information Processing Systems 31: Annual
  Conference on Neural Information Processing Systems 2018, NeurIPS 2018},
  2018.

\bibitem[McAllester(2003)]{mcallester03simplified}
David McAllester.
\newblock Simplified pac-bayesian margin bounds.
\newblock In \emph{Learning Theory and Kernel Machines}. Springer Berlin
  Heidelberg, 2003.

\bibitem[Mohri et~al.(2012)Mohri, Rostamizadeh, and
  Talwalkar]{mohri12foundations}
Mehryar Mohri, Afshin Rostamizadeh, and Ameet Talwalkar.
\newblock \emph{Foundations of Machine Learning}.
\newblock Adaptive computation and machine learning. {MIT} Press, 2012.

\bibitem[Nagarajan and Kolter(2017)]{nagarajan17role}
Vaishnavh Nagarajan and J.~Zico Kolter.
\newblock Generalization in deep networks: The role of distance from
  initialization.
\newblock \emph{Deep Learning: Bridging Theory and Practice Workshop in
  Advances in Neural Information Processing Systems 30: Annual Conference on
  Neural Information Processing Systems 2017}, 2017.

\bibitem[Nagarajan and Kolter(2019)]{nagarajan2018deterministic}
Vaishnavh Nagarajan and Zico Kolter.
\newblock Deterministic {PAC}-bayesian generalization bounds for deep networks
  via generalizing noise-resilience.
\newblock In \emph{International Conference on Learning Representations
  (ICLR)}, 2019.

\bibitem[Neyshabur et~al.(2015{\natexlab{a}})Neyshabur, Tomioka, and
  Srebro]{neyshabur15inductive}
Behnam Neyshabur, Ryota Tomioka, and Nathan Srebro.
\newblock In search of the real inductive bias: On the role of implicit
  regularization in deep learning.
\newblock \emph{International Conference on Learning Representations Workshop
  Track}, 2015{\natexlab{a}}.

\bibitem[Neyshabur et~al.(2015{\natexlab{b}})Neyshabur, Tomioka, and
  Srebro]{neyshabur15norm}
Behnam Neyshabur, Ryota Tomioka, and Nathan Srebro.
\newblock Norm-based capacity control in neural networks.
\newblock In \emph{Proceedings of The 28th Conference on Learning Theory,
  {COLT}}, 2015{\natexlab{b}}.

\bibitem[Neyshabur et~al.(2017)Neyshabur, Bhojanapalli, McAllester, and
  Srebro]{neyshabur17exploring}
Behnam Neyshabur, Srinadh Bhojanapalli, David McAllester, and Nathan Srebro.
\newblock Exploring generalization in deep learning.
\newblock \emph{Advances in Neural Information Processing Systems {\em to
  appear}}, 2017.

\bibitem[Neyshabur et~al.(2018)Neyshabur, Bhojanapalli, McAllester, and
  Srebro]{neyshabur18pacbayes}
Behnam Neyshabur, Srinadh Bhojanapalli, David McAllester, and Nathan Srebro.
\newblock A pac-bayesian approach to spectrally-normalized margin bounds for
  neural networks.
\newblock \emph{International Conference on Learning Representations (ICLR)},
  2018.

\bibitem[Neyshabur et~al.(2019)Neyshabur, Li, Bhojanapalli, LeCun, and
  Srebro]{neyshabur18unitwise}
Behnam Neyshabur, Zhiyuan Li, Srinadh Bhojanapalli, Yann LeCun, and Nathan
  Srebro.
\newblock The role of over-parametrization in generalization of neural
  networks.
\newblock In \emph{International Conference on Learning Representations
  (ICLR)}, 2019.

\bibitem[Rogers and Wagner(1978)]{rogerss78finite}
W.~H. Rogers and T.~J. Wagner.
\newblock A finite sample distribution-free performance bound for local
  discrimination rules.
\newblock \emph{The Annals of Statistics}, 6\penalty0 (3), 1978.

\bibitem[Shalev{-}Shwartz et~al.(2010)Shalev{-}Shwartz, Shamir, Srebro, and
  Sridharan]{shwartz10learnability}
Shai Shalev{-}Shwartz, Ohad Shamir, Nathan Srebro, and Karthik Sridharan.
\newblock Learnability, stability and uniform convergence.
\newblock \emph{Journal of Machine Learning Research}, 11, 2010.

\bibitem[Soudry et~al.(2018)Soudry, Hoffer, and Srebro]{soudry18sgd}
Daniel Soudry, Elad Hoffer, and Nathan Srebro.
\newblock The implicit bias of gradient descent on separable data.
\newblock \emph{International Conference on Learning Representations (ICLR)},
  2018.

\bibitem[Vapnik and Chervonenkis(1971)]{vapnik71uniform}
V.~N. Vapnik and A.~Ya. Chervonenkis.
\newblock \emph{On the Uniform Convergence of Relative Frequencies of Events to
  Their Probabilities}.
\newblock 1971.

\bibitem[Wainwright(2019)]{wainwright19high}
Martin~J. Wainwright.
\newblock \emph{High-Dimensional Statistics: A Non-Asymptotic Viewpoint}.
\newblock Cambridge Series in Statistical and Probabilistic Mathematics.
  Cambridge University Press, 2019.

\bibitem[Zhang et~al.(2017)Zhang, Bengio, Hardt, Recht, and
  Vinyals]{zhang17generalization}
Chiyuan Zhang, Samy Bengio, Moritz Hardt, Benjamin Recht, and Oriol Vinyals.
\newblock Understanding deep learning requires rethinking generalization.
\newblock \emph{International Conference on Learning Representations (ICLR)},
  2017.

\bibitem[Zhou et~al.(2019)Zhou, Veitch, Austern, Adams, and
  Orbanz]{zhou2018nonvacuous}
Wenda Zhou, Victor Veitch, Morgane Austern, Ryan~P. Adams, and Peter Orbanz.
\newblock Non-vacuous generalization bounds at the imagenet scale: a
  {PAC}-bayesian compression approach.
\newblock In \emph{International Conference on Learning Representations
  (ICLR)}, 2019.

\end{thebibliography}
\clearpage

\appendix

\section{Summary of existing generalization bounds.}
\label{app:table}

In this section, we provide an informal summary of the properties of (some of the) existing generalization bounds for ReLU networks  in Table~\ref{tab:summary}.

\begin{table}[H]
\begingroup
\fontsize{7.5pt}{12pt}\selectfont
\begin{center}
\begin{tabular}{|c|c|c|c|c|}
\hline
\textbf{Bound} & \shortstack{Norm \\dependencies} & \shortstack{Parameter-count\\ dependencies} & \shortstack{Numerical \\ value} & \shortstack{Holds on \\ original \\ network?} \\ \hline
\citet{harvey17vc} & - & depth $\times$ width & Large & Yes \\ \hline
\shortstack{\citet{bartlett17spectral} \\ \citet{neyshabur18pacbayes}}& \shortstack{Product of spectral norms \\ dist. from init. \\ (not necessarily $\ell_2$)} & \shortstack{poly(width) \\ exp(depth)} & Large & Yes \\ \hline
\shortstack{\citet{neyshabur15norm} \\ \citet{golowich17size}}& \shortstack{Product of Frobenius norms \\ $\ell_2$ dist. from init.} & $\sqrt{\text{width}}^\text{depth}$ & Very large & Yes \\ \hline
\shortstack{\citet{nagarajan2018deterministic}}& \shortstack{Jacobian norms \\ $\ell_2$ dist. from init. \\ \text{Inverse pre-activations}} & \shortstack{poly(width) \\ poly(depth)} & \shortstack{Inverse \\ pre-activations \\ can be very large}& Yes \\ \hline
\shortstack{\citet{neyshabur18unitwise}\\ for two-layer networks } & \shortstack{Spectral norm ($1$st layer) \\ $\ell_2$ Dist. from init ($1$st layer) \\ Frobenius norm ($2$nd layer)} & $\sqrt{\text{width}}$ & {Small} & Yes \\ \hline
\citet{arora18compression} & \shortstack{Jacobian norms \\ dist. from init.} & \shortstack{poly(width) \\ poly(depth)}& Small & \shortstack{No. Holds on \\ compressed network} \\ \hline
\citet{dziugaite17nonvacuous} & \shortstack{dist. from init. \\ Noise-resilience of network} &  - & \shortstack{Non-vacuous\\ on MNIST} & \shortstack{No. Holds on an \\optimized, stochastic \\ network}\\ \hline
\citet{zhou2018nonvacuous} & \shortstack{Heuristic compressibility \& \\ noise-resilience of network} &  - &  \shortstack{Non-vacuous\\on ImageNet} & \shortstack{No. Holds on an \\ optimized, stochastic, \\ heuristically compressed, \\ network}\\ \hline
\citet{zhu18beyond} & \shortstack{$L_{2,4}$ norm ($1$st layer) \\ Frobenius norm ($2$nd layer)} & - & \shortstack{Small for carefully\\  scaled init.\\  and learning rate} & Yes\\ \hline
\citet{li18learning} & - & - & \shortstack{Small for carefully\\  scaled batch size \\ and learning rate} & Yes \\ \hline 
\end{tabular}
\end{center}
\vspace{5pt}
\caption{Summary of generalization bounds for ReLU networks. We note that the analysis in \citet{li18learning} relies on a sufficiently small learning rate ($\approx \mathcal{O}(1/m^{1.2})$) and large batch size ($\approx \Omega(\sqrt{m})$). Hence, the resulting bound cannot describe how generalization varies with any other hyperparameter, like training set size or width, with everything else fixed. A similar analysis in \citet{zhu18beyond} requires fixing the learning rate to be inversely proportional to width. Their bound decreases only as ${\Omega}(1/m^{0.16})$, although, the actual generalization error is typically as small as $\mathcal{O}(1/m^{0.43})$. } \label{tab:summary}
\endgroup
\end{table}

\section{More Experiments}
\label{app:experiments}
In this section, we present more experiments along the lines of what we presented in Section~\ref{sec:experiments}.  

\subparagraph{Layerwise dependence on $m$.} Recall that in the main paper, we show how the distance from initialization and the product of spectral norms vary with $m$ for network with six layers. In Figure~\ref{fig:depth-wise}, we show how the terms grow with sample size $m$ for each layer individually. Our main observation is that the first layer suffers from the largest dependence on $m$.

\begin{figure}[h]
    \centering
            \begin{minipage}{.35\textwidth}
        \adjincludegraphics[width=1\textwidth,trim={0 {0} 0 0},clip,,valign=t]{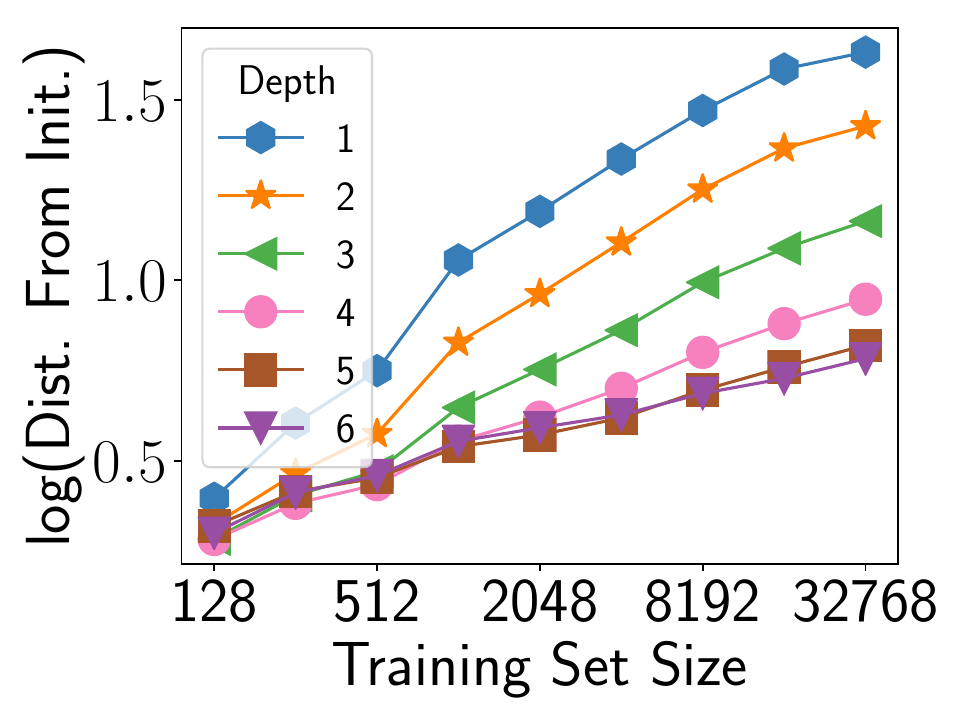} %
    \end{minipage}%
                \begin{minipage}{.35\textwidth}
        \adjincludegraphics[width=1\textwidth,trim={0 {0} 0 0},clip,valign=t]{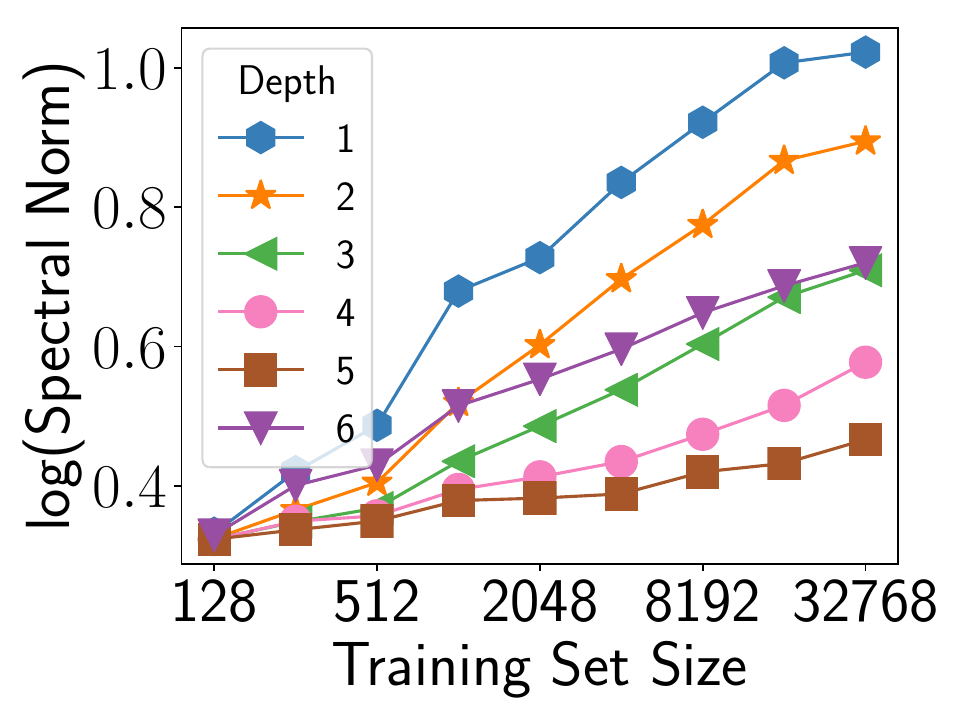} %
    \end{minipage}% 
       \caption{ We plot the distance from initialization and the spectral norm of each individual layer, and observe that the lowermost layer shows the greatest dependence on $m$.  
       }      \label{fig:depth-wise}
\end{figure}   

\subparagraph{Distance between trajectories of shuffled datasets grows with $m$.}
In the main paper, we saw that the distance between the solutions learned on different draws of the dataset grow substantially with $m$. In Figure~\ref{fig:more-evidence-dist} (left), we show that even the distance between the solutions learned on the same draw, but a different shuffling of the dataset grows substantially with $m$.

\subparagraph{Flat minima}

We also relate our observations regarding distance between two independently learned weights to the popular idea of ``flat minima''. Interestingly, Figure~\ref{fig:flat-minima} demonstrates that walking linearly from the weights learned on one dataset draw to that on another draw (from the same initialization) preserves the test error. Note that although a similar observation was made in \citet{felix18barriers,garipov18loss},  they show the existence of \textit{non-linear} paths of good solutions between parameters learned from \textit{different} initializations. Our observation on the other hand implies that for a fixed initialization, SGD explores the \textit{same} basin in the test loss minimum across different training sets. As discussed in the main paper, this explored basin/space has larger $\ell_2$-width for larger $m$ giving rise to a ``paradox'': on one hand, wider minima are believed to result in, or at least correlate with better generalization \citep{hochreiter97flat,hinton93mdl,keskar17largebatch}, but on the other, a larger $\ell_2$-width of the explored space results in larger uniform convergence bounds, making it harder to explain generalization. 
 % making it harder to explain generalization. On the other hand, wider minima are believed to result in (or at least, correlate with) better generalization \citep{hochreiter97flat,hinton93mdl,keskar17largebatch}, while they seem to hurt uniform convergence bounds (as we similarly noted about the paradoxical role of noise in Section~\ref{sec:related}).  

We note a similar kind of paradox concerning noise in training. Specifically,
it is intriguing that on one hand, generalization is aided by larger learning rates and smaller batch sizes \cite{jastrzebski18width,hoffer17longer,keskar17largebatch} due to increased noise in SGD. On the other, theoretical analyses benefit from the opposite;  \citet{zhu18beyond} even explicitly regularize SGD for their three-layer-network result  to help ``forget false information'' gathered by SGD. In other words, it seems that {noise aids generalization, yet hinders attempts at explaining generalization}. The intuition from our examples (such as the linear example) is that such ``false information'' could provably impair uniform convergence without affecting generalization.

  \begin{figure}
        \centering
        \adjincludegraphics[width=0.35\textwidth,trim={0 {0} 0 0},clip,,valign=t]{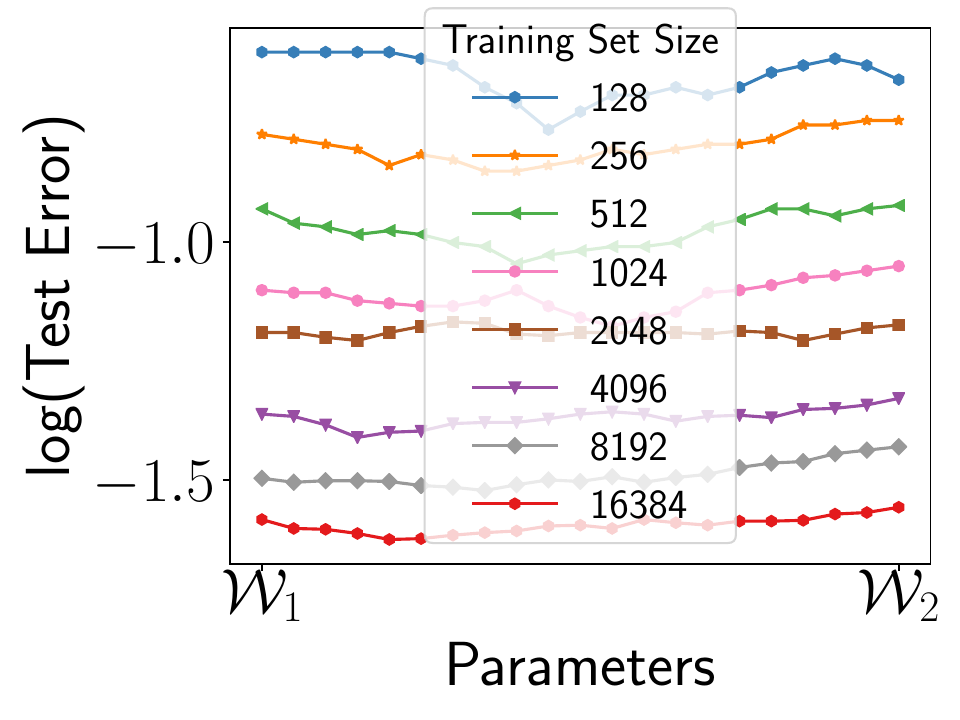} %
        \caption{We plot the test errors of the networks that lie on the straight line  between two weights learned on two independent random draws of training data starting from the same initialization. We observe that all these intermediate networks have the same test error as the original networks themselves.}
                \label{fig:flat-minima}
\end{figure}

\subparagraph{Frobenius norms grow with $m$ when $m \gg h$.} Some bounds like \citep{golowich17size} depend on the Frobenius norms of the weight matrices (or the distance from origin), which as noted in \cite{nagarajan17role} are in fact width-dependent, and grow as $\Omega(\sqrt{h})$. However, even these terms do grow with the number of samples in the regime where $m$ is larger than $h$. In Figure~\ref{fig:more-evidence-dist}, we report the total distance from origin of the learned parameters for a network with $h=256$ (we choose a smaller width to better emphasize the growth of this term with $m$); here, we see that for $m > 8192$, the distance from origin grows at a rate of $\Omega(m^{0.42})$ that is quite similar to what we observed for distance from initialization. 

\begin{figure}[!h]
    \centering
        \begin{minipage}{.35\textwidth}
        \centering
        \adjincludegraphics[width=1\textwidth,trim={0 {0} 0 0},clip,,valign=t]{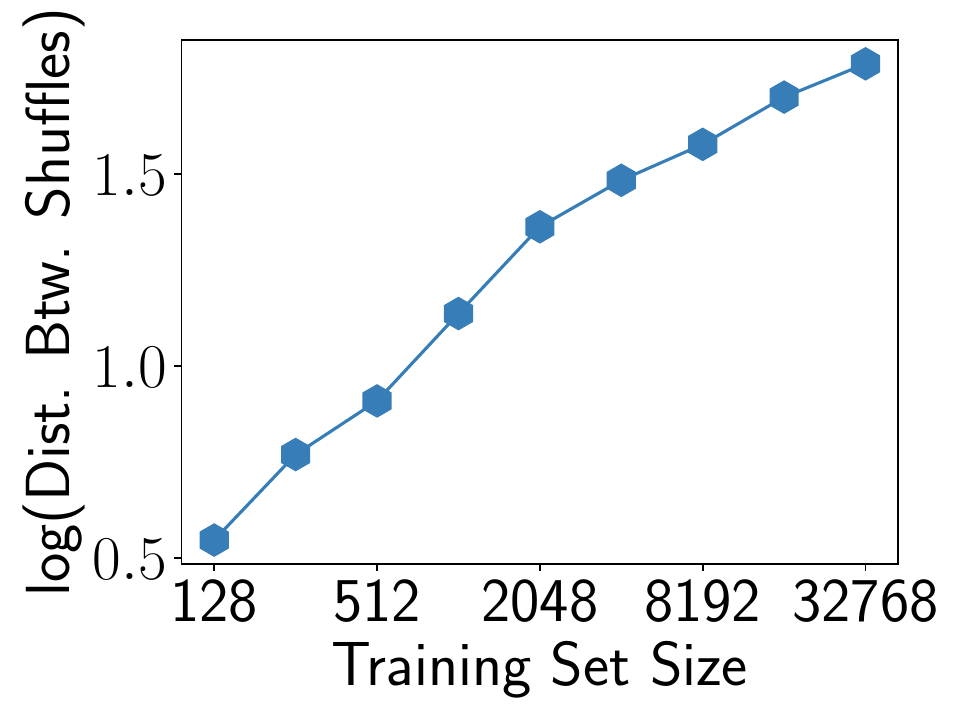} %
    \end{minipage}%
            \begin{minipage}{.35\textwidth}
        \centering
        \adjincludegraphics[width=1\textwidth,trim={0 {0} 0 0},clip,valign=t]{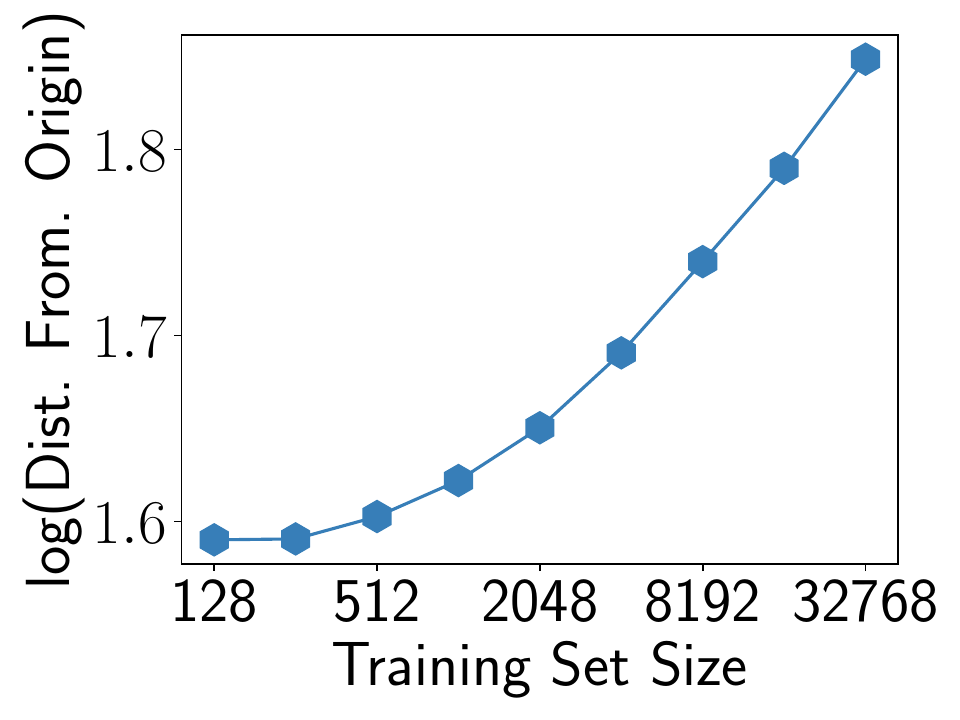} %
    \end{minipage}
       \caption{On the \textbf{left}, we plot the distance between the weights learned on the two different shuffles of the same dataset, and it grows as fast as the distance from initialization. On the \textbf{right}, we plot the distance of the weights from the origin, learned for a network of width $h=256$ and depth $d=6$;   for sufficiently large $m$, this grows as $\Omega(m^{0.42})$.  
       }      \label{fig:more-evidence-dist}
\end{figure}

\subparagraph{Even a relaxed notion of margin does not address the $m$-dependency.} 
Recall that in the main paper, we computed the generalization error bound in Equation~\ref{eq:gen-error-bound} by setting $\gamma$ to be $\gamma^\star$, the margin achieved by the network on at least $99\%$ of the data. One may hope that by choosing a larger value of $\gamma$, this bound would become smaller, and that the $m$-dependence may improve. 
 We consider this possibility by computing the  median margin of the network over the training set (instead of the $1\%$-percentile'th margin) and substituting this in the second term in the right hand side of the guarantee in Equation~\ref{eq:gen-error-bound}. By doing this, the first margin-based train error term in the right hand side of Equation~\ref{eq:gen-error-bound} would simplify to $0.5$ (as half the training data are misclassified by this large margin). Thereby we already forgo an explanation of half of the generalization behavior. At least we could hope that the second term no longer grows with $m$. Unfortunately, we observe in Figure~\ref{fig:median-margin} (left) that the bounds still grow with $m$.  This is because, as shown in Figure~\ref{fig:median-margin} (right), the median margin value does not grow as fast with $m$ as the numerators of these bounds grow.

\begin{figure}[t]
    \centering
       \begin{minipage}{.35\textwidth}
        \centering
        \adjincludegraphics[width=1\textwidth,trim={0 {0} 0 0},clip,valign=t]{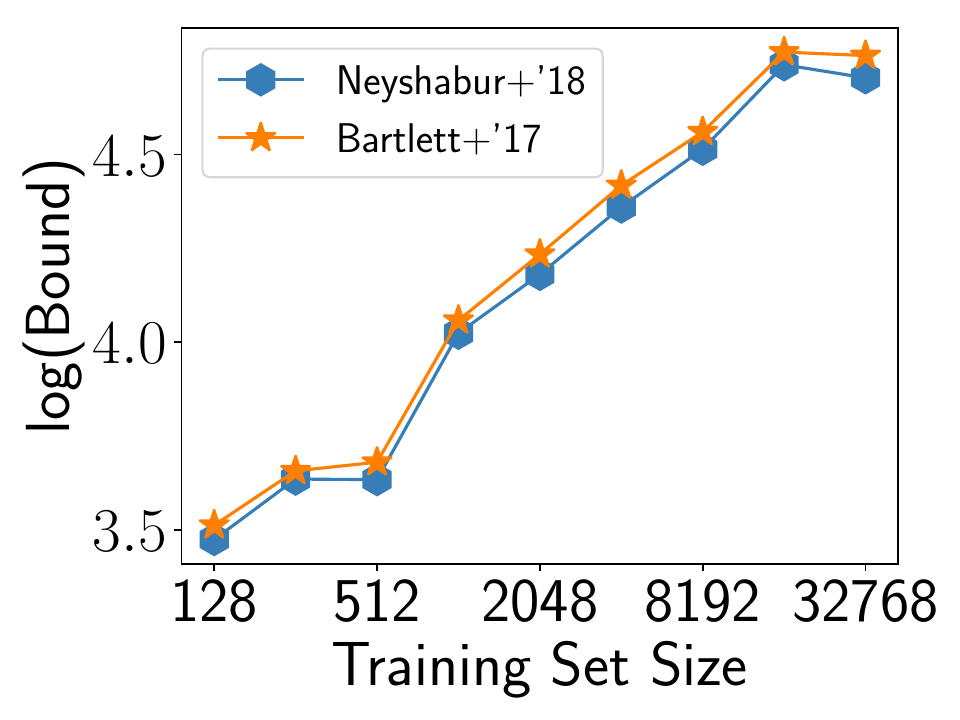} %
    \end{minipage}%
        \begin{minipage}{.35\textwidth}
        \centering
                \adjincludegraphics[width=1\textwidth,trim={0 {0} 0 0},clip,,valign=t]{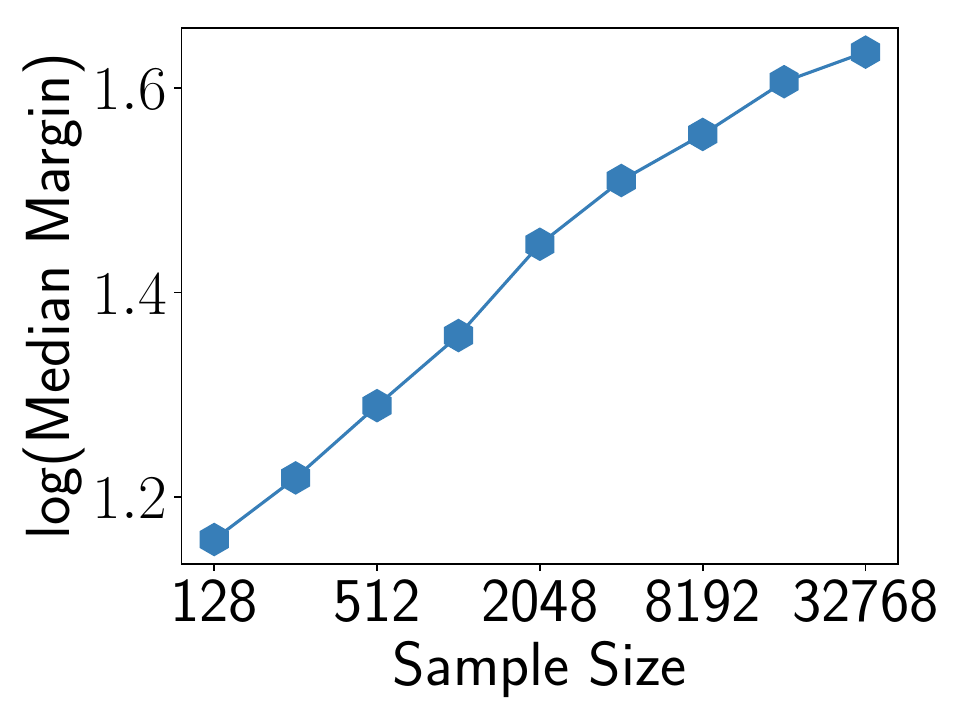} 
    \end{minipage}%
       \caption{
        In the \textbf{left} plot, we plot the bounds after setting $\gamma$ to be the median margin on the training data -- these bounds grow as $\Omega(m^{0.48})$.  In the \textbf{right} plot the median value of the margin $\Gamma(f(\vec{x}),y)$ on the training dataset and observe that it grows as $\mathcal{O}(m^{0.2})$. 
       } \label{fig:median-margin}
\end{figure}

\subparagraph{Effect of depth.}

We observed that as the network gets shallower the bounds show better dependence with $m$. As an extreme case, we consider a network with only one hidden layer, and with $h=50000$. Here we also present a third bound, namely that of \citet{neyshabur18unitwise}, besides the two bounds discussed in the main paper. Specifically, if $Z_1, Z_2$ are the random initializations of the weight matrices in the network, the generalization error bound (the last term in Equation~\ref{eq:gen-error-bound}) here is of the following form, ignoring log factors:

\[
\frac{\|W_2 \|_F (\|W_1- Z_1 \|_F + \| Z_1\|_2)}{\gamma \sqrt{m}} + \frac{\sqrt{h}}{\sqrt{m}.
}\]
The first term here is meant to be width-independent, while the second term clearly depends on the width and does decrease with $m$ at the rate of $m^{-0.5}$. Hence, in our plots in Figure~\ref{fig:one-layer}, we only focus on the first term. We see that these bounds are almost constant and decrease at a minute rate of $\Omega(m^{-0.066})$ while the test errors decrease much faster, at the rate of $\mathcal{O}(m^{-0.35})$.

\begin{figure}[!h]
    \centering
        \begin{minipage}{.35\textwidth}
        \centering
        \adjincludegraphics[width=1\textwidth,trim={0 {0} 0 0},clip,,valign=t]{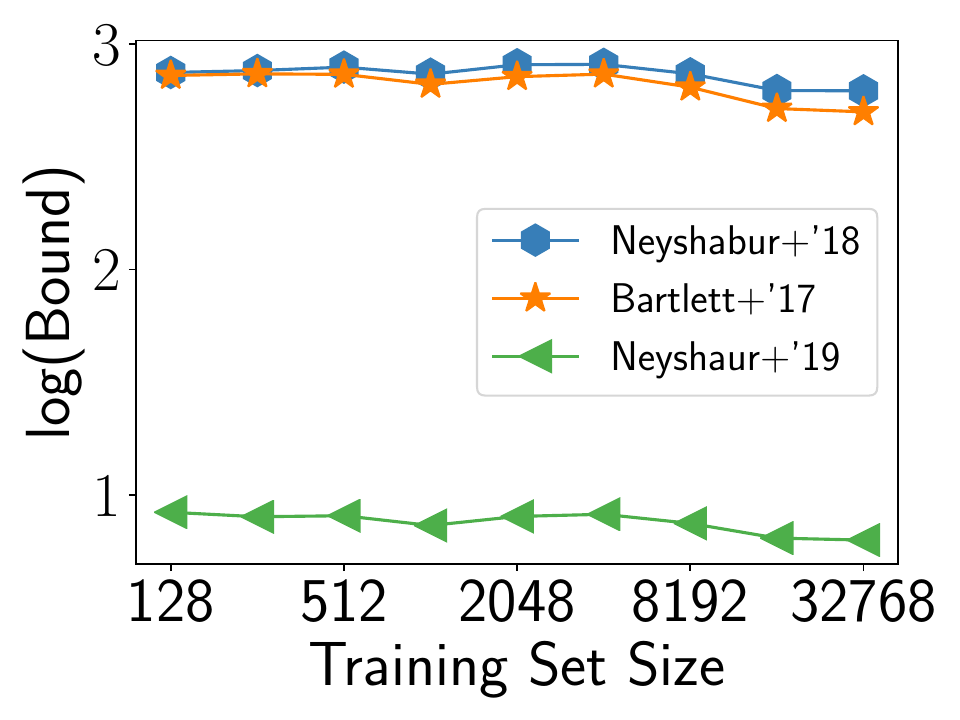} %
    \end{minipage}%
            \begin{minipage}{.35\textwidth}
        \centering
        \adjincludegraphics[width=1\textwidth,trim={0 {0} 0 0},clip,valign=t]{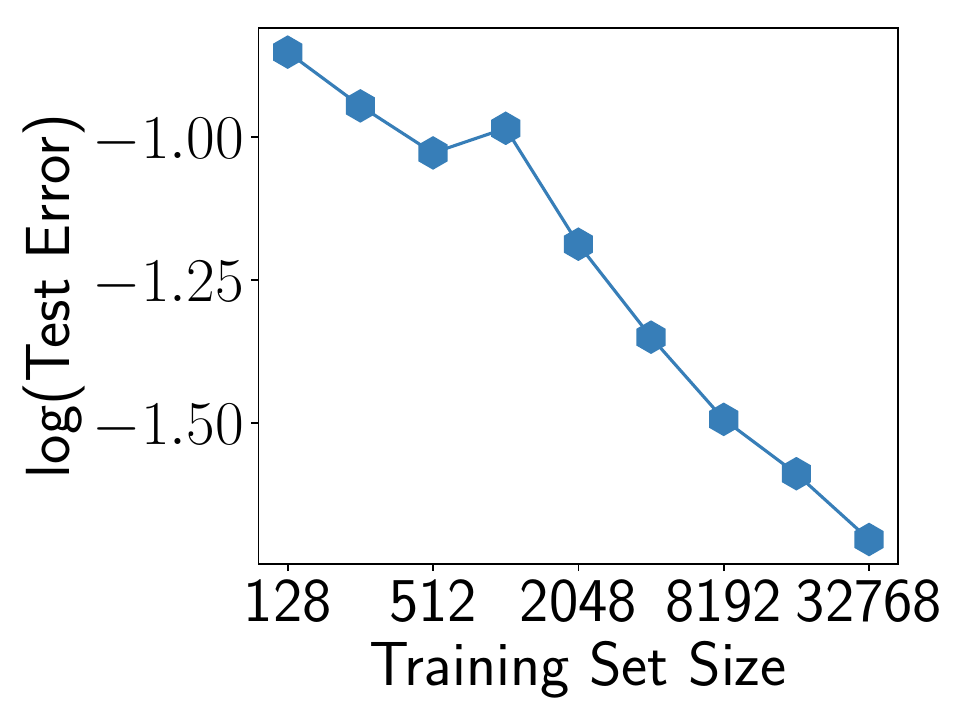} %
    \end{minipage}
       \caption{On the \textbf{left}, we plot how the bounds vary with sample size for a single hidden layer network with $50k$ hidden units. 
       We observe that these bounds are almost constant, and at best decrease at a meagre rate of $\Omega(m^{-0.066})$. On the \textbf{right}, we plot the test errors for this network and observe that it decreases with $m$ at the rate of at least $\mathcal{O}(m^{0.35})$.
       }      \label{fig:one-layer}
\end{figure}

\subparagraph{Effect of width.} In Figure~\ref{fig:width}, we demonstrate that our observation that
the bounds increase with $m$ extends to widths $h=128$ and $h=2000$ too.

\begin{figure}[h]
    \centering
        \begin{minipage}{.35\textwidth}
        \centering
        \adjincludegraphics[width=1\textwidth,trim={0 {0} 0 0},clip,,valign=t]{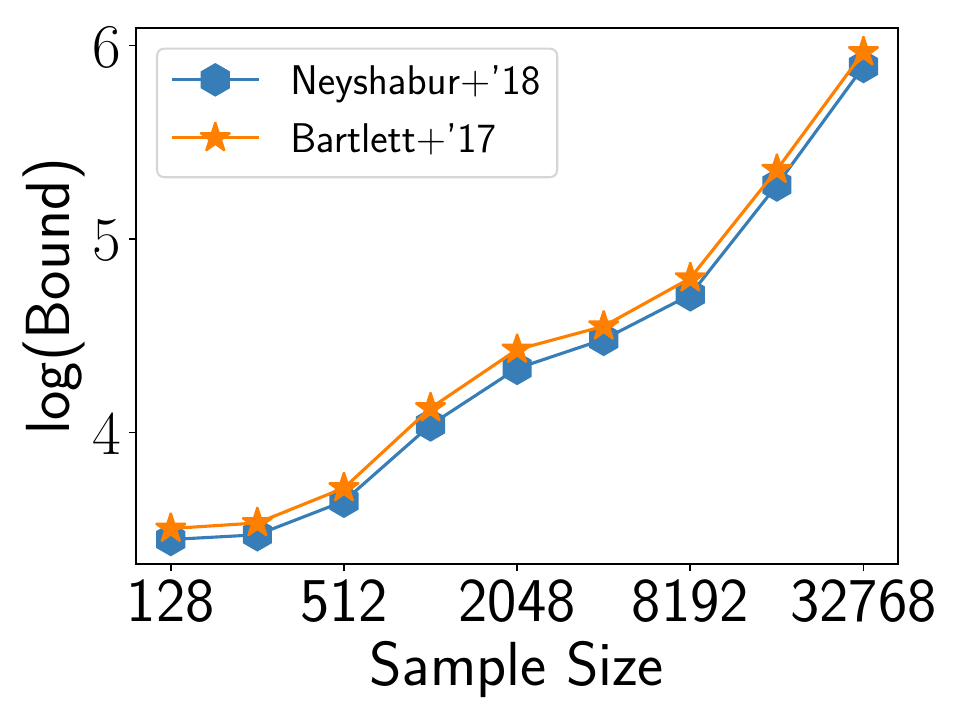} %
    \end{minipage}%
            \begin{minipage}{.35\textwidth}
        \centering
        \adjincludegraphics[width=1\textwidth,trim={0 {0} 0 0},clip,valign=t]{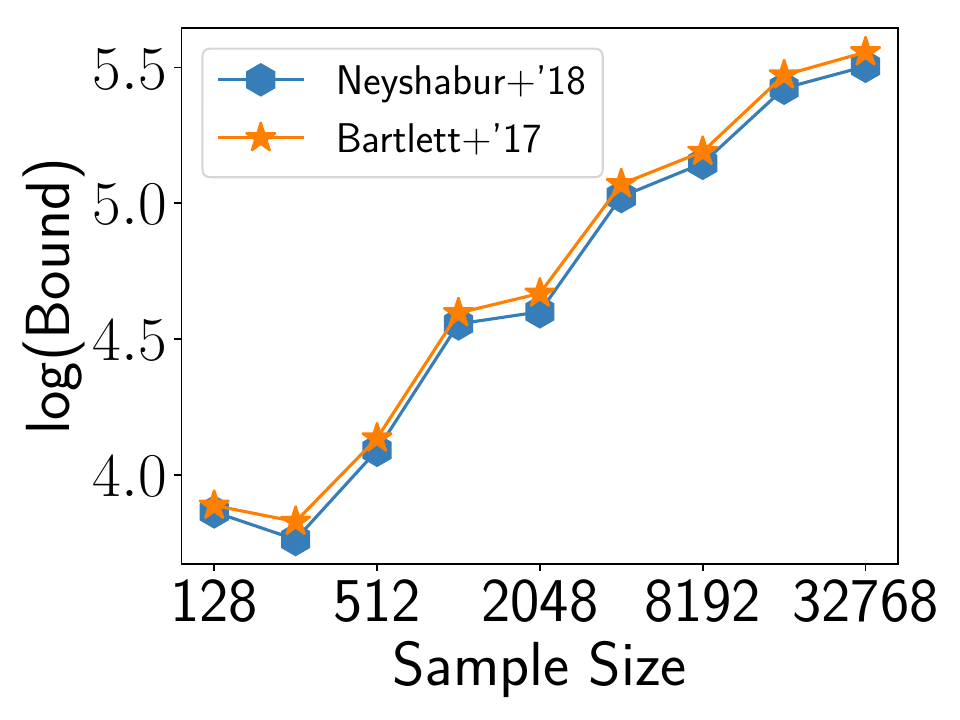} %
    \end{minipage}
       \caption{On the \textbf{left}, we plot the bounds for varying $m$ for $h=128$. All these bounds grow with $m$ as $\Omega(m^{0.94})$. On the \textbf{right}, we show a similar plot for $h=2000$ and observe that the bounds grow as $\Omega(m^{0.79})$.
       }      \label{fig:width}
\end{figure}

\subsection{Effect of batch size}

\subparagraph{Bounds vs. batch size for fixed $m$.}
In Figure~\ref{fig:bs-2}, we show how the bounds vary with the batch size for a fixed sample size of $16384$. It turns out that even though the test error decreases with decreasing batch size (for our fixed stopping criterion), all these bounds {\em increase}  (by a couple of orders of magnitude)  with decreasing batch size. Again, this is because the terms like distance from initialization {\em increase} for smaller batch sizes (perhaps because of greater levels of noise in the updates). Overall, existing bounds do not reflect the same behavior as the actual generalization error in terms of their dependence on the batch size.

\begin{figure}[!h]
    \centering
        \begin{minipage}{.35\textwidth}
        \centering
        \adjincludegraphics[width=1\textwidth,trim={0 {0} 0 0},clip,,valign=t]{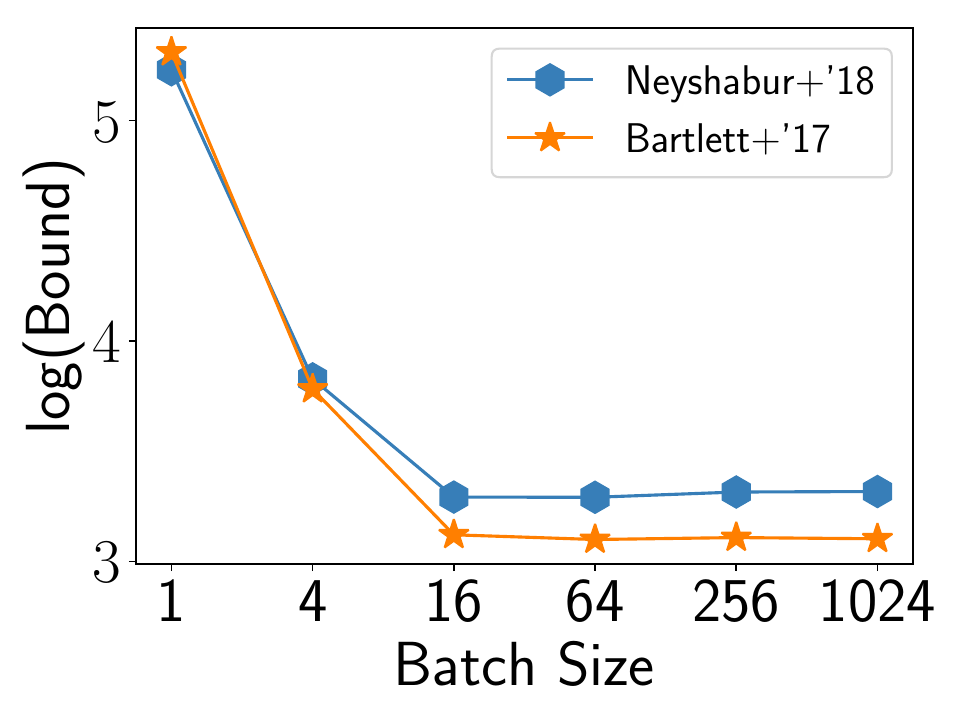} %
    \end{minipage}%
            \begin{minipage}{.35\textwidth}
        \centering
        \adjincludegraphics[width=1\textwidth,trim={0 {0} 0 0},clip,valign=t]{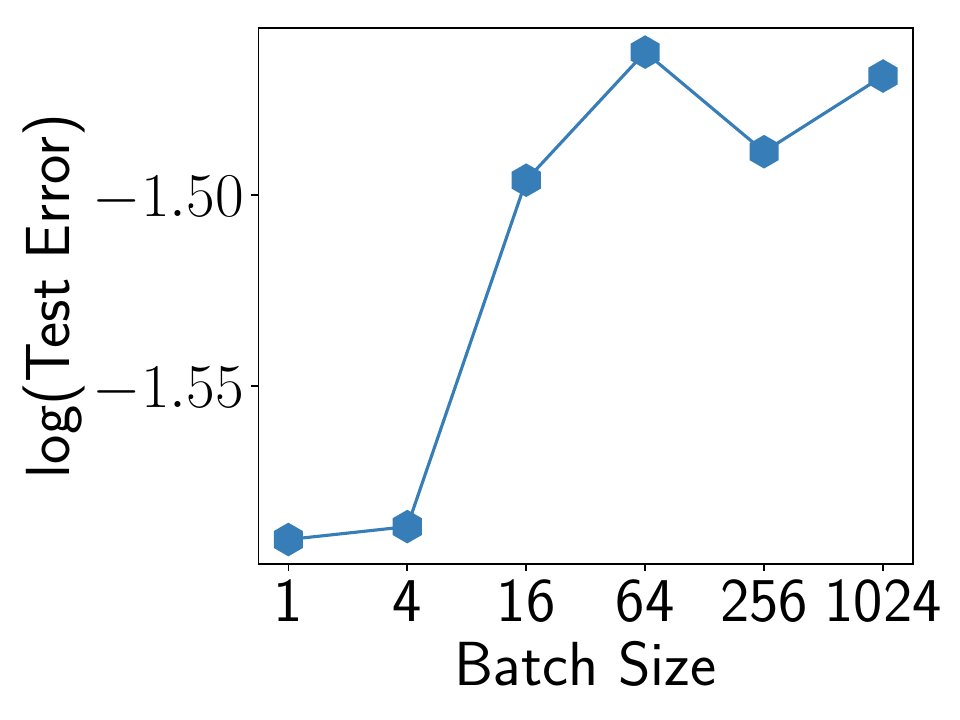} %
    \end{minipage}
       \caption{On the \textbf{left}, we plot the bounds for varying batch sizes for $m=16384$ and observe that these bounds {\em decrease} by around $2$ orders of magnitude. On the \textbf{right}, we plot the test errors for varying batch sizes and observe that test error increases with batch size albeit slightly.
       }      \label{fig:bs-2}
\end{figure}

\subparagraph{Bounds vs.  $m$ for batch size of $32$.}
In the main paper, we only dealt with a small batch size of $1$. In Figure~\ref{fig:bs}, we show bounds vs. sample size plots for a batch size of $32$. We observe that in this case, the bounds do decrease with sample size, although only at a rate of $\mathcal{O}(m^{-0.23})$ which is not as fast as the observed decrease in test error which is $\Omega(m^{-0.44})$. Our intuition as to why the bounds behave better (in terms of $m$-dependence) in the larger batch size regime is that here the amount of noise in the parameter updates is much less compared to smaller batch sizes (and as we discussed earlier, uniform convergence finds it challenging to explain away such noise).

\begin{figure}[!h]
    \centering
        \begin{minipage}{.35\textwidth}
        \centering
        \adjincludegraphics[width=1\textwidth,trim={0 {0} 0 0},clip,,valign=t]{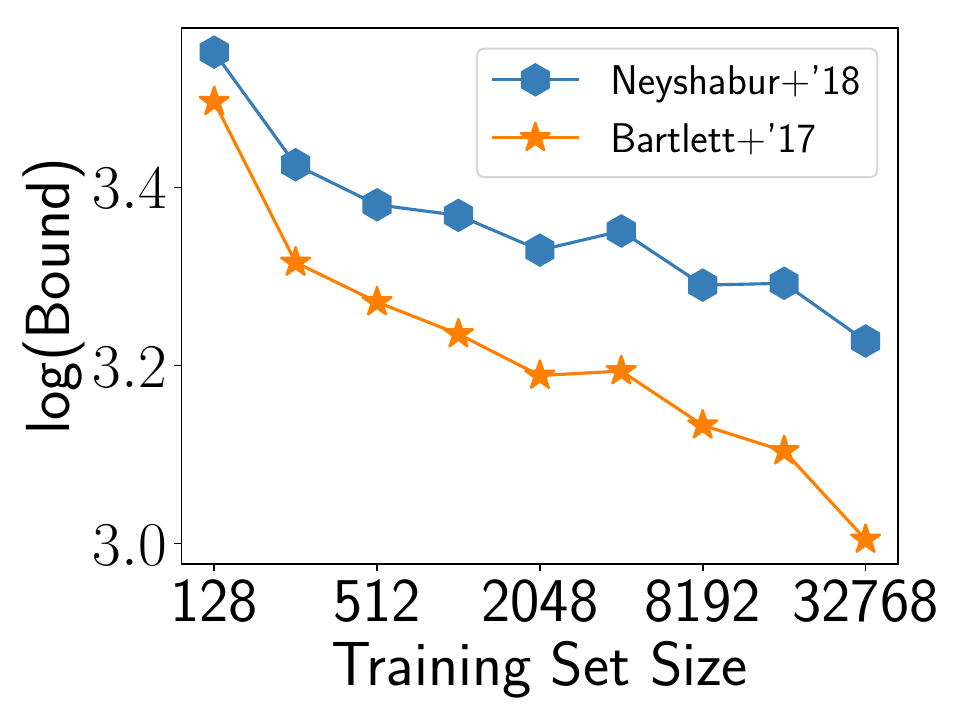} %
    \end{minipage}%
            \begin{minipage}{.35\textwidth}
        \centering
        \adjincludegraphics[width=1\textwidth,trim={0 {0} 0 0},clip,valign=t]{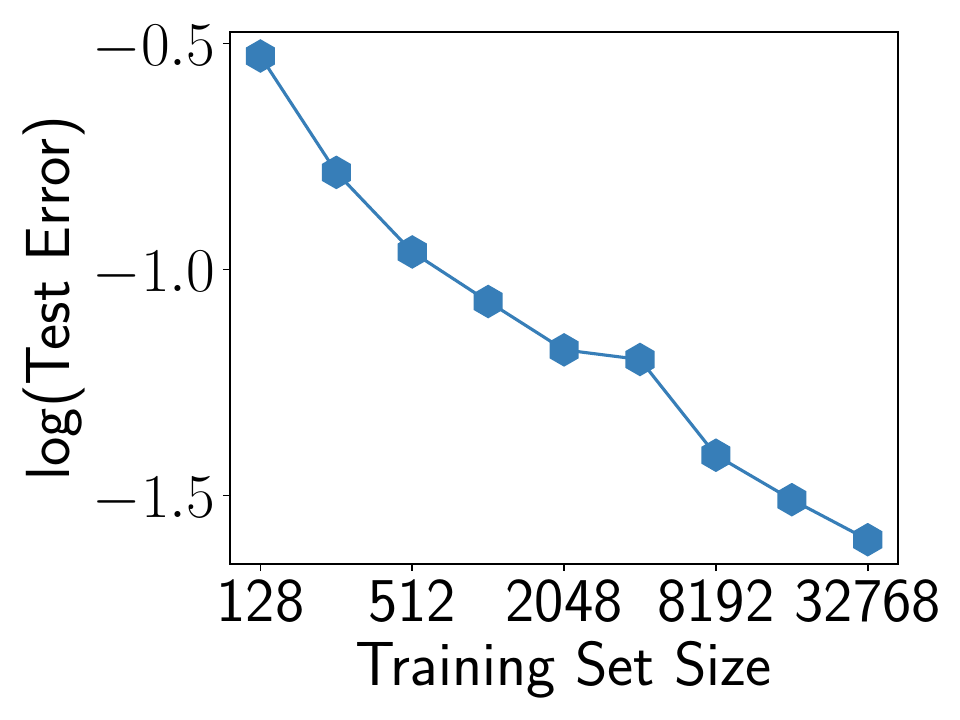} %
    \end{minipage}
       \caption{On the \textbf{left}, we plot the bounds for varying $m$ for a batch size of $32$ and observe that these bounds do decrease with $m$ as $\mathcal{O}(1/m^{0.23})$. On the \textbf{right}, we plot the test errors for various $m$ for batch size $32$ and observe that test error varies as $\Omega(1/m^{0.44})$.
       }      \label{fig:bs}
\end{figure}

\subparagraph{Squared error loss.} All the experiments presented so far deal with the cross-entropy loss, for which the optimization procedure ideally diverges to infinity; thus, one might suspect that our results are sensitive to the stopping criterion. It would therefore be useful to consider the squared error loss where the optimum on the training loss can be found in a finite distance away from the random initialization. Specifically, we consider the case where the squared error loss between the outputs of the network and the one-hot encoding of the true labels is minimized to a value of $0.05$ on average over the training data. 
 
We observe in Figure~\ref{fig:squared-error} that even for this case, the distance from initialization and the spectral norms grow with the sample size at a rate of at least $m^{0.3}$. On the other hand, the test error decreases with sample size as $1/m^{0.38}$, indicating that even for the squared error loss, these terms hurt would hurt the generalization bound with respect to its dependence on $m$.

\begin{figure}[h]
    \centering
            \begin{minipage}{.35\textwidth}
        \centering
        \adjincludegraphics[width=1\textwidth,trim={0 {0} 0 0},clip,,valign=t]{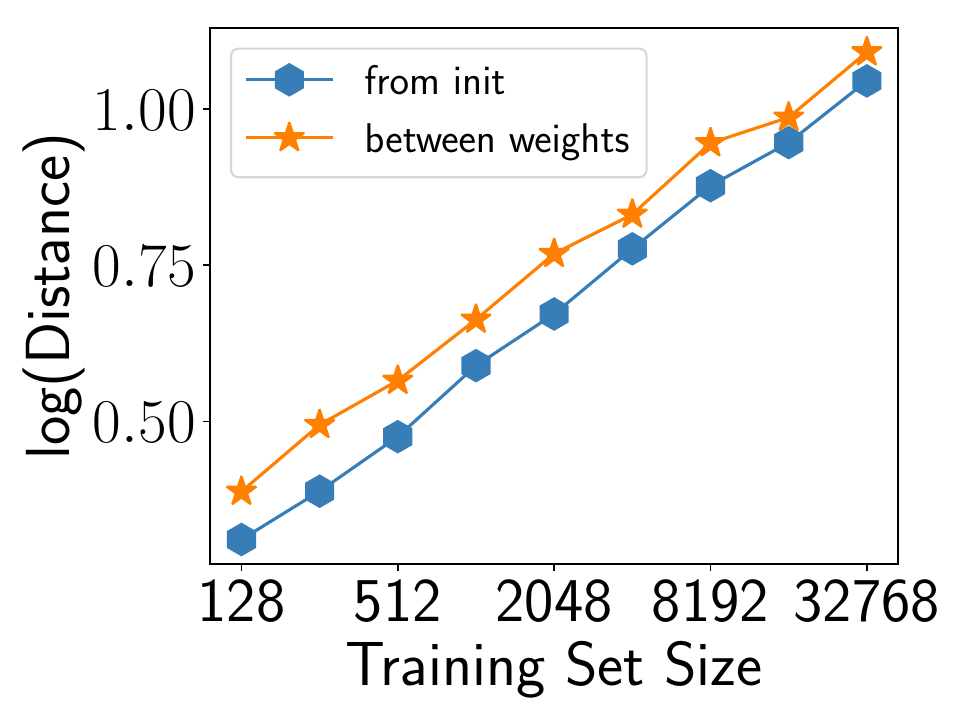} %
    \end{minipage}%
                \begin{minipage}{.35\textwidth}
        \centering
        \adjincludegraphics[width=1\textwidth,trim={0 {0} 0 0},clip,valign=t]{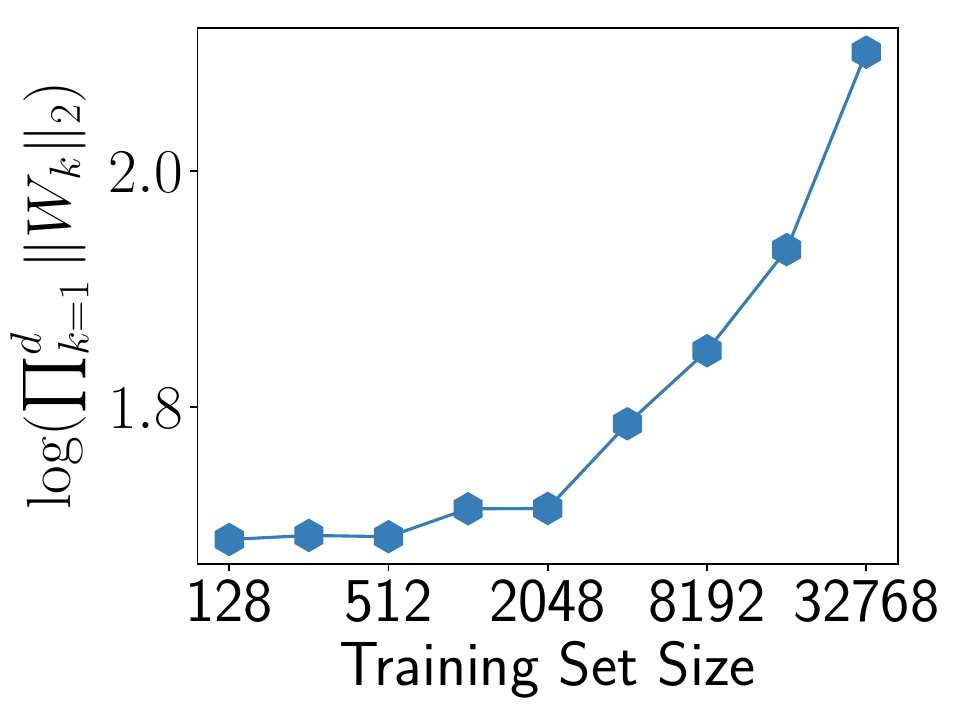} %
    \end{minipage}% 
            \begin{minipage}{.35\textwidth}
        \adjincludegraphics[width=1\textwidth,trim={0 {0} 0 0},clip,,valign=t]{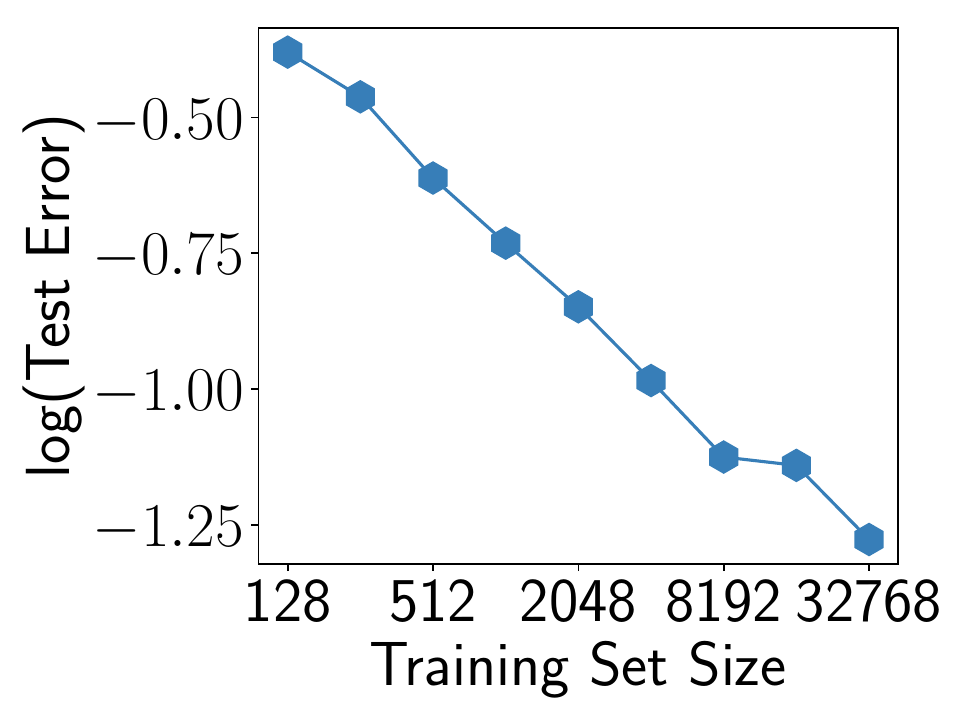} %
        \end{minipage}
       \caption{On the \textbf{left} we plot the distance from initialization and the distance between weights learned on two different random draws of the datasets, as a function of varying training set size $m$, when trained on the squared error loss. Both these quantities grow as $\Omega(m^{0.35})$. In the {\bf middle}, we show how the product of spectral norms grow as $\Omega(m^{0.315})$ for sufficiently large $m \geq 2048$. On the \textbf{right}, we observe that the test error (i.e., the averaged squared error loss on the test data) decreases with $m$ as $\mathcal{O}(m^{-0.38})$.
       }      \label{fig:squared-error}
\end{figure}

\section{Pseudo-overfitting}
\label{app:pseudo-overfit}
% Before, we proceed to our main contribution which identifies a serious limitation of  uniform convergence-based approaches to explaining generalization, we consider (and rule out) the possibility that existing approaches may be providing vacuous bounds due to a more superficial issue that does not involve uniform convergence. 
Recall that in the main paper, we briefly discussed a new notion that we call as pseudo-overfitting and noted that it is not the reason behind why some techniques lead to vacuous generalization bounds. We describe this in more detail here.
We emphasize this discussion because i) it brings up a fundamental and so far unknown  issue that might potentially exist in current approaches to explaining generalization and ii) rules it out before making more profound claims about uniform convergence. 

Our argument specifically applies to margin-based Rademacher complexity approaches (such as \citet{bartlett17spectral,neyshabur18unitwise}). These result in a bound like in Equation~\ref{eq:gen-error-bound} that we recall here:

\begin{align*}
 Pr_{(x,y) \sim \mathcal{D}}[\Gamma(f(\vec{x}),y) & \leq 0]  \leq \frac{1}{m}\sum_{(x,y)\in S}\mathbf{1}[\Gamma(f(\vec{x}),y) \leq \gamma] \\
& + \text{generalization error bound}. \; (\ref{eq:gen-error-bound})
\end{align*}

These methods upper bound the uniform convergence bound on the $\mathcal{L}^{(\gamma)}$ error on the network in terms of a uniform convergence bound on the margins of the network (see \cite{mohri12foundations} for more details about margin theory of Rademacher complexity). The resulting generalization error bound in Equation~\ref{eq:gen-error-bound} would take the following form, as per our notation from Definition~\ref{def:unif-alg}:
\begin{align*}
\sup_{S \in \mathcal{S}_{\delta}} \sup_{h \in \mathcal{H}_{\delta}}  \frac{1}{\gamma}  \left|   \mathbb{E}_{\mathcal{D}}[ \Gamma(h(\vec{x}),y) ] -  \frac{1}{m} \sum_{(x,y) \in S} \Gamma(h(\vec{x}),y)\right|. \numberthis\label{eq:margin-unif}
\end{align*}

This particular upper bound on the generalization gap in the $\mathcal{L}^{(\gamma)}$ loss is also an upper bound on the generalization gap on the margins. That is, with high probability $1-\delta$ over the draws of $S$, the above bound is larger than the following term that corresponds to the difference in test/train margins:

\begin{align*}
& \frac{1}{\gamma} \left(    \mathbb{E}_{(x,y)\sim \mathcal{D}}[ \Gamma(h_{S}(\vec{x}),y) ] -  \frac{1}{m} \sum_{(x,y) \in S} \Gamma(h_{S}(\vec{x}),y)\right).
\numberthis \label{eq:margin-bound}
\end{align*}

We first argue that it is possible for the generalization error of the algorithm to decrease with $m$ (as roughly $m^{-0.5}$), but for the above quantity to be independent of $m$. As a result, the margin-based bound in Equation~\ref{eq:margin-unif} (which is larger than Equation~\ref{eq:margin-bound}) will be non-decreasing in $m$, and even vacuous. Below we describe such a scenario.

Consider a network that first learns a simple hypothesis to fit the data, say, by learning a simple linear input-output mapping on linearly separable data. But subsequently, the classifier proceeds to {\em pseudo-overfit} to the samples by skewing up (down) the real-valued output of the network by some large constant $\Delta$ in a tiny neighborhood around the positive (negative) training inputs. Note that this would be possible if and only if the network is overparameterized. Now, even though the classifier's real-valued output is skewed around the training data, the decision boundary is still linear as the sign of the classifier's output has not changed on any input. Thus, the boundary is still simple and linear and the generalization error small. 

However, the training margins are at least a constant $\Delta$ larger than the test margins (which are not affected by the bumps created in tiny regions around the training data). Then, the term in Equation~\ref{eq:margin-bound} would be larger than $\Delta/\gamma$ and as a result, so would the term in Equation~\ref{eq:margin-unif}. Now in the generalization guarantee of Equation~\ref{eq:gen-error-bound}, recall that we must pick a value of $\gamma$ such that the first term is low i.e., most of the training datapoints must be classified by at least $\gamma$ margin. In this case, we can at best let $\gamma \approx \Delta$ as any larger value of $\gamma$ would make the margin-based training error non-negligible; as a result of this choice of $\gamma$, the bound in Equation~\ref{eq:margin-bound} would be an $m$-independent constant close to $1$. The same would also hold for its upper bound in Equation~\ref{eq:margin-unif}, which is the generalization bound provided by the margin-based techniques.
% When the datapoints are classified only by a constant margin $\gamma$, then the bound we get is independent of $m$. 

Clearly, this is a potential fundamental limitation in existing approaches, and if deep networks were indeed {pseudo-overfitting} this way, we would have identified the reason why at least some existing bounds are vacuous. However, (un)fortunately, we rule this out by observing that the difference in the train and test margins in Equation~\ref{eq:margin-bound} does decrease with training dataset size $m$ (see Figure~\ref{fig:margin-convergence}) as $\mathcal{O}(m^{-0.33})$. Additionally, this difference is numerically much less than $\gamma^\star=10$ (which is the least margin by which $99\%$ of the training data is classified) as long as $m$ is large, implying that Equation~\ref{eq:margin-bound} is non-vacuous.

 It is worth noting that the generalization error decreases at a faster rate of $\mathcal{O}(m^{-0.43})$ implying that the upper bound in Equation~\ref{eq:margin-bound} which decreases only as $m^{-0.33}$,
 is loose.  This already indicates a partial weakness in this specific approach to deriving generalization guarantees. Nevertheless, even this upper bound decreases at a significant rate with $m$ which the subsequent uniform convergence-based upper bound in Equation~\ref{eq:margin-unif} is unable to capture, thus hinting at more fundamental weaknesses specific to uniform convergence.
   \begin{figure}
        \centering
        \adjincludegraphics[width=0.35\textwidth,trim={0 {0} 0 0},clip,,valign=t]{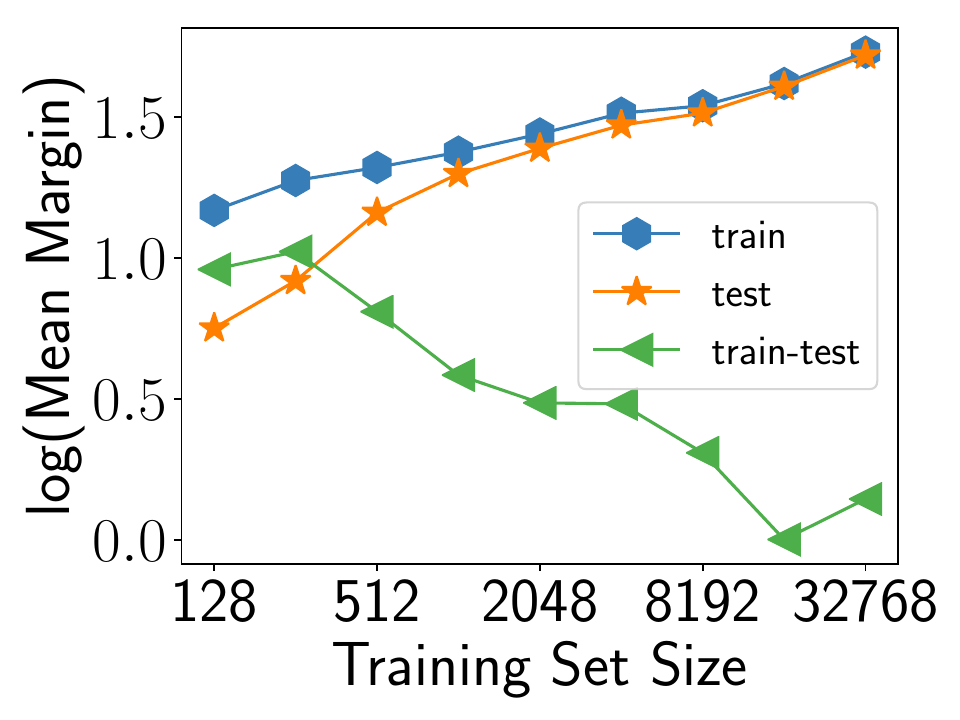} %
        \caption{We plot the average margin of the network on the train and test data, and the difference between the two, the last of which decreases with $m$ as $\mathcal{O}(1/m^{0.33})$.}
                \label{fig:margin-convergence}
\end{figure}

\subparagraph{Do our example setups suffer from pseudo-overfitting?}

Before we wrap up this section, we discuss a question brought up by an anonymous reviewer, which we believe is worth addressing. Recall that in Section~\ref{sec:linear} and Section~\ref{sec:hypersphere}, we presented a linear and hypersphere classification task where we showed that uniform convergence provably fails. In light of the above discussion, one may be tempted to ask: do these two models fail to obey uniform convergence because of pseudo-overfitting? 

The answer to this is that our proof for failure of uniform convergence in both these examples did not rely on any kind of pseudo-overfitting -- had our proof relied on it, then we would have been able to show failure of only specific kinds of uniform convergence bounds (as discussed above). More formally, pseudo-overfitting in itself does not imply the lower bounds on $\epsilon_{{\textrm{\tiny unif-alg}}}$ that we have shown in these settings.

One may still be curious to understand the level of pseudo-overfitting in these examples, to get a sense of the similarity of this scenario with that of the MNIST setup. To this end, we note that our linear setup does indeed suffer from significant pseudo-overfitting -- the classifier's output does indeed have bumps around each training point (which can be concluded from our proof). 

In the case of the hypersphere example, we present Figure~\ref{fig:margin-convergence-hypersphere}, where we plot of the average margins in this setup like in Figure~\ref{fig:margin-convergence}. Here, we observe that, the mean margins on the test data (orange line) and on training data (blue line) {do converge to each other} with more training data size $m$ i.e., {the gap in the mean test and training margins (green line) {\em does} decrease with $m$}. Thus our setup exhibits a behavior similar to deep networks on MNIST in Figure~\ref{fig:margin-convergence}. 
As noted in our earlier discussion, since the rate of decrease of the mean margin gap in MNIST is not as large as the decrease in test error itself,  there should be ``a small amount'' of psuedo-overfitting in MNIST. The same holds in this setting, although, here we observe an even milder decrease, implying a larger amount of pseudo-overfitting. Nevertheless, we emphasize that, our proof shows that uniform convergence cannot capture even this decrease with $m$.

To conclude, pseudo-overfitting is certainly a phenomenon worth exploring better; however, our examples elucidate that there is a phenomenon beyond pseudo-overfitting that is at play in deep learning.

  \begin{figure}
        \centering
        \adjincludegraphics[width=0.35\textwidth,trim={0 {0} 0 0},clip,,valign=t]{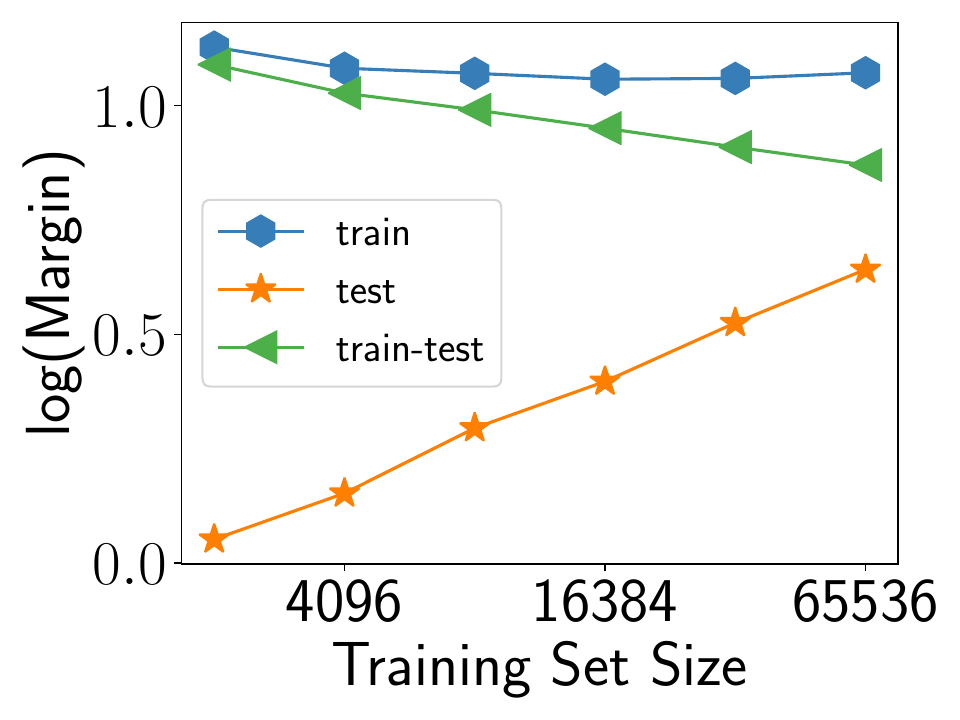} %
        \caption{In the hypersphere example of Section~\ref{sec:hypersphere}, we plot the average margin of the network on the train and test data, and the difference between the two. We observe the train and test margins do converge to each other.}
                \label{fig:margin-convergence-hypersphere}
\end{figure}

\section{Useful Lemmas}
\label{sec:lemma}

In this section, we state some standard results we will use in our proofs. We first define some constants: $c_1=1/2048$, $c_2 =\sqrt{15/16}$ and $c_3 = \sqrt{17/16}$ and $c_4 =\sqrt{2}$.

 First, we state a tail bound for sub-exponential random variables \citep{wainwright19high}.

%  chi-squared random variables i.e., sum of squared normal variables.
% \begin{lemma}
% Let $z_1, z_2, \hdots, z_D \sim \mathcal{N}(0,1)$, then  for all $t \in (0,1)$
% \[
% Pr\left[ \left| \frac{1}{D}\sum_{d=1}^{D} z_d^2 - 1 \right| \geq t\right] \leq 2\exp(-Dt^2/8)
% \]
% \end{lemma}
\begin{lemma}
For a sub-exponential random variable $X$ with parameters $(\nu, b)$ and mean $\mu$, for all $t > 0$,
\[
%Pr\left[|X - \mu| \geq t \right] \leq \begin{cases} 2 \exp\left(-t^2/2\nu^2\right) & t \in [0,\nu^2/b]\\
%2\exp\left(-t/2b) & t > \nu^2/b 
Pr\left[|X - \mu| \geq t \right] \leq 2 \exp\left(-\frac{1}{2} \min \left(\frac{t}{b}, \frac{t^2}{\nu^2}\right) \right). \\
\]
\end{lemma}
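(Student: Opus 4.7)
The plan is to prove this standard Bernstein-type tail bound via the Cramér--Chernoff method, exploiting the defining MGF inequality for sub-exponential variables. Recall that $X$ being $(\nu,b)$-sub-exponential with mean $\mu$ means $\mathbb{E}[\exp(\lambda(X-\mu))] \le \exp(\lambda^2 \nu^2/2)$ for all $|\lambda|\le 1/b$. I will first establish the one-sided bound on $\Pr[X-\mu\ge t]$; the two-sided statement then follows by applying the same argument to $-X$ (which is also $(\nu,b)$-sub-exponential with mean $-\mu$) and union-bounding, which accounts for the factor of $2$ in the stated inequality.

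For the one-sided direction, I would apply Markov's inequality to the non-negative random variable $\exp(\lambda(X-\mu))$ for any $\lambda \in (0, 1/b]$, obtaining
\[
\Pr[X-\mu \ge t] \;\le\; e^{-\lambda t}\,\mathbb{E}\bigl[e^{\lambda(X-\mu)}\bigr] \;\le\; \exp\!\bigl(-\lambda t + \lambda^2 \nu^2/2\bigr).
\]
The next step is to minimize the exponent $g(\lambda) = -\lambda t + \lambda^2\nu^2/2$ over the admissible interval $\lambda \in (0, 1/b]$. The unconstrained minimizer is $\lambda^\star = t/\nu^2$, which gives value $-t^2/(2\nu^2)$; this lies inside the admissible range precisely when $t \le \nu^2/b$.

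This suggests a two-case analysis. In the \emph{small-$t$} regime $t \le \nu^2/b$, choosing $\lambda = t/\nu^2$ is feasible and yields the sub-Gaussian tail $\exp(-t^2/(2\nu^2))$. In the \emph{large-$t$} regime $t > \nu^2/b$, the function $g$ is still decreasing at $\lambda = 1/b$, so I would pick $\lambda = 1/b$, producing the bound
\[
\exp\!\bigl(-t/b + \nu^2/(2b^2)\bigr) \;\le\; \exp(-t/(2b)),
\]
where the last inequality uses $\nu^2/(2b^2) \le t/(2b)$, which holds exactly because $t > \nu^2/b$. Combining the two regimes gives the unified one-sided bound $\exp\!\bigl(-\tfrac12 \min(t/b,\, t^2/\nu^2)\bigr)$.

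I anticipate no substantive obstacle: the argument is textbook, and the only subtlety is to respect the admissibility constraint $|\lambda|\le 1/b$ when optimizing, which is precisely what produces the $\min$ between the linear ($t/b$) and quadratic ($t^2/\nu^2$) terms in the exponent. Finally, applying the identical argument to $-X$ in place of $X$ yields the matching lower-tail bound, and a union bound over the two symmetric tails contributes the prefactor $2$, completing the proof of the stated inequality.
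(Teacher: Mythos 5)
Your proof is correct: the Chernoff argument with the case split at $t = \nu^2/b$, followed by symmetrization and a union bound for the two-sided statement, is exactly the standard derivation. The paper does not prove this lemma at all — it is quoted as a known result from the cited reference (Wainwright) — and your argument is precisely the textbook proof underlying that citation, so there is nothing to reconcile.
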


As a corollary, we have the following bound on the sum of squared normal variables:
% We can then restate the above as follows, for use in our proof:

\begin{corollary}
\label{cor:chi}
For $z_1, z_2, \hdots, z_D \sim \mathcal{N}(0,1)$, we have that
\[
Pr\left[  \frac{1}{D}\sum_{j=1}^{D} z_j^2 \in [c_2^2, c_3^2]\right] \leq 2\exp(-c_1 D).
\]
\end{corollary}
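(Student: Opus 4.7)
The statement as literally written upper-bounds the probability that $\frac{1}{D}\sum_{j=1}^D z_j^2$ \emph{lies inside} the interval $[c_2^2, c_3^2] = [15/16,\, 17/16]$ by $2\exp(-c_1 D)$. Since this is a fixed window around the mean $\mathbb{E}[z_j^2]=1$, the law of large numbers forces that probability to tend to $1$ with $D$, so no proof of the literal claim is possible. I therefore read the $\in$ as a typographical error for $\notin$ --- this is the form that makes the corollary a genuine consequence of the preceding sub-exponential tail lemma, and (based on how the corollary is typically invoked) the form actually needed downstream. My proof plan targets this corrected statement.

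First, I would identify the sub-exponential parameters of $z_j^2$. A standard computation with the $\chi^2_1$ moment generating function, $\mathbb{E}[e^{t(z_j^2-1)}] = e^{-t}(1-2t)^{-1/2}$ for $t < 1/2$, yields the log-MGF bound $\log \mathbb{E}[e^{t(z_j^2-1)}] \le 2t^2$ for $|t|\le 1/4$. Hence $z_j^2 - 1$ is sub-exponential with parameters $(\nu_0, b_0) = (2, 4)$. Independence then gives that $Y := \frac{1}{D}\sum_{j=1}^D (z_j^2 - 1)$ has mean zero and is sub-exponential with parameters $(\nu_0/\sqrt{D},\, b_0/D) = (2/\sqrt{D},\, 4/D)$.

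Second, I would observe that the event $\frac{1}{D}\sum z_j^2 \notin [c_2^2, c_3^2]$ is exactly $|Y| \ge 1/16$, and apply the sub-exponential tail lemma from the excerpt with $t = 1/16$:
\[
\Pr\!\left[|Y| \ge \tfrac{1}{16}\right] \;\le\; 2\exp\!\left(-\tfrac{1}{2}\min\!\left(\tfrac{t}{b_Y},\; \tfrac{t^2}{\nu_Y^2}\right)\right).
\]
Plugging in $b_Y = 4/D$ and $\nu_Y^2 = 4/D$, the two arguments to the minimum are $\tfrac{D}{64}$ and $\tfrac{D}{1024}$ respectively, so the minimum is $\tfrac{D}{1024}$ and the overall bound becomes $2\exp(-D/2048) = 2\exp(-c_1 D)$, matching the claimed constant.

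The main obstacle is just the moment-generating-function calculation that pins down the sub-exponential parameters $(\nu_0, b_0)$; everything else is routine bookkeeping with the scaling rule for sums. If one wanted to avoid the MGF step entirely, an equivalent plan would be to invoke any off-the-shelf Bernstein-type concentration for sums of i.i.d.\ $\chi^2_1$ random variables and absorb the resulting numerical constants into the definition of $c_1$; this flexibility is why the paper can quote a clean constant like $1/2048$ without deriving the sub-exponential parameters explicitly in the text.
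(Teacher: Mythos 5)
Your proposal is correct and follows the paper's intended route: the corollary is stated without a separate proof, as a direct consequence of the preceding sub-exponential tail lemma, using exactly the $\chi^2_1$ sub-exponential parameters $(\nu,b)=(2,4)$, the deviation $t=1/16$ given by $c_2^2=15/16$ and $c_3^2=17/16$, and the resulting constant $c_1=1/2048$ that you computed. You are also right that the ``$\in$'' in the statement is a typo for ``$\notin$''; the corollary is invoked throughout the appendix precisely in that corrected form (bounding the probability that the normalized squared norm falls \emph{outside} the band).
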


We now state the Hoeffding bound for sub-Gaussian random variable. 

\begin{lemma}
\label{lem:hoeffding}
Let $z_1, z_2, \hdots, z_D$ be independently drawn sub-Gaussian variables with mean $0$ and sub-gaussian parameter $\sigma_i$. Then,
\[
Pr\left[ \left|\sum_{d=1}^{D} z_d \right| \geq t \right] \leq 2\exp(-t^2/2\sum_{d=1}^{D} \sigma_d^2).
\]
\end{lemma}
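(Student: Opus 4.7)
The plan is to use the standard Chernoff (Cramér-Chernoff) argument, which is the textbook route to Hoeffding-type bounds for sub-Gaussian variables. First I would recall the working definition: a mean-zero random variable $z$ is sub-Gaussian with parameter $\sigma$ iff its moment generating function satisfies $\mathbb{E}[\exp(\lambda z)] \leq \exp(\lambda^2 \sigma^2 / 2)$ for every $\lambda \in \mathbb{R}$. This is the property I will plug into the Chernoff machinery; the proof is essentially just a manipulation of this inequality.

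Next I would handle the upper tail $\Pr[\sum_d z_d \geq t]$. For any $\lambda > 0$, apply Markov's inequality to the nonnegative random variable $\exp(\lambda \sum_d z_d)$ to obtain
\[
\Pr\!\left[\sum_{d=1}^{D} z_d \geq t\right] \leq e^{-\lambda t}\, \mathbb{E}\!\left[\exp\!\Big(\lambda \sum_{d=1}^{D} z_d\Big)\right].
\]
By independence of the $z_d$'s, the expectation factorizes as $\prod_d \mathbb{E}[\exp(\lambda z_d)]$, and using the sub-Gaussian MGF bound on each factor, this is at most $\prod_d \exp(\lambda^2 \sigma_d^2 / 2) = \exp\!\big(\lambda^2 \sum_d \sigma_d^2 / 2\big)$. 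So the upper tail is bounded by $\exp(-\lambda t + \lambda^2 \sum_d \sigma_d^2/2)$ for every $\lambda > 0$.

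Then I would optimize in $\lambda$: the quadratic in the exponent is minimized at $\lambda^\star = t / \sum_d \sigma_d^2$, which gives the exponent $-t^2 / (2 \sum_d \sigma_d^2)$. This yields the one-sided bound $\Pr[\sum_d z_d \geq t] \leq \exp(-t^2/(2\sum_d \sigma_d^2))$. The lower tail is symmetric: since $-z_d$ is also sub-Gaussian with parameter $\sigma_d$ (the MGF bound is symmetric in $\lambda$), the same argument applied to $-\sum_d z_d$ gives $\Pr[\sum_d z_d \leq -t] \leq \exp(-t^2/(2\sum_d \sigma_d^2))$. A union bound over the two tails then produces the factor of $2$ in the stated inequality.

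There is no real obstacle here — every step is a standard calculus/probability manipulation. The only thing worth being careful about is the definitional convention for ``sub-Gaussian parameter'': the inequality as stated requires the variance-proxy convention $\mathbb{E}[e^{\lambda z}] \leq e^{\lambda^2 \sigma^2/2}$, so I would make sure the proof (and any use of this lemma elsewhere in the paper) is consistent with this convention rather than, say, an $L^2$-norm-based definition that would differ by a universal constant.
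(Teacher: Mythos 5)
Your proposal is correct: the standard Chernoff argument (Markov's inequality on the moment generating function, factorization by independence, the sub-Gaussian MGF bound, optimization at $\lambda^\star = t/\sum_d \sigma_d^2$, and a union bound over the two tails) is exactly the canonical derivation of this inequality. The paper itself offers no proof, stating the lemma as a standard Hoeffding bound for sub-Gaussian variables (in the spirit of the cited reference of \citet{wainwright19high}), so your argument supplies precisely the intended justification, and your closing caution about the variance-proxy convention for the sub-Gaussian parameter is the right thing to check for consistency with how the lemma is applied.
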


Again, we restate it as follows:
\begin{corollary}
\label{cor:gaussian}
For any $\vec{u} = (u_1, u_2, \hdots, u_d) \in \mathbb{R}^D$,
for $z_1, z_2, \hdots, z_D \sim \mathcal{N}(0,1)$,
\[
Pr\left[ \left|\sum_{d=1}^{D} u_d z_d \right| \geq \| \vec{u} \|_2 \cdot c_4  \sqrt{\ln \frac{2}{\delta}}  \right] \leq \delta.
\]
\end{corollary}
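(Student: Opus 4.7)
The plan is to derive Corollary \ref{cor:gaussian} as a direct consequence of the sub-Gaussian Hoeffding bound stated as Lemma \ref{lem:hoeffding}. The key observation is that a standard normal variable $z_d \sim \mathcal{N}(0,1)$ is sub-Gaussian with parameter $1$, and hence the scaled variable $u_d z_d$ is sub-Gaussian with parameter $\sigma_d = |u_d|$. Since the $z_d$ are independent, the random variable $\sum_{d=1}^{D} u_d z_d$ has mean $0$ and the independence allows us to apply Lemma \ref{lem:hoeffding} directly.

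The steps are as follows. First, I would identify $z_d$ as sub-Gaussian with parameter $1$, and then note that $u_d z_d$ is sub-Gaussian with parameter $|u_d|$. Second, I would invoke Lemma \ref{lem:hoeffding} with $\sigma_d = |u_d|$ to conclude
\[
\Pr\left[\left|\sum_{d=1}^{D} u_d z_d\right| \geq t\right] \leq 2 \exp\!\left(-\frac{t^2}{2\sum_{d=1}^{D} u_d^2}\right) = 2 \exp\!\left(-\frac{t^2}{2 \|\vec{u}\|_2^2}\right).
\]
Third, I would solve for the value of $t$ that makes the right-hand side equal to $\delta$. Setting $2 \exp(-t^2/(2\|\vec{u}\|_2^2)) = \delta$ and rearranging gives $t = \|\vec{u}\|_2 \sqrt{2 \ln(2/\delta)} = \|\vec{u}\|_2 \cdot c_4 \sqrt{\ln(2/\delta)}$, using the definition $c_4 = \sqrt{2}$ given just before the lemma. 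Substituting this $t$ into the tail bound yields precisely the inequality stated in the corollary.

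There is essentially no obstacle beyond this routine substitution; the only thing to double-check is the constant $c_4 = \sqrt{2}$ matching the exponent of $2$ in the denominator of the Hoeffding bound, and the fact that Lemma \ref{lem:hoeffding} is stated for a mean-zero sub-Gaussian sum (which applies here since each $u_d z_d$ has mean $0$). No additional machinery beyond Lemma \ref{lem:hoeffding} and the standard fact that $\mathcal{N}(0,1)$ is $1$-sub-Gaussian is required.
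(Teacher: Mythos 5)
Your proposal is correct and follows exactly the route the paper intends: the corollary is simply Lemma~\ref{lem:hoeffding} applied to the $1$-sub-Gaussian variables $z_d$ scaled by $u_d$ (so $\sigma_d = |u_d|$), with $t = \|\vec{u}\|_2\sqrt{2\ln(2/\delta)}$ chosen to make the tail equal $\delta$ and $c_4 = \sqrt{2}$ absorbing the factor of $2$ in the exponent. Nothing further is needed.
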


\section{Proof for Theorem~\ref{thm:example}}
\label{sec:proof}

In this section, we prove the failure of uniform convergence for our linear model. We first recall the setup:

\textbf{Distribution $\mathcal{D}$:} Each input $(\vec{x}_1, \vec{x}_2)$ is a $K+D$ dimensional vector where  $\vec{x}_1 \in \mathbb{R}^K$ and $\vec{x}_2 \in \mathbb{R}^D$. $\vec{u} \in \mathbb{R}^K$ determines the centers of the classes. The label $y$ is drawn uniformly from $\{ -1, +1\}$, and conditioned on $y$, we have $\vec{x}_1 = 2\cdot y \cdot \vec{u}$ while $\vec{x}_2$ is sampled independently from $\mathcal{N}(0,\frac{32}{D}I)$.

\textbf{Learning algorithm $\mathcal{A}$:} We consider a linear classifier with weights $\vec{w} = (\vec{w}_1, \vec{w}_2)$. The output is computed as $h(\vec{x}) = \vec{w}_1 \vec{x}_1 + \vec{w}_2 \vec{x}_2$. Assume the weights are initialized to origin. 
Given $S = \{(\vec{x}^{(1)},y^{(1)}), \hdots, (\vec{x}^{(m)},y^{(m)})  \}$, $\mathcal{A}$ takes a gradient step of learning rate $1$ to maximize $y \cdot h(\vec{x})$ for each $(\vec{x},y) \in S$. Regardless of the batch size, the learned weights would satisfy,  $\vec{w}_1 = 2m \vec{u}$ and $\vec{w}_2 = \sum_i y^{(i)} \vec{x}_2^{(i)}$.

Below, we state the precise theorem statement (where we've used the constants $c_1=1/32$, $c_2 =1/2$ and $c_3 = 3/2$ and $c_4 =\sqrt{2}$):

\textbf{Theorem~\ref{thm:example}}
{\em{
In the setup above, for any $\epsilon,\delta > 0$ and $\delta < 1/4$, let $D$ be sufficiently large that it satisfies
\begin{align*}
D & \geq \frac{1}{c_1} \ln \frac{6m}{\delta}, \numberthis\label{eq:d1}\\
{D} & \geq  {m} \left( \frac{4c_4 c_3}{c_2^2} \right)^2 \ln \frac{6m}{\delta},  \numberthis\label{eq:d2}\\
{D}  & \geq {m}\left(\frac{4c_4 c_3}{c_2^2}\right)^2 \cdot 2{\ln \frac{2}{\epsilon}},  \numberthis\label{eq:d3}
\end{align*}
then we have that  for all $\gamma \geq 0$, for the $\mathcal{L}^{(\gamma)}$ loss, $\epsilon_{\text{unif-alg}}(m,\delta) \geq 1- \epsilon_{\text{gen}}(m,\delta)$. 

Specifically, for $\gamma \in [0,1]$, $\epsilon_{\text{gen}}(m,\delta) \leq \epsilon$, and so $\epsilon_{\text{unif-alg}}(m,\delta) \geq 1- \epsilon$.
%then,  $\epsilon_{\text{gen}} (m,\delta)\leq \epsilon$  and $\epsilon_{\text{unif-alg}} \geq 1- \epsilon$ for both $\mathcal{L}^{(0)}$ and $\mathcal{L}_\gamma$
}}

\begin{proof}
The above follows from Lemma~\ref{lem:gen}, where we upper bound the generalization error, and from Lemma~\ref{lem:unif} where we lower bound uniform convergence.
\end{proof}

We first prove that the above algorithm generalizes well with respect to the losses corresponding to $\gamma \in [0,1]$. First for the training data, we argue that both $\vec{w}_1$ and a small part of the noise vector $\vec{w}_2$ align along the correct direction, while the remaining part of the high-dimensional noise vector are orthogonal to the input; this leads to correct classification of the training set. Then, on the test data, we argue that $\vec{w}_1$ aligns well, while $\vec{w}_2$ contributes very little to the output of the classifier because it is high-dimensional noise. As a result, for most test data, the classification is correct, and hence the test and generalization error are both small.
\\

\begin{lemma}
\label{lem:gen}
In the setup of Section~\ref{sec:setup}, when $\gamma \in [0,1]$, for $\mathcal{L}^{(\gamma)}$, $\epsilon_{\text{gen}} (m,\delta)\leq \epsilon$.
\end{lemma}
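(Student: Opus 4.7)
The plan is to show, via a union bound over the draw of $S \sim \mathcal{D}^m$, that with probability at least $1-\delta$ the learned $h_S$ has (a) empirical loss $\hat{\mathcal{L}}_S(h_S) = 0$ and (b) population loss $\mathcal{L}_{\mathcal{D}}(h_S) \leq \epsilon$; together these give $\mathcal{L}_{\mathcal{D}}(h_S) - \hat{\mathcal{L}}_S(h_S) \leq \epsilon$ and hence $\epsilon_{\text{gen}}(m,\delta) \leq \epsilon$. The structural observation driving both parts is that on \emph{any} sample from $\mathcal{D}$ the classifier decomposes as $y\, h_S(\vec{x}) = y\,\vec{w}_1\cdot\vec{x}_1 + y\,\vec{w}_2\cdot\vec{x}_2 = 4m\|\vec{u}\|_2^2 + y\,\vec{w}_2\cdot\vec{x}_2 = 4 + y\,\vec{w}_2\cdot\vec{x}_2$, so everything reduces to controlling the noise term $y\,\vec{w}_2\cdot\vec{x}_2$ in two regimes.

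\textbf{Bounding the empirical loss.} On a training example $(\vec{x}^{(i)}, y^{(i)}) \in S$, I would expand $y^{(i)}\,\vec{w}_2\cdot\vec{x}_2^{(i)} = \|\vec{x}_2^{(i)}\|_2^2 + \sum_{j\ne i} y^{(i)}y^{(j)}\,\vec{x}_2^{(j)}\cdot\vec{x}_2^{(i)}$. Corollary~\ref{cor:chi} applied to the rescaled coordinates of each $\vec{x}_2^{(i)}$ (with $\vec{x}_2^{(i)} \sim \mathcal{N}(0, (32/D)I)$) pins $\|\vec{x}_2^{(i)}\|_2^2 \in [32 c_2^2, 32 c_3^2]$ with per-index failure $2\exp(-c_1 D)$, so condition (\ref{eq:d1}) plus a union bound over $i$ handles this event at cost $\delta/3$. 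Conditioning on $\vec{x}_2^{(i)}$ and using independence of the remaining noise, the cross term is Gaussian with variance $(32(m-1)/D)\|\vec{x}_2^{(i)}\|_2^2$; Corollary~\ref{cor:gaussian} with a union bound over the $m$ examples bounds its magnitude by $O(\sqrt{(m/D)\ln(m/\delta)})$, which under (\ref{eq:d2}) is $o(1)$ and in particular much less than $3 + 32 c_2^2$. Hence $y^{(i)} h_S(\vec{x}^{(i)}) \geq 4 + 32 c_2^2 - o(1) \geq 1 \geq \gamma$ for every $i$, so $\hat{\mathcal{L}}_S(h_S) = 0$ on this event, whose total failure is at most $2\delta/3$.

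\textbf{Bounding the population loss.} Since $\vec{w}_2 = \sum_i y^{(i)} \vec{x}_2^{(i)}$ is itself an isotropic Gaussian in $\mathbb{R}^D$ with per-coordinate variance $32m/D$, one more application of Corollary~\ref{cor:chi} (again using (\ref{eq:d1})) gives $\|\vec{w}_2\|_2^2 \leq 32 m c_3^2$ with failure at most $\delta/3$. Condition on this event. For a fresh $(\vec{x},y) \sim \mathcal{D}$, the quantity $y\,\vec{w}_2\cdot\vec{x}_2$ is, conditionally on $\vec{w}_2$, Gaussian with standard deviation at most $\sqrt{32/D}\cdot\sqrt{32m}\,c_3$; Corollary~\ref{cor:gaussian} then gives $|\vec{w}_2\cdot\vec{x}_2| \leq 32 c_3 c_4 \sqrt{(m/D)\ln(2/\epsilon)} \leq 3$ with test-side probability at least $1-\epsilon$, precisely because condition (\ref{eq:d3}) is calibrated to make this inequality hold. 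On this good event for the test point, $y\,h_S(\vec{x}) \geq 4 - 3 = 1 \geq \gamma$, so $\mathcal{L}^{(\gamma)}(h_S(\vec{x}),y) = 0$; using $\mathcal{L}^{(\gamma)} \leq 1$ on the complement yields $\mathcal{L}_{\mathcal{D}}(h_S) \leq \epsilon$.

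\textbf{Putting it together and the main obstacle.} A final union bound over the two $S$-side failure events, totalling probability $\delta$, gives the claim. No step is conceptually deep; the only real care is bookkeeping the constants so that the three explicit lower bounds on $D$ in the theorem statement -- (\ref{eq:d1}) for the chi-square concentrations of the $\|\vec{x}_2^{(i)}\|_2^2$ and of $\|\vec{w}_2\|_2^2$, (\ref{eq:d2}) for the $m$-fold Gaussian bound on the training cross terms, and (\ref{eq:d3}) for the Gaussian tail on the test-side noise that produces the $\epsilon$ rate -- each absorb the correct failure budget. I do not anticipate a technical obstacle beyond this concentration bookkeeping.
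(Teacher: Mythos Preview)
Your proposal is correct and follows essentially the same approach as the paper: decompose $y\,h_S(\vec{x})$ into the deterministic signal $4$ plus a noise term, control the training-side noise by combining Corollary~\ref{cor:chi} on each $\|\vec{x}_2^{(i)}\|$ with Corollary~\ref{cor:gaussian} on the cross terms (using (\ref{eq:d1}) and (\ref{eq:d2})), control the test-side noise by first bounding $\|\vec{w}_2\|$ via Corollary~\ref{cor:chi} and then the inner product via Corollary~\ref{cor:gaussian} (using (\ref{eq:d1}) and (\ref{eq:d3})), and union-bound over the $S$-side events. The only differences are cosmetic constant bookkeeping---the paper obtains training margin $\geq 10$ and test margin $\geq 2$ rather than your $\geq 1$, and partitions the $\delta$ budget slightly differently---but the argument is the same.
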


\begin{proof}
The parameters learned by our algorithm satisfies $\vec{w}_1= 2 m\cdot \vec{u}$ and $\vec{w}_2  = \sum y^{(i)} \vec{x}_2^{(i)} \sim \mathcal{N}(0, \frac{8m}{c_2^2 D})$.  

First, we have from Corollary~\ref{cor:chi} that with probability $1-\frac{\delta}{3m}$ over the draws of $\vec{x}_2^{(i)}$, as long as $\frac{\delta}{3m} \geq 2e^{-c_1D}$ (which is given to hold by Equation~\ref{eq:d1}), \begin{equation}c_2 \leq  \frac{1}{2\sqrt{2}}{c_2} \|\vec{x}_2^{(i)}\| \leq c_3.
\label{eq:ce1}
 \end{equation}

Next, for a given $\vec{x}^{(i)}$, we have from Corollary~\ref{cor:gaussian}, with probability $1-\frac{\delta}{3m}$ over the draws of $\sum_{j\neq i} y^{(j)}\vec{x}_2^{(j)}$, \begin{equation}|\vec{x}_2^{(i)} \cdot \sum_{j\neq i} y^{(j)}\vec{x}_2^{(j)}| \leq  c_4 \|\vec{x}_2^{(i)} \| \frac{2\sqrt{2} \cdot \sqrt{m}}{c_2 \sqrt{D}} \sqrt{ \ln \frac{6m}{\delta
}}.
\label{eq:ce2}\end{equation}

Then, with probability $1-\frac{2}{3}\delta$ over the draws of the training dataset we have for all $i$, %, on the training input $\vec{x}_2^{(i)}$, the output of our hypothesis is:

\begin{align*}
y^{(i)} h(\vec{x}^{(i)}) &= y^{(i)} \vec{w}_1 \cdot \vec{x}^{(i)}_1 + y^{(i)} \cdot y^{(i)} \|\vec{x}_2^{(i)} \|^2 + y^{(i)} \cdot \vec{x}_2^{(i)} \cdot \sum_{j\neq i} y^{(j)}\vec{x}_2^{(j)}  \\
&=4 +  \underbrace{\|\vec{x}_2^{(i)} \|^2}_{\text{apply Equation~\ref{eq:ce1}}} +  \underbrace{y^{(i)} \vec{x}_2^{(i)} \cdot \sum_{j\neq i} y^{(j)}\vec{x}_2^{(j)}}_{\text{apply Equation~\ref{eq:ce2}}}  \\
& \geq 4 +4 \cdot 2   -  c_4 \frac{2 \sqrt{2} c_3}{c_2} \cdot \underbrace{\frac{2\sqrt{2} \cdot \sqrt{m}}{c_2 \sqrt{D}} \sqrt{\ln \frac{6m}{\delta}}}_{\text{apply Equation~\ref{eq:d2}}} \\
& \geq 4 + 8 - 2 = 10 > 1. \numberthis\label{eq:train-margin}
\end{align*}

Thus, for all $\gamma \in [0,1]$, the $\mathcal{L}^{(\gamma)}$ loss of this classifier on the training dataset $S$ is zero. 

Now, from Corollary~\ref{cor:chi}, with probability $1-\frac{\delta}{3}$ over the draws of the training data, we also have that, as long as $\frac{\delta}{3m} \geq 2 e^{-c_1 D}$ (which is given to hold by Equation~\ref{eq:d1}), \begin{equation}c_2 \sqrt{m} \leq \frac{1}{2\sqrt{2}} c_2 \|\sum y^{(i)} \vec{x}_2^{(i)}\| \leq c_3\sqrt{m}. \label{eq:ce3} \end{equation}

Next, conditioned on the draw of $S$ and the learned classifier, for any $\epsilon' > 0$, with probability $1-\epsilon'$ over the draws of a test data point, $(\vec{z},y)$, we have from Corollary~\ref{cor:gaussian} that

\begin{equation}|\vec{z}_2 \cdot \sum y^{(i)} \vec{x}_2^{(i)}| \leq c_4 \|\sum y^{(i)} \vec{x}_2^{(i)}\| \cdot \frac{2\sqrt{2}}{c_2 \sqrt{D}}  \cdot  \ln \frac{1}{\epsilon'}.
\label{eq:ce4}
\end{equation}

Using this, we have that with probability $1- 2 \exp \left(- \frac{1}{2}\left({\frac{c_2^2}{4 c_4 c_3}  \sqrt{\frac{D}{m}} }\right)^2\right)$ over the draws of a test data point, $(\vec{z},y)$,
\begin{align*}
 y h(\vec{x})  & =  y \vec{w}_1 \cdot \vec{z}_1 +\underbrace{  y \cdot \vec{z}_2 \cdot \sum_{j} y^{(j)}\vec{x}_2^{(j)} }_{\text{apply Equation~\ref{eq:ce4}} } \\
& \geq  4 -  c_4 \underbrace{\|\sum y^{(i)} \vec{x}_2^{(i)}\| }_{\text{apply Equation~\ref{eq:ce3}}}\cdot \frac{2\sqrt{2}}{c_2 \sqrt{D}} \frac{c_2^2}{4 c_4 c_3} \sqrt{\frac{D}{m}} \\
& \geq 4 - 2 \geq 2. \numberthis\label{eq:test-margin}
\end{align*}

%TODO Perhaps justify how we can modify the setup to include multiple datapoints

Thus, we have that for $\gamma \in [0,1]$, the $\mathcal{L}^{(\gamma)}$ loss of the classifier on the distribution $\mathcal{D}$ is $2\exp \left(-\frac{1}{2}\left({\frac{c_2^2}{4 c_4 c_3} \sqrt{\frac{D}{m}} }\right)^2\right)$ which is at most $\epsilon$ as assumed in Equation~\ref{eq:d3}. In other words, the absolute difference between the distribution loss and the train loss is at most $\epsilon$
 and this holds for at least $1-\delta$ draws of the samples $S$. Then, by the definition of $\epsilon_{\text{gen}}$ we have the result. 

\end{proof}

We next prove our uniform convergence lower bound. The main idea is that when the noise vectors in the training samples are negated, with high probability, the classifier misclassifies the training data. We can then show that for any choice of $\mathcal{S}_{\delta}$ 
as required by the definition of $\epsilon_{\text{unif-alg}}$, we can always find an $S_{\star}$ and its noise-negated version $S_{\star}'$ both of which belong to $\mathcal{S}_{\delta}$. Furthermore, we can show that $h_{S_\star}$ has small test error but high empirical error on $S_{\star}'$, and that this leads to a nearly vacuous uniform convergence bound.

\begin{lemma}
\label{lem:unif}
In the setup of Section~\ref{sec:setup}, for any $\epsilon > 0$ and for any $\delta \leq 1/4$, and for the same lower bounds on $D$,  and for any $\gamma \geq 0$, we have that
\[
\epsilon_{\text{unif-alg}}(m,\delta) \geq 1 - \epsilon_{\text{gen}}(m,\delta)
\]
for the $\mathcal{L}^{(\gamma)}$ loss.
\end{lemma}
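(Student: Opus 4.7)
The strategy is to exhibit, for any fixed $\mathcal{S}_\delta$ of $\mathcal{D}^m$-mass at least $1-\delta$, a single training set $S_\star \in \mathcal{S}_\delta$ whose ``noise-negated companion'' $\sigma(S_\star) := \{((\vec{x}_1, -\vec{x}_2), y) : (\vec{x}, y) \in S_\star\}$ also lies in $\mathcal{S}_\delta$ and is completely misclassified by $h_{S_\star}$. Once such an $S_\star$ exists, I can instantiate the nested sup in Definition~\ref{def:unif-alg} at the pair $(h_{S_\star}, \sigma(S_\star))$ to get $\epsilon_{\text{unif-alg}}(m,\delta) \geq |\mathcal{L}_\mathcal{D}(h_{S_\star}) - \hat{\mathcal{L}}_{\sigma(S_\star)}(h_{S_\star})| = 1 - \mathcal{L}_\mathcal{D}(h_{S_\star})$; combining this with the generalization bound $\mathcal{L}_\mathcal{D}(h_{S_\star}) \leq \hat{\mathcal{L}}_{S_\star}(h_{S_\star}) + \epsilon_{\text{gen}}(m,\delta)$ together with the vanishing of $\hat{\mathcal{L}}_{S_\star}(h_{S_\star})$ will then recover the claim.

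Two sub-claims set this up. First, the map $\sigma$ is a measure-preserving involution under $\mathcal{D}^m$, since each $\vec{x}_2^{(i)} \sim \mathcal{N}(0, \tfrac{32}{D} I)$ is symmetric about the origin; hence $\Pr_{S \sim \mathcal{D}^m}[\sigma(S) \in \mathcal{S}_\delta] = \Pr_{S \sim \mathcal{D}^m}[S \in \mathcal{S}_\delta] \geq 1 - \delta$. Second, I would reuse the same tail bounds (Corollaries~\ref{cor:chi} and \ref{cor:gaussian}) that gave $y^{(i)} h_S(\vec{x}^{(i)}) \geq 10$ in Lemma~\ref{lem:gen}, but applied now to the negated inputs, to obtain
\[
y^{(i)} h_S(\vec{x}_1^{(i)}, -\vec{x}_2^{(i)}) = 4 - \|\vec{x}_2^{(i)}\|_2^2 - y^{(i)} \vec{x}_2^{(i)} \cdot \sum_{j \neq i} y^{(j)} \vec{x}_2^{(j)} \leq 4 - 8 + 2 = -2
\]
simultaneously for all $i$, with probability at least $1 - \tfrac{2\delta}{3}$ over $S$. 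On this event every point of $\sigma(S)$ is classified with margin at most $-2$, so $\mathcal{L}^{(\gamma)}(h_S(\cdot), \cdot) = 1$ on all of $\sigma(S)$ for every $\gamma \geq 0$, i.e., $\hat{\mathcal{L}}_{\sigma(S)}(h_S) = 1$.

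With these in hand, I would union-bound over the four events: (a) $S \in \mathcal{S}_\delta$, (b) $\sigma(S) \in \mathcal{S}_\delta$, (c) $\hat{\mathcal{L}}_{\sigma(S)}(h_S) = 1$, and (d) the generalization event $\mathcal{L}_\mathcal{D}(h_S) - \hat{\mathcal{L}}_S(h_S) \leq \epsilon_{\text{gen}}(m,\delta)$. Each holds with probability at least $1 - \delta$ (the first two by measure-preservation, the third by the above concentration, the fourth by the definition of $\epsilon_{\text{gen}}$), so since $\delta \leq 1/4$ the intersection is nonempty; pick $S_\star$ in it. The chain in the first paragraph then gives $\epsilon_{\text{unif-alg}}(m,\delta) \geq 1 - \hat{\mathcal{L}}_{S_\star}(h_{S_\star}) - \epsilon_{\text{gen}}(m,\delta)$, and since the training margin at every $(\vec{x}^{(i)}, y^{(i)}) \in S_\star$ is at least $10$, $\hat{\mathcal{L}}_{S_\star}(h_{S_\star})$ vanishes for the $\gamma$ under consideration, yielding the desired $\epsilon_{\text{unif-alg}}(m,\delta) \geq 1 - \epsilon_{\text{gen}}(m,\delta)$.

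The main obstacle is the union-bound step: the key that makes it go through is precisely the measure-preservation of $\sigma$, which prevents $\mathcal{S}_\delta$ from simultaneously excluding both $S$ and $\sigma(S)$ using its single $\delta$-mass slack. This is also the conceptual heart of why uniform convergence fails -- the slack cannot be re-allocated per hypothesis. A secondary bookkeeping issue is handling $\gamma$ larger than the training margin, where $\hat{\mathcal{L}}_{S_\star}(h_{S_\star})$ no longer vanishes; one then either carries this additive term through or verifies separately that $\epsilon_{\text{gen}}$ already absorbs it in the regime the theorem targets.
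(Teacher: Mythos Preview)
Your proposal is correct and follows essentially the same approach as the paper: define the noise-negated companion $S'=\sigma(S)$, use symmetry of the Gaussian noise to argue $\sigma$ is measure-preserving under $\mathcal{D}^m$, reuse the concentration from Lemma~\ref{lem:gen} with flipped signs to get $y^{(i)} h_S(\vec{x}_1^{(i)},-\vec{x}_2^{(i)})\le -2$ simultaneously for all $i$, and union-bound the four events to extract $S_\star$. The paper phrases event (d) directly as $\mathcal{L}_{\mathcal{D}}(h_S)\le \epsilon_{\text{gen}}(m,\delta)$ (implicitly using zero training loss), whereas you state the generalization-gap event and then invoke $\hat{\mathcal{L}}_{S_\star}(h_{S_\star})=0$ separately; your formulation is the more careful one, and the large-$\gamma$ bookkeeping issue you flag is indeed glossed over in the paper's own proof as well.
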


\begin{proof}

For any $S$, let $S'$ denote the set of noise-negated samples $S' = \{ ((\vec{x}_1, - \vec{x}_2),u) \; | \;  ((\vec{x}_1,  \vec{x}_2),y) \in S \}$. We first show with high probability $1-2\delta/3$ over the draws of $S$, that the classifier learned on $S$,  misclassifies $S'$ completely. The proof for this is nearly identical to our proof for why the training loss is zero, except for certain sign changes. For any $\vec{x}_{\text{neg}}^{(i)} = (\vec{x}_1^{(i)}, -\vec{x}_2^{(i)})$, we have

\begin{align*}
 y^{(i)} h(\vec{x}_{\text{neg}}^{(i)})& = y^{(i)} \vec{w}_1 \cdot \vec{x}^{(i)}_1 - y^{(i)} \cdot y^{(i)} \|\vec{x}_2^{(i)} \|^2 - y^{(i)} \cdot \vec{x}_2^{(i)} \cdot \sum_{j\neq i} y^{(j)}\vec{x}_2^{(j)}  \\
&=4-  \underbrace{\|\vec{x}_2^{(i)} \|^2}_{\text{apply Equation~\ref{eq:ce1}}} -  \underbrace{y^{(i)} \vec{x}_2^{(i)} \cdot \sum_{j\neq i} y^{(j)}\vec{x}_2^{(j)}}_{\text{apply Equation~\ref{eq:ce2}}}  \\
& \leq  4 - 4 \cdot 2  +  c_4 \frac{2 \sqrt{2} c_3}{c_2} \cdot \underbrace{\frac{2\sqrt{2} \cdot \sqrt{m}}{c_2 \sqrt{D}} \ln \frac{3m}{\delta}}_{\text{apply Equation~\ref{eq:d2}}} \\
& \leq 4 - 8 + 2 = -2 < 0.
\end{align*}

Since the learned hypothesis misclassifies all of $S'$, it has loss of $1$ on $S'$. % according to the $\mathcal{L}^{(\gamma)}$ loss.

 Now recall that, by definition, to compute $\epsilon_{\text{unif-alg}}$, one has to pick a sample set space $\mathcal{S}_{\delta}$ of mass $1-\delta$ i.e., $Pr_{S \sim \mathcal{S}^m}[S \in \mathcal{S}_{\delta}] \geq 1-\delta$. We first argue that for {\em any} choice of $\mathcal{S}_{\delta}$, there must exist a `bad' $S_\star$ such that (i) $S_\star \in \mathcal{S}_{\delta}$, (ii) $S_\star' \in \mathcal{S}_{\delta}$, (iii) $h_{S_\star}$ has test error less than $\epsilon_{\text{gen}}(m,\delta)$ and (iv) $h_{S_\star}$ completely misclassifies $S_{\star}'$. 

 We show the existence of such an $S_\star$, by arguing that over the draws of $S$, there is non-zero probability of picking an $S$ that satisfies all the above conditions. Specifically, we have by the union bound that
\begin{align*}
& Pr_{S \sim \mathcal{D}^m} \big[ S \in \mathcal{S}_{\delta}, S' \in \mathcal{S}_{\delta}, \mathcal{L}_{\mathcal{D}}(h_S) \leq \epsilon_{\text{gen}}(m,\delta), \hat{\mathcal{L}}_{S'}(h_S) =1\big]   \\
 &\geq 1
- Pr_{S \sim \mathcal{D}^m}\left[   S \notin \mathcal{S}_{\delta}\right] -  Pr_{S \sim \mathcal{D}^m}\left[   S' \notin \mathcal{S}_{\delta}\right]\\
&- Pr_{S \sim \mathcal{D}^m}\left[  \mathcal{L}_{\mathcal{D}}(h_S) > \epsilon_{\text{gen}}(m,\delta)\right] 
  - Pr_{S \sim \mathcal{D}^m}\left[   \hat{\mathcal{L}}_{S'}(h_S) \neq 1\right]. \numberthis \label{eq:probability-term}
\end{align*}

By definition of $\mathcal{S}_{\delta}$, we know $Pr_{S \sim \mathcal{D}^m}\left[   S \notin \mathcal{S}_{\delta}\right] \leq \delta$. Similarly, by definition of the generalization error, we know that $Pr_{S \sim \mathcal{D}^m}\left[  \mathcal{L}_{\mathcal{D}}(h_S) > \epsilon_{\text{gen}}(m,\delta)\right]  \leq \delta$. We have also established above that  $Pr_{S \sim \mathcal{D}^m}\left[   \hat{\mathcal{L}}_{S'}(h_S) \neq 1\right]  \leq 2\delta/3$. As for the term $Pr_{S \sim \mathcal{D}^m}\left[   S' \notin \mathcal{S}_{\delta}\right]$, observe that under the draws of $S$, the distribution of the noise-negated dataset $S'$ is identical to $\mathcal{D}^m$. This is because the isotropic Gaussian noise vectors have the same distribution under negation. Hence, again by definition of $\mathcal{S}_{\delta}$, even this probability is at most $\delta$. Thus, we have that the probability in the left hand side of Equation~\ref{eq:probability-term} is at least $1-4\delta$, which is positive as long as $\delta < 1/4$.

This implies that for any given choice of $\mathcal{S}_{\delta}$, there exists $S_{\star}$ that satisfies our requirement.  Then, from the definition of $\epsilon_{\text{unif-alg}}(m,\delta)$, we essentially have that,
\begin{align*}
\epsilon_{\text{unif-alg}}(m,\delta) &=  \sup_{S \in \mathcal{S}_{\delta}} \sup_{h \in \mathcal{H}_{\delta}} |{\mathcal{L}}_{\mathcal{D}}(h)- \hat{\mathcal{L}}_{S}(h)| \\
& \geq  |{\mathcal{L}}_{\mathcal{D}}(h_{S_\star})- \hat{\mathcal{L}}_{S_\star'}(h)|= |\epsilon-1| = 1-\epsilon.
\end{align*}

\end{proof}
 \section{Neural Network with Exponential Activations}
\label{sec:exp}
In this section, we prove the failure of uniform convergence for a neural network model with exponential activations. We first define the setup. % from the main paper below.

\subparagraph{Distribution} Let $\vec{u}$ be an arbitrary vector in $D$ dimensional space such that $\| \vec{u}\| = \sqrt{D}/2$. 
Consider an input distribution in $2D$ dimensional space such that, conditioned on the label $y$ drawn from uniform distribution over $\{-1,+1\}$, the first $D$ dimensions $\vec{x}_1$ of a random point is given by $y \vec{u}$ and the remaining $D$ dimensions are drawn from $\mathcal{N}(0,1)$. Note that in this section, we require $D$ to be only as large as $\ln m$, and not as large as $m$.

\subparagraph{Architecture.} We consider an infinite width neural network with exponential activations, in which only the output layer weights are trainable. The hidden layer weights are frozen as initialized. Note that this is effectively a linear model with infinitely many randomized features. Indeed, recent work \citep{jacot18ntk} has shown that under some conditions on how deep networks are initialized and parameterized, they behave a linear models on randomized features.
Specifically, each hidden unit corresponds to a distinct (frozen) weight vector $\vec{w} \in \mathbb{R}^{2D}$ and an output weight $a_{\vec{w}}$ that is trainable. %We assume $a_{\vec{w}}$ are initialized to zero. 
%To simplify our analysis, we assume that the hidden layer weights are distributed uniformly on a $K+D$ hypersphere of radius $1$. 
We assume that the hidden layer weights are drawn from $\mathcal{N}(0,I)$ and $a_{\vec{w}}$ initialized to zero. 
Note that the output of the network is determined as

\[
h(\vec{x}) = \mathbb{E}_{\vec{w}}[a_{\vec{w}}\exp(\vec{w}\cdot \vec{x})].
\]

\subparagraph{Algorithm} We consider an algorithm that takes a gradient descent step to maximize $y\cdot h(\vec{x})$ for each $(\vec{x},y)$ in the training dataset, with learning rate $\eta$. However, since, the function above is not a discrete sum of its hidden unit outputs, to define the gradient update on $a_{\vec{w}}$, we must think of $h$  as a functional whose input function maps every $\vec{w} \in \mathbb{R}^{2D}$ to $a_{\vec{w}} \in \mathbb{R}$. Then, by considering the functional derivative, one can conclude that the update on $a_{\vec{w}}$ can be written as

\begin{equation}
\label{eq:gradient-step}
a_{\vec{w}} \gets a_{\vec{w}} + \eta y \cdot \exp(\vec{w}\cdot \vec{x}) \cdot p(\vec{w}).
\end{equation}

where $p(\vec{w})$ equals the p.d.f of $\vec{w}$ under the distribution it is drawn from. In this case $p(\vec{w}) = \frac{1}{(2\pi)^D} \exp\left( -\frac{\| \vec{w} \|^2}{2} \right)$.

In order to simplify our calculations we will set $\eta = (4\pi)^D$, although our analysis would extend to other values of the learning rate too. Similarly, our results would only differ by constants if we consider the alternative update rule, $a_{\vec{w}} \gets a_{\vec{w}} + \eta y \cdot \exp(\vec{w}\cdot \vec{x})$.

We now state our main theorem (in terms of constants $c_1, c_2, c_3, c_4$ defined in Section~\ref{sec:lemma}).\\

\begin{theorem}
\label{thm:exp}
In the set up above, for any $\epsilon,\delta > 0$ and $\delta < 1/4$, let $D$ and $m$ be sufficiently large that it satisfies
\begin{align*}
D & \geq  \max \left(\frac{1}{c_2}, (16c_3 c_4)^2 \right) \cdot 2 \ln \frac{6m}{\epsilon} \numberthis\label{eq:d4}\\
{D} & \geq   \max \left(\frac{1}{c_2}, (16c_3 c_4)^2 \right) \cdot 2 \ln \frac{6m}{\delta}   \numberthis\label{eq:d5}\\
{D}  & \geq 6 \ln 2m \numberthis\label{eq:d6} \\
{m} & >  \max  8 \ln \frac{6}{\delta}.  \numberthis\label{eq:m}
\end{align*}
then we have that  for all $\gamma \geq 0$, for the $\mathcal{L}^{(\gamma)}$ loss, $\epsilon_{\text{unif-alg}}(m,\delta) \geq 1- \epsilon_{\text{gen}}(m,\delta)$. 

Specifically, for $\gamma \in [0,1]$, $\epsilon_{\text{gen}}(m,\delta) \leq \epsilon$, and so $\epsilon_{\text{unif-alg}}(m,\delta) \geq 1- \epsilon$.
\end{theorem}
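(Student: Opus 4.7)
The plan is to mirror the two-lemma structure used for Theorem~\ref{thm:example}: first derive a closed-form expression for $h_S$, then use it to establish $\epsilon_{\text{gen}}(m,\delta) \leq \epsilon$, and finally exhibit a distributionally equivalent ``bad'' dataset $S'$ that is completely misclassified in order to lower bound $\epsilon_{\text{unif-alg}}$.

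First I would compute $h_S$ explicitly. Since $a_{\vec{w}}$ starts at zero and each update~(\ref{eq:gradient-step}) adds $\eta y^{(i)} \exp(\vec{w}\cdot \vec{x}^{(i)}) p(\vec{w})$, the trained output collapses to a Gaussian moment-generating integral because $p(\vec{w})^2$ is proportional to the density of $\mathcal{N}(0, \tfrac{1}{2}I)$. With the chosen $\eta = (4\pi)^D$, all prefactors cancel and
\[
h_S(\vec{x}) = \sum_{i=1}^m y^{(i)} \exp\!\left(\tfrac{1}{4}\|\vec{x}^{(i)}+\vec{x}\|^2\right).
\]
Using $\vec{x}_1 = y\vec{u}$, $\vec{x}_1^{(i)} = y^{(i)}\vec{u}$ and $\|\vec{u}\|^2 = D/4$, the first-block part of the exponent is $D$ when $y^{(i)} = y$ and $0$ otherwise, giving same-label training points an $\exp(D/4)$ advantage. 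For the generalization lemma, applying Corollary~\ref{cor:chi} to each $\|\vec{x}_2 + \vec{x}_2^{(i)}\|^2$ (fresh test $\vec{x}_2$ independent of $S$) simultaneously in $i$ shows that each same-label contribution dominates each opposite-label one by a factor of $\exp(D/4)$; under~(\ref{eq:d4}) and~(\ref{eq:d6}) this swamps the $m$-fold sum and the concentration spread, so $y h_S(\vec{x}) \geq 1$ except on an $\epsilon$-fraction of draws. This gives $\epsilon_{\text{gen}}(m,\delta) \leq \epsilon$ for every $\gamma \in [0,1]$.

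For the lower bound on $\epsilon_{\text{unif-alg}}$ I would use the bad dataset $S' := \{((-\vec{x}_1^{(j)}, \vec{x}_2^{(j)}), -y^{(j)})\}_{j=1}^m$, which flips the signal block together with the label, so the consistency $\vec{x}_1 = y\vec{u}$ still holds; uniformity of $y^{(j)}$ and symmetry of the Gaussian noise then give $S' \sim \mathcal{D}^m$. Evaluating $h_S$ on the $j$-th point of $S'$ swaps which training points share the label, but, crucially, the $i=j$ term now contributes $\exp(\|\vec{x}_2^{(j)}\|^2) \approx \exp(c_2^2 D)$ on the \emph{wrong} side. A $\chi^2$ concentration argument on $\|\vec{x}_2^{(j)}\|^2$ and on the cross terms $\|\vec{x}_2^{(i)} \pm \vec{x}_2^{(j)}\|^2$, union-bounded over $i,j$ (which is exactly what conditions~(\ref{eq:d5})--(\ref{eq:m}) are calibrated for), shows this single diagonal of size $\approx \exp(D)$ beats all other $\mathcal{O}(m\exp(3D/4))$ terms and forces $h_S$ to misclassify every point of $S'$ with probability at least $1 - 2\delta/3$. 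With this in hand the closing argument is verbatim Lemma~\ref{lem:unif}: for any $\mathcal{S}_\delta$ of $\mathcal{D}^m$-mass $\geq 1-\delta$, the four events $\{S \in \mathcal{S}_\delta\}$, $\{S' \in \mathcal{S}_\delta\}$, $\{\mathcal{L}_\mathcal{D}(h_S) \leq \epsilon_{\text{gen}}\}$, $\{\hat{\mathcal{L}}_{S'}(h_S) = 1\}$ co-occur with probability at least $1 - 4\delta > 0$, producing a witness $S_\star$ and hence $\epsilon_{\text{unif-alg}}(m,\delta) \geq |\mathcal{L}_\mathcal{D}(h_{S_\star}) - \hat{\mathcal{L}}_{S'_\star}(h_{S_\star})| \geq 1 - \epsilon_{\text{gen}}(m,\delta)$.

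The main obstacle is identifying the right bad-dataset transformation. Because exponential activations make $h_S$ a \emph{positive} weighted sum of label signs, the linear-model trick of negating only the noise block is not available; one needs a transformation that simultaneously (i) preserves the law $\mathcal{D}^m$ and (ii) repositions the enormous self-term $\exp(\|\vec{x}_2^{(j)}\|^2) \approx \exp(D)$ into the misclassifying sum. Jointly flipping the signal block $\vec{x}_1$ and the label is exactly that transformation, and once it is fixed the remaining work -- tracking the $\exp(D)$ vs.\ $\mathcal{O}(m\exp(3D/4))$ comparison -- is routine sub-exponential concentration of the same flavor as in Section~\ref{sec:proof}, which is why conditions~(\ref{eq:d4})--(\ref{eq:m}) only need $D = \Omega(\log m)$ here instead of the $D = \Omega(m)$ required by the linear example.
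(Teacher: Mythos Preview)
Your proposal is correct and matches the paper's proof almost exactly: the paper derives the same closed form $h_S(\vec{x})=\sum_i y^{(i)}\exp(\tfrac14\|\vec{x}^{(i)}+\vec{x}\|^2)$ (Lemma~\ref{lem:exp-gradient}), bounds $\epsilon_{\text{gen}}$ via concentration (Lemma~\ref{lem:exp-gen}), and uses precisely your bad dataset $S'=\{((-\vec{x}_1,\vec{x}_2),-y)\}$ so that the self-term $\exp(\|\vec{x}_2^{(j)}\|^2)$ lands on the wrong side (Lemma~\ref{lem:exp-unif-conv}), finishing with the Lemma~\ref{lem:unif} union-bound argument verbatim. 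The only cosmetic difference is that the paper controls the exponents by separately bounding the norms $\|\vec{x}_2^{(i)}\|$ and inner products $\vec{x}_2^{(i)}\cdot\vec{x}_2^{(j)}$ (Corollaries~\ref{cor:chi},~\ref{cor:gaussian}) rather than applying $\chi^2$ concentration directly to $\|\vec{x}_2+\vec{x}_2^{(i)}\|^2$; this decomposition makes the union bound over all pairs $(i,j)$ cleaner, but the two routes are equivalent.
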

%then,  $\epsilon_{\text{gen}} (m,\delta)\leq \epsilon$  and $\epsilon_{\text{unif-alg}} \geq 1- \epsilon$ for both $\mathcal{L}^{(0)}$ and $\mathcal{L}_\gamma$

\begin{proof}
The result follows from the following lemmas. First in Lemma~\ref{lem:exp-gradient}, we derive the closed form expression for the function computed by the learned network. In Lemma~\ref{lem:exp-gen}, we upper bound the generalization error and in Lemma~\ref{lem:exp-unif-conv}, we lower bound uniform convergence.
\end{proof}

% Also write an example where input dimension doesn't matter.

We first derive a closed form expression for how the output of the network changes under a gradient descent step on a particular datapoint.\\

\begin{lemma}
\label{lem:exp-gradient}
Let $h^{(0)}(\cdot)$ denote the function computed by the network before updating the weights. After updating the weights on a particular input $(\vec{x},y)$ according to Equation~\ref{eq:gradient-step}, the learned network corresponds to:
\[
h(\vec{z}) = h^{(0)}(\vec{z}) +   y \exp\left( \left\|\frac{\vec{z}+\vec{x}}{2}\right\|^2\right).
\]

\end{lemma}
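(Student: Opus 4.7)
\textbf{Proof plan for Lemma~\ref{lem:exp-gradient}.}

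The plan is a direct Gaussian integral calculation. First, I would substitute the update rule from Equation~\ref{eq:gradient-step} into the expression for the network output. Since the expectation $\mathbb{E}_{\vec{w}}[\cdot]$ is by definition integration against $p(\vec{w})$, the updated output on an arbitrary test input $\vec{z}$ can be written as
\begin{align*}
h(\vec{z}) &= \mathbb{E}_{\vec{w}}\bigl[(a_{\vec{w}} + \eta y\,\exp(\vec{w}\cdot\vec{x})\,p(\vec{w}))\exp(\vec{w}\cdot\vec{z})\bigr]\\
&= h^{(0)}(\vec{z}) + \eta\, y \int p(\vec{w})^{2} \exp\bigl(\vec{w}\cdot(\vec{x}+\vec{z})\bigr)\,d\vec{w}.
\end{align*}
So the entire lemma reduces to evaluating the integral on the right-hand side in closed form.

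Next, I would plug in $p(\vec{w})=\frac{1}{(2\pi)^{D}}\exp(-\tfrac{1}{2}\|\vec{w}\|^{2})$ (recall $\vec{w}\in\mathbb{R}^{2D}$, so the normalizer is $(2\pi)^{D}$), giving $p(\vec{w})^{2}=\frac{1}{(2\pi)^{2D}}\exp(-\|\vec{w}\|^{2})$. Letting $\vec{v}:=\vec{x}+\vec{z}$ and completing the square,
\begin{equation*}
-\|\vec{w}\|^{2} + \vec{w}\cdot\vec{v} = -\bigl\|\vec{w}-\tfrac{\vec{v}}{2}\bigr\|^{2} + \bigl\|\tfrac{\vec{v}}{2}\bigr\|^{2},
\end{equation*}
so the integrand becomes $\frac{1}{(2\pi)^{2D}}\exp(\|\vec{v}/2\|^{2})\exp(-\|\vec{w}-\vec{v}/2\|^{2})$. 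The remaining Gaussian integral in $\mathbb{R}^{2D}$ evaluates to $\pi^{D}$ (since $\int_{-\infty}^{\infty} e^{-u^{2}}\,du=\sqrt{\pi}$ in each of $2D$ coordinates), yielding
\begin{equation*}
\int p(\vec{w})^{2}\exp(\vec{w}\cdot\vec{v})\,d\vec{w} \;=\; \frac{\pi^{D}}{(2\pi)^{2D}}\exp\!\Bigl(\Bigl\|\tfrac{\vec{x}+\vec{z}}{2}\Bigr\|^{2}\Bigr) \;=\; \frac{1}{(4\pi)^{D}}\exp\!\Bigl(\Bigl\|\tfrac{\vec{x}+\vec{z}}{2}\Bigr\|^{2}\Bigr).
\end{equation*}

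Finally, I would multiply by the chosen learning rate $\eta=(4\pi)^{D}$, which exactly cancels the $(4\pi)^{-D}$ normalizer and leaves
\begin{equation*}
h(\vec{z}) \;=\; h^{(0)}(\vec{z}) + y\,\exp\!\Bigl(\Bigl\|\tfrac{\vec{x}+\vec{z}}{2}\Bigr\|^{2}\Bigr),
\end{equation*}
as claimed. There is no real obstacle here; the only thing to be careful about is bookkeeping the powers of $2\pi$ in $2D$ dimensions and verifying that the specific choice $\eta=(4\pi)^{D}$ is what produces the clean form in the statement (any other $\eta$ would simply scale the second term by a constant). This is why the authors chose that particular learning rate.
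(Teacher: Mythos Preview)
Your proposal is correct and is essentially identical to the paper's own proof: both substitute the update rule, complete the square in the exponent, evaluate the resulting Gaussian integral, and cancel the normalizer against $\eta=(4\pi)^{D}$. The only cosmetic difference is that the paper massages the integral into a $\mathcal{N}(\tfrac{\vec{z}+\vec{x}}{2},\tfrac{1}{2}I)$ density integrating to $1$, whereas you compute $\int e^{-\|\vec{w}-\vec{v}/2\|^2}d\vec{w}=\pi^{D}$ directly.
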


\begin{proof}
 From equation~\ref{eq:gradient-step}, we have that

 \begin{align*}
 &\frac{h(\vec{z}) -h^{(0)}(\vec{z})}{\eta} \\ &= \int_{\vec{w}} \left( y \cdot \exp(\vec{w}\cdot \vec{x})p(\vec{w})\right) \cdot \exp(\vec{w}\cdot \vec{z})p(\vec{w}) d\vec{w} \\
 & = y \int_{\vec{w}} \exp(\vec{w}\cdot (\vec{x}+\vec{z})) \cdot \left( {\frac{1}{2\pi}}\right)^{2D} \exp(-\|\vec{w}\|^2) d\vec{w}  \\
 & = y \left( {\frac{1}{2\pi}}\right)^{2D}  \int_{\vec{w}} \exp(\vec{w}\cdot (\vec{x}+\vec{z})-\|\vec{w}\|^2) d\vec{w}  \\
 & = y \left( {\frac{1}{2\pi}}\right)^{2D} \exp\left( \left\|\frac{\vec{z}+\vec{x}}{2}\right\|^2\right) \times \int_{\vec{w}} \exp\left(-\left\|\vec{w} - \frac{\vec{z}+\vec{x}}{2}\right\|^2\right) d\vec{w}  \\
 & = y \left( {\frac{1}{4\pi}}\right)^{D} \exp\left( \left\|\frac{\vec{z}+\vec{x}}{2}\right\|^2\right) \times   \left( \frac{1}{\sqrt{2\pi (0.5)}}\right)^{2D} \int_{\vec{w}} \exp\left(-\left\|\vec{w} - \frac{\vec{z}+\vec{x}}{2}\right\|^2\right) d\vec{w}  \\
  & = y \left( {\frac{1}{4\pi}}\right)^{D} \exp\left( \left\|\frac{\vec{z}+\vec{x}}{2}\right\|^2\right).   \\
 \end{align*}

 In the last equality above, we make use of the fact that the second term corresponds to the integral of the p.d.f of $\mathcal{N}(\frac{\vec{z}+\vec{x}}{2}, 0.5 I)$ over $\mathbb{R}^{2D}$.  Since we set $\eta = (4\pi)^D$ gives us the final answer.
\end{proof}

Next, we argue that the generalization error of the algorithm is small. From Lemma~\ref{lem:exp-gradient}, we have that the output of the network is essentially determined by a summation of contributions from every training point.
To show that the training error is zero, we argue that on any training point, the contribution from that training point dominates all other contributions, thus leading to correct classification.  On any test point, we similarly show that the contribution of training points of the same class as that test point dominates the output of the network. Note that our result requires $D$ to scale only logarithmically with training samples $m$. \\

\begin{lemma}
\label{lem:exp-gen}
In the setup of Section~\ref{sec:setup}, when $\gamma \in [0,1]$, for $\mathcal{L}^{(\gamma)}$, $\epsilon_{\text{gen}} (m,\delta)\leq \epsilon$.
\end{lemma}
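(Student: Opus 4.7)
My plan is to first iterate Lemma~\ref{lem:exp-gradient} over the $m$ training steps. Since $a_{\vec w}$ is initialized to zero and each step simply adds a new data-dependent functional increment, the closed-form output is
\[
h_S(\vec z) = \sum_{i=1}^m y^{(i)} \exp\!\left(\left\|\frac{\vec z + \vec x^{(i)}}{2}\right\|^2\right).
\]
The key structural observation is that for two points $(\vec a,y_a),(\vec b,y_b)$ drawn from $\mathcal D$, the exponent splits into a ``signal'' part $\|(\vec a_1+\vec b_1)/2\|^2$, which equals $\|\vec u\|^2 = D/4$ when $y_a = y_b$ and $0$ otherwise, plus a ``noise'' part $\|(\vec a_2+\vec b_2)/2\|^2$ that is purely Gaussian. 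So every same-label summand is boosted by a factor $\exp(D/4)$ over every opposite-label summand. The rest of the proof just ensures the Gaussian noise exponent is concentrated tightly enough that this signal gap dominates.

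For the training error, I apply Corollary~\ref{cor:chi} with a union bound over the $\binom{m}{2}+m$ relevant quadratic forms to obtain, with probability at least $1-\delta/2$ over $S$, that $\|\vec x_2^{(i)}\|^2 \in [c_2^2 D, c_3^2 D]$ for every $i$ and $\|(\vec x_2^{(i)}+\vec x_2^{(j)})/2\|^2 \in [c_2^2 D/2, c_3^2 D/2]$ for every $i\neq j$ (using that $(\vec x_2^{(i)}+\vec x_2^{(j)})/\sqrt 2$ is standard normal). Under these events, the self-contribution to $y^{(j)}h_S(\vec x^{(j)})$ is at least $\exp(D/4 + c_2^2 D)$, while the total absolute contribution of the other $m-1$ terms is at most $(m-1)\bigl[\exp(D/4+c_3^2 D/2)+\exp(c_3^2 D/2)\bigr]$. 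The ratio $\exp(D(c_2^2-c_3^2/2))$ beats $2m$ once $D \gtrsim \ln m$, which is guaranteed by Equation~\ref{eq:d6}, so $y^{(j)}h_S(\vec x^{(j)})\ge 1$ for every $j$ and thus $\hat{\mathcal L}^{(\gamma)}_S(h_S)=0$ for all $\gamma\in[0,1]$.

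For the test error I condition on such a ``good'' $S$ together with the Hoeffding event (via Equation~\ref{eq:m}) that both labels occur in $S$, which hold jointly with probability $\ge 1-\delta$. Drawing $(\vec z, y_z)$ independently, I cannot feed the shifted quantity $\|(\vec z_2+\vec x_2^{(i)})/2\|^2$ directly into Corollary~\ref{cor:chi}, so I expand
\[
\left\|\frac{\vec z_2+\vec x_2^{(i)}}{2}\right\|^2 = \tfrac14\|\vec z_2\|^2 + \tfrac12\vec z_2\!\cdot\!\vec x_2^{(i)} + \tfrac14\|\vec x_2^{(i)}\|^2,
\]
control $\|\vec z_2\|^2$ by Corollary~\ref{cor:chi}, and the cross term by Corollary~\ref{cor:gaussian} (giving $|\vec z_2\!\cdot\!\vec x_2^{(i)}|\le c_4\|\vec x_2^{(i)}\|\sqrt{\ln(m/\epsilon)}$), union-bounding over the $m$ training indices at confidence $1-\epsilon$. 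This yields $\|(\vec z_2+\vec x_2^{(i)})/2\|^2 \in [c_2^2 D/2 - r,\, c_3^2 D/2 + r]$ simultaneously for all $i$, with $r = O(\sqrt{D\ln(m/\epsilon)})$. Picking any same-label index (which exists by the Hoeffding event) gives $y_z h_S(\vec z) \ge \exp(D/4 + c_2^2 D/2 - r) - m\exp(c_3^2 D/2 + r)$, which exceeds $1$ under hypotheses (\ref{eq:d4})-(\ref{eq:d5}). Combining, $\mathcal L^{(\gamma)}_{\mathcal D}(h_S)\le\epsilon$ while $\hat{\mathcal L}^{(\gamma)}_S(h_S)=0$, so Definition~\ref{def:gen-error} gives $\epsilon_{\text{gen}}(m,\delta)\le\epsilon$.

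The main obstacle is exactly this non-central concentration in the last step: the noise exponent on a test point involves a shift $\vec x_2^{(i)}$ drawn from the training set, so the randomness of $\vec z_2$ and of $S$ must be handled separately, absorbing the extra $O(\sqrt{D\ln(m/\epsilon)})$ slack into the exponent without destroying the $\exp(D/4)$ signal margin. The lower bounds on $D$ in (\ref{eq:d4})-(\ref{eq:d6}) are calibrated for precisely this slack, after which the remaining arithmetic -- in particular the step where the dominant same-label summand beats the sum of the $m-1$ noise terms -- is routine bookkeeping modeled on the proof of Lemma~\ref{lem:gen}.
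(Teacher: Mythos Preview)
Your proposal is correct and follows essentially the same strategy as the paper's proof: derive the closed form from Lemma~\ref{lem:exp-gradient}, split each exponent into the deterministic ``signal'' part $\|\vec u\|^2=D/4$ (present only for same-label pairs) and a Gaussian ``noise'' part, concentrate the noise, and show the signal gap $\exp(D/4)$ dominates the $m$ noise terms under (\ref{eq:d4})--(\ref{eq:d6}).

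The one noteworthy difference is in the training-error step. The paper expands $\|\vec x_2^{(i)}+\vec x_2^{(j)}\|^2$ into $\|\vec x_2^{(i)}\|^2+\|\vec x_2^{(j)}\|^2+2\vec x_2^{(i)}\!\cdot\!\vec x_2^{(j)}$ and controls the cross term via Corollary~\ref{cor:gaussian}; you instead observe that $(\vec x_2^{(i)}+\vec x_2^{(j)})/\sqrt2$ is itself standard Gaussian and apply Corollary~\ref{cor:chi} directly to the sum. Your route is slightly cleaner and avoids introducing the cross-term slack, at the cost of subtracting same-label contributions in absolute value (which the paper does not need to do, since those are positive). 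Both give enough margin under (\ref{eq:d6}). For the test point you correctly note that this shortcut is unavailable (since $\vec x_2^{(i)}$ is already fixed) and revert to exactly the paper's expansion with Corollary~\ref{cor:gaussian} on the cross term---so that half of the argument is identical.
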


\begin{proof}
We first establish a few facts that hold with high probability over the draws of the training set $S$. First, from Corollary~\ref{cor:chi} we have that, since $D \geq \frac{1}{c_2} \ln \frac{3m}{\delta}$ (from Equation~\ref{eq:d4}), with probability at least $1-\delta/3$ over the draws of $S$, for all $i$, the noisy part of each training input can be bounded as

\begin{align*}
c_2 \sqrt{D} \leq \| \vec{x}_2^{(i)}\| \leq c_3 \sqrt{D}. \numberthis\label{eq:exp-norm-bound}
\end{align*}

Next, from Corollary~\ref{cor:gaussian}, we have that with probability at least $1-\frac{\delta}{3m^2}$ over the draws of $\vec{x}_2^{(i)}$ and $\vec{x}_2^{(j)}$ for  $i \neq j$,

\begin{align*}
|\vec{x}_2^{(i)} \cdot \vec{x}_2^{(j)}| \leq \| \vec{x}_2^{(i)} \| \cdot c_4 \sqrt{2\ln \frac{6m}{\delta}}.   \numberthis\label{eq:exp-dot-prod-bound}
\end{align*} 

Then, by a union bound, the above two equations hold for all $i\neq j$ with probability at least $1-\delta/2$. 

Next, since each $y^{(i)}$ is essentially an independent sub-Gaussian with mean $0$ and sub-Gaussian parameter $\sigma = 1$, we can apply Hoeffding's bound (Lemma~\ref{lem:hoeffding}) to conclude that with probability at least $1-\delta/3$ over the draws of $S$,
\begin{align*}
\left|\sum_{j=1}^{m} y^{(j)}\right| \leq \underbrace{\sqrt{2m\ln \frac{6}{\delta} }}_{Eq~\ref{eq:m}} < \frac{m}{2}  \numberthis\label{eq:class-count}.
\end{align*}

Note that this means that there must exist at least one training data in each class.

%In other words, this means that the number of training data in any particular class deviates from $m/2$ by at most $\sqrt{2m\ln \frac{6}{\delta}}$.

Given these facts, we first show that the training error is zero by showing that for all $i$, $y^{(i)} h(\vec{x}^{(i)})$ is sufficiently large. On any training input $(\vec{x}^{(i)},y^{(i)})$, using Lemma~\ref{lem:exp-gradient}, we can write

\begin{align*}
{y^{(i)} h(\vec{x}^{(i)})} =  & \exp \left( \left\| \vec{x}^{(i)}  \right\|^2 \right) +\sum_{j \neq i} y^{(i)}y^{(j)} \exp \left( \left\| \frac{\vec{x}^{(i)} + \vec{x}^{(j)}}{2} \right\|^2 \right)\\
\geq& \exp \left( \left\| \vec{x}^{(i)}  \right\|^2 \right) 
- \sum_{\substack{j \neq i \\ y^{(i)} \neq y^{(j)}} }  \exp \left( \left\| \frac{\vec{x}^{(i)} + \vec{x}^{(j)}}{2} \right\|^2 \right) \\
\geq& \exp \left( \left\| \vec{x}^{(i)}  \right\|^2 \right) \times  \left(1 - \sum_{\substack{j \neq i \\ y^{(i)} \neq y^{(j)}} } \exp \left(  \frac{\|\vec{x}^{(i)} + \vec{x}^{(j)}\|^2 - 4 \| \vec{x}^{(i)}\|^2}{4} \right) \right).
\end{align*}

% +\sum_{\substack{j \neq i \\ y^{(i)} = y^{(j)}} }  \exp \left( \left\| \frac{\vec{x}^{(i)} + \vec{x}^{(j)}}{2} \right\|^2 \right)\\

Now, for any  $j$ such that $y^{(j)} \neq y^{(i)}$, we have that % from  and Equation~\ref{eq:exp-dot-prod-bound} that:

\begin{align*}
\|\vec{x}^{(i)}  + \vec{x}^{(j)}\|^2 - 4\| \vec{x}^{(i)} \|^2 &=  -3\|\vec{x}^{(i)} \|^2 + \|\vec{x}_1^{(j)} \|^2+ 2 \vec{x}_1^{(i)} \cdot \vec{x}_1^{(j)} \\
& \; \; \; + \underbrace{\|\vec{x}_2^{(j)}\|^2}_{Eq~\ref{eq:exp-norm-bound}}  + \underbrace{2 \vec{x}_2^{(i)} \cdot \vec{x}_2^{(j)}}_{Eq~\ref{eq:exp-dot-prod-bound}} \\
& \leq  -3\|\vec{u}\|^2 - 3\underbrace{\| \vec{x}_2^{(i)}\|^2}_{Eq~\ref{eq:exp-norm-bound}}  + \| \vec{u}\|^2 - 2\|\vec{u}\|^2  +c_3^2 D + \underbrace{\| \vec{x}_2^{(i)}\|}_{Eq~\ref{eq:exp-norm-bound}} \cdot 2 c_4\sqrt{2 \ln \frac{6m}{\delta}} \\
& \leq  -4 \| \vec{u} \|^2 -3c_2^2D +c_3^2 D + \underbrace{\sqrt{D} \cdot c_3 c_4\sqrt{2 \ln \frac{6m}{\delta}}}_{Eq~\ref{eq:d5}} \\
& \leq  -1 - \frac{45}{16}D + \frac{17}{16} D + \frac{1}{16}D  = -\frac{43}{16}D.\\
\end{align*}

Plugging this back in the previous equation we have that 

\begin{align*}
{y^{(i)} h(\vec{x}^{(i)})} \geq  & \geq \exp \left( \underbrace{\left\| \vec{x}^{(i)}  \right\|^2}_{Eq~\ref{eq:exp-norm-bound}} \right) \left(1-m \underbrace{\exp\left(-\frac{43}{64}D\right)}_{Eq~\ref{eq:d6}}\right) \\
& \geq \underbrace{\exp \left( \frac{15}{16}D\right)}_{Eq~\ref{eq:d6}} \cdot \frac{1}{2} \geq 1.
\end{align*}

Hence, $\vec{x}^{(i)}$ is correctly classified by a margin of $1$ for every $i$.

Now consider any test data point $(\vec{z},y)$. Since $D \geq \frac{1}{c_2} \ln \frac{2}{\epsilon}$ (Equation~\ref{eq:d5}), we have that with probability at least $1-\epsilon/2$ over the draws of $\vec{z}_2$,  by Corollary~\ref{cor:chi}

\begin{align*}
c_2 \sqrt{D} \leq \|\vec{z}_2 \| \leq c_3 \sqrt{D}. \numberthis \label{eq:z-exp-norm-bound}
\end{align*}

Similarly, for each $i$, we have that with probability at least $1-\epsilon/2m$ over the draws of $\vec{z}$, the following holds good by Corollary~\ref{cor:gaussian}
\begin{align*}
|\vec{x}_2^{(i)} \cdot \vec{z}_2| \leq \| \vec{x}_2^{(i)} \| \cdot c_4 \sqrt{2\ln \frac{6m}{\epsilon}}.   \numberthis\label{eq:z-exp-dot-prod-bound}
\end{align*} 

Hence, the above holds over at least $1-\epsilon/2$ draws of $\vec{z}$, and by extension, both the above equations hold over at least $1-\epsilon$ draws of $\vec{z}$.

Now, for any $i$ such that $y^{(i)} = y$, we have that 

\begin{align*}
\| \vec{x}^{(i)} + \vec{z}\|^2 =  &\|\vec{x}^{(i)}_1\|^2 + \|\vec{z}_1 \|^2 +  2 \vec{x}^{(i)}_1 \cdot \vec{z}_1 + \underbrace{\|\vec{x}^{(i)}_2\|^2}_{Eq~\ref{eq:exp-norm-bound}} + \underbrace{\|\vec{z}^{(i)}_2\|^2}_{Eq~\ref{eq:z-exp-norm-bound}}+  2 \underbrace{\vec{x}^{(i)}_2 \cdot \vec{z}_2}_{Eq~\ref{eq:z-exp-dot-prod-bound},~\ref{eq:exp-norm-bound}} \\ 
& \geq 4\|\vec{u}\|^2 + 2c_2^2 D - \underbrace{\sqrt{D} \cdot 2 c_3 c_4 \sqrt{2 \ln \frac{6m}{\epsilon}}}_{Eq~\ref{eq:d4}} \\
& \geq D + \frac{30}{16}D - \frac{1}{16} D = \frac{45}{16} D.
\end{align*}

Similarly, for any $i$ such that $y^{(i)} \neq y$, we have that 

\begin{align*}
\| \vec{x}^{(i)} + \vec{z}\|^2 =  &\|\vec{x}^{(i)}_1\|^2 + \|\vec{z}_1 \|^2 +  2 \vec{x}^{(i)}_1 \cdot \vec{z}_1 + \underbrace{\|\vec{x}^{(i)}_2\|^2}_{Eq~\ref{eq:exp-norm-bound}} + \underbrace{\|\vec{z}^{(i)}_2\|^2}_{Eq~\ref{eq:z-exp-norm-bound}} +  2 \underbrace{\vec{x}^{(i)}_2 \cdot \vec{z}_2}_{Eq~\ref{eq:exp-dot-prod-bound},~\ref{eq:exp-norm-bound}} \\ 
& \leq 2\|\vec{u}\|^2- 2\|\vec{u}\|^2 + 2c_3^2 D + \underbrace{\sqrt{D} \cdot 2 c_3 c_4 \sqrt{2 \ln \frac{6m}{\delta}}}_{Eq~\ref{eq:d5}} \\
& \leq \frac{34}{16}D - \frac{1}{16} D = \frac{33}{16}D.
\end{align*}

Since from Equation~\ref{eq:class-count} we know  there exists at least one training sample with a given label, we have that

\begin{align*}
{y^{(i)} h(\vec{x}^{(i)})} =  &\sum_{i: y^{(i)} = y}  \exp \left( \left\| \frac{\vec{x}^{(i)} + \vec{z}}{2} \right\|^2 \right) - \sum_{i: y^{(i)} \neq y}  \exp \left( \left\| \frac{\vec{x}^{(i)} + \vec{z}}{2} \right\|^2 \right) \\
&\geq \exp \left( \frac{45}{64}D \right) - m \exp \left( \frac{33}{64}D \right) \\
& \geq \exp \left( \frac{45}{64}D \right) \cdot \left(1 - m \underbrace{\exp \left( -\frac{12}{64}D \right)}_{Eq~\ref{eq:d6}} \right) \\
& \geq \underbrace{\exp \left( \frac{45}{64}D \right)}_{Eq~\ref{eq:d6}} \cdot \frac{1}{2} \geq 1.
\end{align*}

Thus, at least $1-\epsilon$ of the test datapoints are classified correctly. 

\end{proof}

We next show that the uniform convergence bound is nearly vacuous. In order to do this, we create a set $S'$ from $S$ by negating all values but the noise vector. We then show that for every point in $S'$, the contribution from the corresponding point in $S$ dominates over the contribution from all other points. (This is because of how the non-negated noise vector in the point from $S'$ aligns adversarially with the noise vector from the corresponding point in $S$).
 As a result, the points in $S'$ are all labeled like in $S$, implying that $S'$ is completely misclassified. Then, similar to our previous arguments, we can show that uniform convergence is nearly vacuous.\\

\begin{lemma}
\label{lem:exp-unif-conv}
In the setup of Section~\ref{sec:exp}, for any $\epsilon > 0$ and for any $\delta \leq 1/4$, and for the same lower bounds on $D$ and $m$ as in Theorem~\ref{thm:exp},  and for any $\gamma \geq 0$, we have that
\[
\epsilon_{\text{unif-alg}}(m,\delta) \geq 1 - \epsilon_{\text{gen}}(m,\delta)
\]
for the $\mathcal{L}^{(\gamma)}$ loss.
\end{lemma}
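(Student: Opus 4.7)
The plan is to mirror the proof of Lemma~\ref{lem:unif} in the linear case: construct a ``bad'' dataset $S'$ as a deterministic function of $S$ such that (i) $S'$ has the same marginal law $\mathcal{D}^m$ as $S$, and (ii) with high probability over $S$, the learned classifier $h_S$ misclassifies every point of $S'$. Once these two properties are in hand, the union-bound / four-condition argument of Lemma~\ref{lem:unif} carries over verbatim: for any $\mathcal{S}_\delta$ of $\mathcal{D}^m$-mass at least $1-\delta$, I can find an $S_\star$ for which $S_\star, S_\star' \in \mathcal{S}_\delta$, $\mathcal{L}_{\mathcal{D}}(h_{S_\star}) \leq \epsilon_{\text{gen}}(m,\delta)$, and $\hat{\mathcal{L}}_{S_\star'}(h_{S_\star}) = 1$, whence $\epsilon_{\text{unif-alg}}(m,\delta) \geq |\hat{\mathcal{L}}_{S_\star'}(h_{S_\star}) - \mathcal{L}_{\mathcal{D}}(h_{S_\star})| \geq 1 - \epsilon_{\text{gen}}(m,\delta)$.

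For the construction, I would map each training sample $((y^{(i)}\vec{u}, \vec{x}_2^{(i)}), y^{(i)})$ in $S$ to the point $(\vec{z}^{(i)}, -y^{(i)})$ with $\vec{z}^{(i)} = (-y^{(i)}\vec{u}, \vec{x}_2^{(i)})$ in $S'$; that is, flip the label together with the first $D$ coordinates while leaving the Gaussian noise block intact. This point lies in the support of $\mathcal{D}$ because $\vec{z}^{(i)}_1 = (-y^{(i)})\vec{u}$ is consistent with label $-y^{(i)}$, and the map is measure-preserving on $\mathcal{D}^m$: $-y^{(i)}$ is still uniform on $\{-1,+1\}$ and $\vec{x}_2^{(i)}$ remains an independent standard Gaussian block, so $S' \sim \mathcal{D}^m$ as required.

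The misclassification claim is where the work lies. Using Lemma~\ref{lem:exp-gradient} I would expand $y^{(i)} h_S(\vec{z}^{(i)}) = \exp(\|(\vec{z}^{(i)}+\vec{x}^{(i)})/2\|^2) + \sum_{j \neq i} y^{(i)} y^{(j)} \exp(\|(\vec{z}^{(i)}+\vec{x}^{(j)})/2\|^2)$. The critical cancellation is $\vec{z}^{(i)}+\vec{x}^{(i)} = (0, 2\vec{x}_2^{(i)})$, so the diagonal exponent equals $\|\vec{x}_2^{(i)}\|^2 \geq c_2^2 D$ by Equation~\ref{eq:exp-norm-bound}. For $j \neq i$, I would mimic the off-diagonal bound used in Lemma~\ref{lem:exp-gen}: expanding $\|\vec{z}^{(i)}+\vec{x}^{(j)}\|^2 - 4\|\vec{x}_2^{(i)}\|^2$ and applying the concentration inequalities of Equations~\ref{eq:exp-norm-bound} and~\ref{eq:exp-dot-prod-bound} together with Equation~\ref{eq:d5}, I get $\|(\vec{z}^{(i)}+\vec{x}^{(j)})/2\|^2 \leq \|\vec{x}_2^{(i)}\|^2 - \Omega(D)$ in both the same-label and opposite-label sub-cases. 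Since Equation~\ref{eq:d6} forces this gap to exceed $\ln(2m)$, the off-diagonal sum is bounded by $m \exp(\|\vec{x}_2^{(i)}\|^2 - \Omega(D)) < \tfrac12 \exp(\|\vec{x}_2^{(i)}\|^2)$, hence $y^{(i)} h_S(\vec{z}^{(i)}) > 0$. Because the label of $\vec{z}^{(i)}$ in $S'$ is $-y^{(i)}$, every point of $S'$ is misclassified, giving $\hat{\mathcal{L}}^{(\gamma)}_{S'}(h_S) = 1$ for every $\gamma \geq 0$.

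The main obstacle will be the off-diagonal computation: carefully separating the $y^{(i)}=y^{(j)}$ and $y^{(i)}\neq y^{(j)}$ sub-cases, uniformly controlling the noise cross-term $\vec{x}_2^{(i)}\cdot \vec{x}_2^{(j)}$ across all pairs via a union bound, and extracting a gap large enough relative to $\ln m$ to absorb the summation over $j$. The distributional/union-bound portion at the end is essentially a copy of the linear-case argument and should require no modification, since the only structural property of $S'$ used there was that it is a measurable function of $S$ whose pushforward under $\mathcal{D}^m$ is again $\mathcal{D}^m$.
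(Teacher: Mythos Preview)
Your proposal is correct and follows essentially the same route as the paper: the paper defines $S' = \{((-\vec{x}_1,\vec{x}_2),-y)\}$ (identical to your map), uses Lemma~\ref{lem:exp-gradient} to isolate the diagonal term $\exp(\|\vec{x}_2^{(i)}\|^2)$, bounds the off-diagonal exponents via Equations~\ref{eq:exp-norm-bound}, \ref{eq:exp-dot-prod-bound}, \ref{eq:d5}, \ref{eq:d6}, and then invokes the four-event union-bound argument of Lemma~\ref{lem:unif} verbatim. One small simplification: since you are lower-bounding $y^{(i)}h_S(\vec{z}^{(i)})$, the same-label off-diagonal terms contribute with a positive sign and can be discarded, so only the $y^{(i)}\neq y^{(j)}$ sub-case actually needs to be controlled (this is what the paper does).
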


\begin{proof}
Let $S'$ be a modified version of the training set where all values are negated except that of the noise vectors i.e., $S' = \{ ((-\vec{x}_1, \vec{x}_2),-y) \; | \;  ((\vec{x}_1,  \vec{x}_2),y) \in S \}$.  First we show that with probability at least $1-2\delta/3$ over the draws of $S$, $S'$ is completely misclassified. First, we have that with probability $1-2\delta/3$, Equations~\ref{eq:exp-norm-bound} and ~\ref{eq:exp-dot-prod-bound} hold good. Let $(\vec{x}_{\textrm{neg}}^{(i)},y_{\textrm{neg}}^{(i)})$ denote  the $i$th sample from $S'$.
Then,
 we have that 

\begin{align*}
{y_{\textrm{neg}}^{(i)} h(\vec{x}_{\textrm{neg}}^{(i)})} = & -\exp \left( \left\| \frac{\vec{x}^{(i)} + \vec{x}_{\textrm{neg}}^{(i)} }{2} \right\|^2 \right) + \sum_{j \neq i} y_{\textrm{neg}}^{(i)}y^{(j)} \exp \left( \left\| \frac{\vec{x}_{\textrm{neg}}^{(i)} + \vec{x}^{(j)}}{2} \right\|^2 \right)\\
\leq& -\exp \left( \left\| \vec{x}_2^{(i)}  \right\|^2 \right) 
+ \sum_{\substack{j \neq i \\ y_{\textrm{neg}}^{(i)} = y^{(j)}} }  \exp \left( \left\| \frac{\vec{x}_{\textrm{neg}}^{(i)} + \vec{x}^{(j)}}{2} \right\|^2 \right) \\
\leq& \exp \left( \left\| \vec{x}_2^{(i)}  \right\|^2 \right) \times  \left(-1 + \sum_{\substack{j \neq i \\ y_{\textrm{neg}}^{(i)} = y^{(j)}} } \exp \left(  \frac{\|\vec{x}_{\textrm{neg}}^{(i)} + \vec{x}^{(j)}\|^2 - 4 \| \vec{x}_2^{(i)}\|^2}{4} \right) \right). \numberthis{\label{eq:exp-bad-margin}}
\end{align*}

Now, consider  $j$ such that $y^{(j)} = y_{\textrm{neg}}^{(i)}$.  we have that % from  and Equation~\ref{eq:exp-dot-prod-bound} that:

\begin{align*}
\|\vec{x}_{\textrm{neg}}^{(i)}  + \vec{x}^{(j)}\|^2 - 4\| \vec{x}_2^{(i)} \|^2 &=  \|\vec{x}_1^{(i)}\|^2 + \|\vec{x}_1^{(j)} \|^2 - 2 \vec{x}_1^{(i)} \cdot \vec{x}_1^{(j)}  - \underbrace{3\|\vec{x}_2^{(i)} \|^2}_{Eq~\ref{eq:exp-norm-bound}}  + \underbrace{\|\vec{x}_2^{(j)}\|^2}_{Eq~\ref{eq:exp-norm-bound}}  - \underbrace{2 \vec{x}_2^{(i)} \cdot \vec{x}_2^{(j)}}_{Eq~\ref{eq:exp-dot-prod-bound}} \\
& \leq  4\|\vec{u}\|^2 -  3c_2^2 D  +c_3 D + \underbrace{\| \vec{x}_2^{(i)}\|}_{Eq~\ref{eq:exp-norm-bound}} \cdot 2 c_4\sqrt{2 \ln \frac{6m}{\delta}} \\
& \leq  4 \| \vec{u} \|^2 -3c_2^2D +c_3^2 D + \underbrace{\sqrt{D} \cdot c_3 c_4\sqrt{2 \ln \frac{6m}{\delta}}}_{Eq~\ref{eq:d5}} \\
& \leq  D  - \frac{45}{16}D  + \frac{17}{16} D + \frac{1}{16}D = \frac{-11}{16} D. \\
\end{align*}

Plugging the above back in Equation~\ref{eq:exp-bad-margin}, we have
\begin{align*}
\frac{y_{\textrm{neg}}^{(i)} h(\vec{x}_{\textrm{neg}}^{(i)})}{ \exp \left( \left\| \vec{x}_2^{(i)}  \right\|^2 \right) }  \leq  -1 + m \underbrace{\exp \left( -11D/64\right)}_{Eq~\ref{eq:d6}} \leq -1/2,
\end{align*}
implying that $\vec{x}_{\textrm{neg}}^{(i)}$ is misclassified. This holds simultaneously for all $i$, implying that $S'$ is misclassified with high probability $1-2\delta/3$ over the draws of $S$. Furthermore, $S'$ has the same distribution as $\mathcal{D}^m$. Then, by the same argument as that of Lemma~\ref{lem:unif}, we can prove our final claim.

\end{proof}

\section{Further Remarks.}

In this section, we make some clarifying remarks about our theoretical results.

\subsection{Nearly vacuous bounds for any $\gamma > 0$.}
\label{sec:any-gamma}
Typically, like in \citet{mohri12foundations,bartlett17spectral}, the 0-1 test error is upper bounded in terms of the $\mathcal{L}^{(\gamma)}$ test error for some optimal choice of $\gamma > 0$ (as it is easier to apply uniform convergence for $\gamma > 0$). From the result in the main paper, it is obvious that for $\gamma \leq 1$, this approach would yield vacuous bounds. We now establish that this is the case even for $\gamma > 1$.

To help state this more clearly, for the scope of this particular section,
 let $\epsilon^{(\gamma)}_{\textrm{unif-alg}}, \epsilon^{(\gamma)}_{\textrm{gen}}$ denote the uniform convergence and generalization error for $\mathcal{L}^{(\gamma)}$ loss. Then, 
 the following inequality is used to derive a bound on the 0-1 error:
\begin{align*}
\mathcal{L}^{(0)}_{\mathcal{D}}(h_S) \leq \mathcal{L}^{(\gamma)}_{\mathcal{D}}(h_S) \leq \hat{\mathcal{L}}^{(\gamma)}_{S}(h_S) + \epsilon^{(\gamma)}_{\textrm{unif-alg}}(m,\delta) \numberthis\label{eq:margin-upper-bound}
\end{align*}
where the second inequality above holds with probability at least $1-\delta$ over the draws of $S$, while the first holds for all $S$ (which follows by definition of $\mathcal{L}^{(\gamma)}$ and $\mathcal{L}^{(0)}$).

To establish that uniform convergence is nearly vacuous in any setting of $\gamma$, we must show that the right hand side of the above bound is nearly vacuous for any choice of $\gamma \geq 0$ (despite the fact that $\mathcal{L}_{\mathcal{D}}^{(0)}(S) \leq \epsilon$). In our results, we explicitly showed this to be true for only small values of $\gamma$, by arguing that the second term in the R.H.S, namely $\epsilon^{(\gamma)}_{\textrm{unif-alg}}(m,\delta)$, is nearly vacuous.

Below, we show that the above bound is indeed nearly vacuous for any value of $\gamma$, when we have that $\epsilon_{\text{unif-alg}}^{(\gamma)}(m,\delta) \geq 1- \epsilon_{\text{gen}}^{(\gamma)}(m,\delta)$. Note that we established the relation $\epsilon_{\text{unif-alg}}^{(\gamma)}(m,\delta) \geq 1- \epsilon_{\text{gen}}^{(\gamma)}(m,\delta)$ to be true in all of our setups. \\

\begin{proposition}
Given that for all $\gamma \geq 0$,
$\epsilon_{\text{unif-alg}}^{(\gamma)}(m,\delta) \geq 1- \epsilon_{\text{gen}}^{(\gamma)}(m,\delta)$
then, we then have that for all $\gamma \geq 0$,
\[
Pr_{S \sim \mathcal{D}^m}\left[ \hat{\mathcal{L}}^{(\gamma)}_{S}(h_S) + \epsilon^{(\gamma)}_{\textrm{unif-alg}}(m,\delta) \geq \frac{1}{2} \right] > \delta
\]
or in other words, 
the guarantee from the right hand side of Equation~\ref{eq:margin-upper-bound} is nearly vacuous.
\end{proposition}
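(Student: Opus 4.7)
The plan is to show that the hypothesis of the proposition already forces $\epsilon_{\text{unif-alg}}^{(\gamma)}(m,\delta) \ge 1/2$ on its own, after which the conclusion is immediate because $\hat{\mathcal{L}}_S^{(\gamma)}(h_S)\ge 0$ for every $S$. The key intermediate fact is the standard relation $\epsilon_{\text{gen}}^{(\gamma)}(m,\delta)\le \epsilon_{\text{unif-alg}}^{(\gamma)}(m,\delta)$: a two-sided algorithm-dependent uniform convergence bound automatically yields a (one-sided) generalization error bound because the learned hypothesis $h_S$ is included in the class over which the supremum is taken.

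To establish this inequality, I would unroll Definition~\ref{def:unif-alg}: there exists a sample-set family $\mathcal{S}_\delta$ with $\textrm{Pr}_{S\sim\mathcal{D}^m}[S\in\mathcal{S}_\delta]\ge 1-\delta$ such that $\sup_{S\in\mathcal{S}_\delta}\sup_{h\in\mathcal{H}_\delta}|\mathcal{L}_{\mathcal{D}}^{(\gamma)}(h)-\hat{\mathcal{L}}_S^{(\gamma)}(h)|\le \epsilon_{\text{unif-alg}}^{(\gamma)}$. Since $\mathcal{H}_\delta=\bigcup_{S\in\mathcal{S}_\delta}\{h_S\}$, specializing the inner supremum to $h=h_S$ gives $\mathcal{L}_{\mathcal{D}}^{(\gamma)}(h_S)-\hat{\mathcal{L}}_S^{(\gamma)}(h_S)\le \epsilon_{\text{unif-alg}}^{(\gamma)}$ for every $S\in\mathcal{S}_\delta$, i.e.\ with probability at least $1-\delta$. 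Thus $\epsilon_{\text{unif-alg}}^{(\gamma)}$ is itself an admissible value in Definition~\ref{def:gen-error}, and since $\epsilon_{\text{gen}}^{(\gamma)}$ is the smallest such value, $\epsilon_{\text{gen}}^{(\gamma)}\le \epsilon_{\text{unif-alg}}^{(\gamma)}$.

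Feeding this back into the hypothesis of the proposition gives $\epsilon_{\text{unif-alg}}^{(\gamma)}\ge 1-\epsilon_{\text{gen}}^{(\gamma)}\ge 1-\epsilon_{\text{unif-alg}}^{(\gamma)}$, i.e.\ $\epsilon_{\text{unif-alg}}^{(\gamma)}\ge 1/2$. Consequently, for \emph{every} draw of $S$,
\[
\hat{\mathcal{L}}_S^{(\gamma)}(h_S)+\epsilon_{\text{unif-alg}}^{(\gamma)}(m,\delta)\ \ge\ 0+\tfrac{1}{2}\ =\ \tfrac{1}{2},
\]
so the probability in the statement equals $1$, which is strictly greater than $\delta$ for any $\delta<1$. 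This handles all $\gamma\ge 0$ uniformly, since the argument is carried out for an arbitrary $\gamma$.

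I do not anticipate any substantive obstacle; the only subtle point is the direction of the inequality $\epsilon_{\text{gen}}^{(\gamma)}\le \epsilon_{\text{unif-alg}}^{(\gamma)}$, which depends on the observation that $h_S\in\mathcal{H}_\delta$ whenever $S\in\mathcal{S}_\delta$. Once that is articulated, the remainder is a one-line combination with the assumption and a trivial nonnegativity bound on $\hat{\mathcal{L}}_S^{(\gamma)}(h_S)$.
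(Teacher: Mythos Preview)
Your proof is correct. The underlying content is the same as the paper's---both rely on the fact that $\epsilon_{\text{unif-alg}}^{(\gamma)}$ is itself a valid upper bound on the generalization gap---but the organization differs. The paper argues by contradiction: it supposes the event $\hat{\mathcal{L}}_S^{(\gamma)}(h_S)+\epsilon_{\text{unif-alg}}^{(\gamma)}<1/2$ holds with probability at least $1-\delta$, reads off $\epsilon_{\text{unif-alg}}^{(\gamma)}<1/2$, uses the second inequality in Equation~\ref{eq:margin-upper-bound} to get $\mathcal{L}_{\mathcal{D}}^{(\gamma)}(h_S)<1/2$ with probability $\ge 1-\delta$, and from $\hat{\mathcal{L}}_S^{(\gamma)}(h_S)<1/2$ and $\mathcal{L}_{\mathcal{D}}^{(\gamma)}(h_S)<1/2$ concludes $\epsilon_{\text{gen}}^{(\gamma)}<1/2$, contradicting the hypothesis. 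You instead isolate the inequality $\epsilon_{\text{gen}}^{(\gamma)}\le\epsilon_{\text{unif-alg}}^{(\gamma)}$ as a standalone lemma (via $h_S\in\mathcal{H}_\delta$ whenever $S\in\mathcal{S}_\delta$), combine it directly with the hypothesis to obtain $\epsilon_{\text{unif-alg}}^{(\gamma)}\ge 1/2$, and finish by nonnegativity of $\hat{\mathcal{L}}_S^{(\gamma)}$. Your route is shorter and yields the slightly stronger conclusion that the event holds for \emph{every} $S$ (probability $1$), whereas the paper's contradiction argument only refutes probability $\ge 1-\delta$ for the complementary event. Both are valid; yours is the cleaner packaging.
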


\begin{proof}
Assume on the contrary that for some choice of $\gamma$, we are able to show that with probability at least $1-\delta$ over the draws of $S$, the right hand side of Equation~\ref{eq:margin-upper-bound} is less than $1/2$. This means that $\epsilon^{(\gamma)}_{\textrm{unif-alg}}(m,\delta) < 1/2$. Furthermore, this also means that with probability at least $1-\delta$ over the draws of $S$, $\hat{\mathcal{L}}^{(\gamma)}_{S}(h_S) < 1/2$ and  $\mathcal{L}^{(\gamma)}_{\mathcal{D}}(h_S) < 1/2$ (which follows from the second inequality in Equation~\ref{eq:margin-upper-bound}). 

As a result, we have that with probability at least $1-\delta$, $\mathcal{L}^{(\gamma)}_{\mathcal{D}}(h_S) - \hat{\mathcal{L}}^{(\gamma)}_{S}(h_S) < 1/2$. In other words, $\epsilon_{\textrm{gen}}^{(\gamma)}(m,\delta) < 1/2$. 
Since we are given that $\epsilon_{\text{unif-alg}}^{(\gamma)}(m,\delta) \geq 1- \epsilon_{\text{gen}}^{(\gamma)}(m,\delta)$, by our upper bound on the generalization error, we have $\epsilon_{\text{unif-alg}}^{(\gamma)}(m,\delta) \geq 1/2$, which is a contradiction to our earlier inference that $\epsilon_{\text{unif-alg}}^{(\gamma)}(m,\delta) < 1/2$. Hence, our assumption is wrong. 

\end{proof}

\subsection{Applicability of the observation in Section~\ref{sec:hypersphere} to other settings}
\label{sec:applicability}
Recall that in the main paper, we discussed a setup where two hyperspheres of radius $1$ and $1.1$ respectively are classified by a sufficiently overparameterized ReLU network. We saw that even when the number of training examples was as large as $65536$, we could project all of these examples on to the other corresponding hypersphere, to create a completely misclassified set $S'$. How well does this observation extend to other hyperparameter settings?

First, we note that in order to achieve full misclassification of $S'$, the network would have to be sufficiently overparameterized i.e., either the width or the input dimension must be larger. When the training set size $m$ is too large, one would observe that $S'$ is not as significantly misclassified as observed. (Note that on the other hand, increasing the parameter count would not hurt the generalization error. In fact it would improve it.)

Second, we note that our observation is sensitive to the choice of the difference in the radii between the hyperspheres (and potentially to other hyperparameters too). For example, when the outer sphere has radius $2$, SGD learns to classify these spheres perfectly, resulting in zero error on both test data and on $S'$.  As a result, our lower bound on $\epsilon_{\text{unif-alg}}$ would not hold in this setting. 

However, here we sketch a (very) informal argument as to why there is reason to believe that our lower bound can still hold on a weaker notion of uniform convergence, a notion that is always applied in practice (in the main paper we focus on a strong notion of uniform convergence as a negative result about it is more powerful). More concretely, in reality, uniform convergence is computed without much knowledge about the data distribution, save a few weakly informative assumptions such as those bounding its support. Such a uniform convergence bound is effectively computed uniformly in supremum over a class of distributions.

Going back to the hypersphere example, the intuition is that even when the radii of the spheres are far apart, and hence, the classification perfect, the decision boundary learned by the network could still be microscopically complex -- however these complexities are not exaggerated enough to misclassify $S'$. Now, for this given decision boundary, one would be able to construct an $S''$ which corresponds to projecting $S$ on two concentric hyperspheres that fall within these skews. Such an $S''$ would have a distribution that comes from some $\mathcal{D}'$ which, although not equal to $\mathcal{D}$, still obeys our assumptions about the underlying distribution. The uniform convergence bound which also holds for $\mathcal{D'}$ would thus have to be vacuous.

\subsection{On the dependence of $\epsilon_{\textrm{gen}}$ on $m$ in our examples.}
\label{sec:remark}
% In our particular example, the noise in the learning algorithm can be explained by the distribution of the input examples i.e., many of the features in the input points are noise, and this noise gets absorbed into the classifier. However, this is not critical for our analysis -- our analysis would apply even to a situation where  the noise in the learned classifier does not necessarily come from directly from the inputs, but from the way it is trained (say using SGD, to optimize a non-convex, non-smooth loss). That is, we could think of $\vec{w}$ being a noise vector picked independent of the training set.  The main property that we want from the noise is that it should not align adversarially with most of the test points and as long as the learning algorithm involves high-dimensional noise, this would reasonably hold.

As seen in the proof of Lemma~\ref{lem:gen}, the generalization error $\epsilon$ depends on $m$ and $D$ as $\mathcal{O}(e^{-D/m})$ ignoring some constants in the exponent. Clearly, this error decreases with the parameter count $D$. % However, we note that this decrease would only be mild when $D \gg m$. For example, if we double $D$, the factor by which the generalization error would decrease would equal $e^{-D/m}/e^{-2D/m} = e^{D/m}$. 

On the other hand, one may also observe that this generalization error grows with the number of samples $m$, which might at first make this model seem inconsistent with our real world observations. However, we emphasize that this is a minor artefact of the simplifications in our setup, rather than a conceptual issue. With a small modification to our setup, we can make the generalization error decrease with $m$, mirroring our empirical observations. Specifically, in the current setup, we learn the true boundary along the first $K$ dimensions exactly. We can however modify it to a more standard learning setup where the boundary is not exactly recoverable and needs to be estimated from the examples. This would lead to an additional generalization error that scales as $\mathcal{O}(\sqrt{\frac{K}{m}})$ that is non-vacuous as long as $K \ll m$. Thus, the overall generalization error would be $\mathcal{O}(e^{-D/m} + \sqrt{\frac{K}{m}})$. 

What about the overall dependence on $m$? Now, assume we have an overparameterization level of $D \gg m \ln (m/K)$, so that $e^{-D/m} \ll \sqrt{K/m}$. Hence, in the sufficiently overparameterized regime, the generalization error $\mathcal{O}(e^{-D/m})$ that comes from the noise we have modeled, pales in comparison with the generalization error that would stem from estimating the low-complexity boundary. Overall, as a function of $m$, the resulting error would behave like $\mathcal{O}(\sqrt{\frac{K}{m}})$ and hence show a decrease with increasing $m$ (as long the increase in $m$ is within the overparameterized regime). 

\subsection{Failure of hypothesis-dependent uniform convergence bounds.}

\label{sec:weight-dependence}
Often, uniform convergence bounds are written as a bound on the generalization error of a specific hypothesis rather than the algorithm. These bounds have an explicit dependence on the weights learned. As an example, a bound may be of the form that, with high probability over draws of training set $\tilde{S}$, for any hypothesis $h$ with weights $\vec{w}$,

\[
 \mathcal{L}_{\mathcal{D}}(h) -
 \hat{\mathcal{L}}_{\tilde{S}}(h)  \leq \frac{\|\vec{w}\|_2}{\sqrt{m}}.
\]

Below we argue why even these kinds of hypothesis-dependent bounds fail in our setting. We can informally define the tightest hypothesis-dependent uniform convergence bound as follows, in a manner similar to Definition~\ref{def:unif-alg} of the tightest uniform convergence bound. 
Recall that we first pick a set of datasets $\mathcal{S}_{\delta}$ such that $Pr_{\tilde{S} \sim \mathcal{D}^m}[\tilde{S} \notin \mathcal{S}_{\delta}] \leq \delta$. Then, for all $\tilde{S} \in S_{\mathcal{\delta}}$, we denote
the upper bound on the generalization gap of $h_{\tilde{S}}$ by $\epsilon_{\text{unif-alg}}(h_{\tilde{S}},m,\delta)$, where:

\[
\epsilon_{\text{unif-alg}}(h_{\tilde{S}},m,\delta) := \sup_{\tilde{S} \in \mathcal{S}_{\delta}} |\mathcal{L}_D(h_{\tilde{S}}) - \hat{\mathcal{L}}_{S}(h_{\tilde{S}})|.
\]

In other words, the tightest upper bound here corresponds to the difference between the test and empirical error of the specific hypothesis $h_{\tilde{S}}$ but computed across nearly all datasets $S$ in $\mathcal{S}_{\delta}$.

To show failure of the above bound, recall from all our other proofs of failure of uniform convergence, we have that for at least $1-O(\delta)$ draws of the sample set ${\tilde{S}}$, four key conditions are satisfied: (i) ${\tilde{S}} \in \mathcal{S}_{\delta}$, (ii) the corresponding bad dataset  ${\tilde{S}}' \in \mathcal{S}_{\delta}$, (iii) the error on the bad set $\hat{\mathcal{L}}_{{\tilde{S}}'}(h_{\tilde{S}})=1$ and (iv) the test error $\mathcal{L}_D(h_{\tilde{S}}) \leq \epsilon_{\text{gen}}(m,\delta)$.  For all such ${\tilde{S}}$, in the definition of $\epsilon_{\text{unif-alg}}(h_{\tilde{S}},m,\delta)$, let us set $S$ to be ${\tilde{S}}'$. Then, we would get $\epsilon_{\text{unif-alg}}(h_{\tilde{S}},m,\delta) \geq |\mathcal{L}_D(h_{\tilde{S}}) - \hat{\mathcal{L}}_{{\tilde{S}}'}(h_{\tilde{S}})| \geq 1-\epsilon_{\text{gen}}(m,\delta)$. In other words, with probability at least $1-O(\delta)$ over the draw of the training set, even a hypothesis-specific generalization bound fails to explain generalization of the corresponding hypothesis.

\section{An abstract setup}
\label{sec:warm-up}

We now present an abstract setup that, although unconventional in some ways, conveys the essence behind how uniform convergence fails to explain generalization. Let the underlying distribution over the inputs be a spherical Gaussian in $\mathbb{R}^D$ where $D$ can be however small or large as the reader desires. Note that our setup would apply to many other distributions, but a Gaussian would make our discussion easier. Let the labels of the inputs be determined by some $h^{\star}: \mathbb{R}^{D} \to \{-1, +1 \}$.
Consider a scenario where the learning algorithm outputs a very slightly modified version of $h^\star$. Specifically, let $S' = \{-\vec{x} \; | \; \vec{x} \in S\}$; then, the learner outputs
\[
h_{S}(\vec{x}) = \begin{cases}
-h^\star(\vec{x}) & \text{if } \vec{x} \in S' \\
h^\star(\vec{x}) & \text{otherwise}.
\end{cases} 
\]
That is, the learner misclassifies inputs that correspond to the negations of the samples in the training data -- this would be possible if and only if the classifier is overparameterized with $\Omega(mD)$ parameters to store $S'$. We will show that uniform convergence fails to explain generalization for this learner. 

First we establish that this learner generalizes well.
Note that a given $S$ has zero probability mass under $\mathcal{D}$, and so does $S'$. Then,
the training and test error are zero -- except for pathological draws of $S$ that intersect with $S'$, which are almost surely never drawn from $\mathcal{D}^m$ -- and hence, the generalization error of $\mathcal{A}$ is zero too.

It might thus seem reasonable to expect that one could explain this generalization using implicit-regularization-based uniform convergence by showing $\epsilon_{\text{unif-alg}}(m,\delta)=0$. Surprisingly, this is not the case as $\epsilon_{\text{unif-alg}}(m,\delta)$ is in fact $1$! 

First it is easy to see why the looser bound $\epsilon_{\text{unif}}(m,\delta)$ equals 1, if we let $\mathcal{H}$ be the space of all hypotheses the algorithm could output: there must exist a non-pathological $S \in \mathcal{S}_{\delta}$, and we know that $h_{S'} \in \mathcal{H}$ misclassifies the negation of its training set, namely $S$. Then, $\sup_{h \in \mathcal{H}} |\mathcal{L}_{\mathcal{D}}(h) - \hat{\mathcal{L}}_{S}(h)| = |\mathcal{L}_{\mathcal{D}}(h_{S'}) - \hat{\mathcal{L}}_{S}(h_{S'})|  = |0-1|=1$.

One might hope that in the stronger bound of $\epsilon_{\text{unif-alg}}(m,\delta)$ since we truncate the hypothesis space, it is possible that the above adversarial situation would fall apart. However, with a more nuanced argument, we can similarly show that 
 $\epsilon_{\text{unif-alg}}(m,\delta)=1$. First, recall that any bound on $\epsilon_{\text{unif-alg}}(m,\delta)$,  would have to pick a truncated sample set space $\mathcal{S}_{\delta}$. Consider any choice of $\mathcal{S}_{\delta}$, and the corresponding set of explored hypotheses $\mathcal{H}_{\delta}$. We will show that for any choice of $\mathcal{S}_{\delta}$, there exists $S_{\star} \in \mathcal{S}_{\delta}$ such that (i) $h_{S_\star}$ has zero test error and (ii) the negated training set $S_{\star}'$ belongs to $\mathcal{S}_{\delta}$ and (iii) $h_{S_\star}$ has error $1$ on $S_\star$. Then, it follows that 
$\epsilon_{\text{unif-alg}}(m,\delta) = \sup_{S \in \mathcal{S}_{\delta}} \sup_{h \in \mathcal{H}_{\delta}} |{\mathcal{L}}_{\mathcal{D}}(h)- \hat{\mathcal{L}}_{S}(h)| \geq  |{\mathcal{L}}_{\mathcal{D}}(h_{S_\star})- \hat{\mathcal{L}}_{S_\star'}(h)|= |0-1| = 1 $.

% Since for all $h$ output by the algorithm, we have that $\mathcal{L}_{\mathcal{D}}(h) = 0$, we essentially have that $\epsilon_{\text{unif-alg}}(m,\delta) \geq \sup_{S \in \mathcal{S}_{\delta}} \sup_{h \in \mathcal{H}_{\delta}} |\hat{\mathcal{L}}_{S}(h)| $.

We can prove the existence of such an $S_{\star}$ by showing that the probability of picking one such set under $\mathcal{D}^{m}$ is non-zero for $\delta < 1/2$.  Specifically, under $S \sim \mathcal{D}^m$, we have by the union bound that
\begin{align*}
& Pr \left[  \mathcal{L}_{\mathcal{D}}(h_S) = 0, \hat{\mathcal{L}}_{S'}(h_{S}) = 1, S \in \mathcal{S}_{\delta}, S' \in \mathcal{S}_{\delta}\right] \geq  \\
& 1- Pr\left[  \mathcal{L}_{\mathcal{D}}(h_S) \neq 0,  \hat{\mathcal{L}}_{S'}(h_{S}) \neq 1\right] 
- Pr\left[   S \notin \mathcal{S}_{\delta}\right] 
- Pr\left[   S' \notin \mathcal{S}_{\delta}\right] .
\end{align*}
Since the pathological draws have probability zero, the first probability term on the right hand side is zero. The second term is at most $\delta$ by definition of $\mathcal{S}_{\delta}$. Crucially, the last term too is at most $\delta$ because $S'$ (which is the negated version of $S$) obeys the same distribution as $S$ (since the isotropic Gaussian is invariant to a negation). Thus, the above probability is at least $1-2\delta > 0$, implying that there exist (many) $S_\star$, proving our main claim.

\subparagraph{Remark.} While our particular learner might seem artificial, much of this artificiality is only required to make the argument simple. The crucial trait of the learner that we require is that the misclassified region in the input space (i) covers low probability and yet (ii) is complex and highly dependent on the training set draw. Our intuition is that SGD-trained deep networks possess these traits.

\section{Learnability and Uniform Convergence}

\label{sec:learnability}

Below, we provide a detailed discussion on learnability, uniform convergence and generalization. Specifically, we argue why the fact that uniform convergence is necessary for learnability does not preclude the fact that uniform convergence maybe unable to {\em explain} generalization of a particular algorithm for a particular distribution.

We first recall the notion of learnability. First, formally, a binary classification problem consists of a hypothesis class $\mathcal{H}$ and an instance space $\mathcal{X} \times \{-1,1\}$. %The loss of $h \in \mathcal{H}$ at an instance $(x,y)$ is $\mathcal{L}(h(x),y) = \mathbf{1}[h(x) \neq y]$. 
The problem is said to be {\em learnable} if there exists a learning rule $\mathcal{A}': \bigcup\limits_{m=1}^{\infty} \mathcal{Z}^m \to \mathcal{H}$ and a monotonically decreasing sequence $\epsilon_{\text{lnblty}}(m)$ such that $\epsilon_{\text{lnblty}}(m) \xrightarrow{m\to\infty} 0$ and
\begin{align*}
\forall \mathcal{D}' \;
\mathbb{E}_{S \sim \mathcal{D}'^m}\left[ 
\mathcal{L}^{(0)}_{\mathcal{D}'}(\mathcal{A}'(S)) -
\min_{h \in \mathcal{H}} \mathcal{L}^{(0)}_{\mathcal{D}'}(h)  \right] \leq \epsilon_{\text{lnblty}}(m).
\numberthis\label{eq:learnability}
\end{align*}

 \citet{vapnik71uniform} showed that finite VC dimension  of the hypothesis class is  necessary and sufficient for {learnability} in binary classification problems. As \citet{shwartz10learnability} note, since finite VC dimension is equivalent to uniform convergence, it can thus be concluded that uniform convergence is necessary and sufficient for learnability binary classification problems.

However, learnability is a strong notion that does not necessarily have to hold for a particular learning algorithm to generalize well for a particular underlying distribution. Roughly speaking, this is because learnability evaluates the algorithm under all possible distributions, including many complex distributions; while a learning algorithm may generalize well for a particular distribution under a given hypothesis class, it may fail to do so on more complex distributions under the same hypothesis class.
 % For more intuition, we direct the reader to a more concrete but informal argument in Appendix~\ref{sec:learnability}. This discussion also provides a different insight into why uniform convergence may not be able to explain generalization:  by demanding uniform convergence although only on a hypothesis class specific to what is explored under a specific underlying distribution, we are in effect demanding learnability of that hypothesis class which is effectively distribution-{\em independent}. 

For more intuition, we present a more concrete but informal argument below. However, this argument is technically redundant because learnability is equivalent to uniform convergence for binary classification, and since we established the lack of necessity of uniform convergence, we effectively established the same for learnability too. However, we still provide the following informal argument as it provides a different insight into why learnability and uniform convergence are not necessary to explain generalization.

Our goal is to establish that in the set up of Section~\ref{sec:setup}, even if we considered the binary classification problem corresponding to $\mathcal{H}_{\delta}$ (the class consisting of only those hypotheses explored by the algorithm $\mathcal{A}$ under a distribution $\mathcal{D}$), the corresponding binary classification problem is not learnable i.e., Equation~\ref{eq:learnability} does not hold when we plug in $\mathcal{H}_{\delta}$ in place of $\mathcal{H}$. 

First consider distributions of the following form that is more complex than the linearly separable $\mathcal{D}$: for any dataset $S'$, let $\mathcal{D}_{S'}$ be the distribution that has half its mass on the part of the linearly separable distribution $\mathcal{D}$ excluding $S'$,
and half its mass on the distribution that is uniformly distributed over $S'$. Now let $S'$ be a random dataset drawn from $\mathcal{D}$ {\em but with all its labels flipped}; consider the corresponding complex distribution $\mathcal{D}_{S'}$.

%Let $\mathcal{H}_{\delta}$ correspond to any choice of a small class of hypothesis that our algorithm picks from with high probability over draws of $S$ from the simple distribution $\mathcal{D}$. 
We first show that there exists $h \in \mathcal{H}_{\delta}$ that fits this distribution well.
Now, for most draws of the ``wrongly'' labeled $S'$, we can show that the hypothesis $h$ for which $\vec{w}_1 = 2 \cdot \vec{u}$ and $\vec{w}_2 = \sum_{(x,y) \in S'} y \cdot \vec{x}_2$ fits the ``wrong'' labels of $S'$ perfectly; this is because, just as argued in Lemma~\ref{lem:unif}, $\vec{w}_2$ dominates the output on all these inputs, although $\vec{w}_1$ would be aligned incorrectly with these inputs. Furthermore, since $\vec{w}_2$ does not align with most inputs from $\mathcal{D}$, by an argument similar to Lemma~\ref{lem:gen}, we can also show that this hypothesis has at most $\epsilon $ error on $\mathcal{D}$, and that this hypothesis belongs to $\mathcal{H}_{\delta}$.  Overall this means that, w.h.p over the choice of $S'$, there exists a hypothesis $h \in \mathcal{H}_{\delta}$ for which the error on the complex distribution $\mathcal{D}_{S'}$ is at most $\epsilon/2$ i.e., 

\[
\min_{h \in \mathcal{H}} \mathbb{E}_{(x,y) \sim \mathcal{D}_{S'}}[\mathcal{L}(h(x),y)]  \leq \epsilon/2.
\]

On the other hand, let $\mathcal{A}'$ be any learning rule which outputs a hypothesis given $S \sim \mathcal{D}_{S'}$. With high probability over the draws of $S \sim \mathcal{D}_{S'}$, only at most, say $3/4$th of $S$ (i.e., $0.75 m$ examples) will be sampled from  $S'$ (and the rest from $\mathcal{D}$). Since the learning rule which has access only to $S$, has not seen at least a quarter of $S'$, with high probability over the random draws of $S'$, the learning rule will fail to classify roughly half of the unseen 
examples from $S'$ correctly (which would be about $(m/4) \cdot 1/2 = m/8$). Then, the error on $\mathcal{D}_{S'}$ will be at least $1/16$.  From the above arguments, we have that $\epsilon_{\text{learnability}}(m) \geq 1/16 - \epsilon/2$, which is a non-negligible constant that is independent of $m$.

\section{Deterministic PAC-Bayes bounds are two-sided uniform convergence bounds}
\label{sec:pac-bayes}

By definition, VC-dimension, Rademacher complexity and other covering number based bounds are known to upper bound the term $\epsilon_{\text{unif-alg}}$ and therefore our negative result immediately applies to all these bounds. However, it may not be immediately clear if bounds derived through the PAC-Bayesian approach fall under this category too. In this discussion, we show that existing deterministic PAC-Bayes based bounds are in fact two-sided in that they are lower bounded by $\epsilon_{\text{unif-alg}}$ too. 

For a given prior distribution $P$ over the  parameters, a PAC-Bayesian bound is of the following form: with high probability $1-\delta$ over the draws of the data $S$, we have that {\em for all distributions} $Q$ over the hypotheses space:
\begin{align*}
& KL\left( \left. \mathbb{E}_{\tilde{h} \sim Q}[\hat{\mathcal{L}}_{S}(\tilde{h}) ]\right \| \mathbb{E}_{\tilde{h} \sim Q}[\mathcal{L}_{\mathcal{D}}(\tilde{h}) ] \right) \leq  \underbrace{\frac{KL(Q \| P) + \ln \frac{2m}{\delta}}{m-1}}_{:= \epsilon_{\textrm{pb}}(P,Q,m,\delta)}. \numberthis \label{eq:pb}
\end{align*}

Note that here for any $a,b \in [0,1]$, $KL(a\| b) = a \ln \frac{a}{b} + (1-a) \ln \frac{1-a}{1-b}$. Since the precise form of the PAC-Bayesian bound on the right hand side is not relevant for the rest of the discussion, we will concisely refer to it as $\epsilon_{\textrm{pb}}(P,Q,m,\delta)$. What is of interest to us is the fact that the above bound holds for all $Q$ for most draws of $S$ and that the KL-divergence on the right-hand side is in itself two-sided, in some sense.

Typically, the above  bound is simplified to derive the following one-sided bound on the difference between the expected and empirical errors of a stochastic network (see \cite{mcallester03simplified} for example):
\begin{align*}
 &\mathbb{E}_{\tilde{h} \sim Q}[\mathcal{L}_{\mathcal{D}}(\tilde{h}) ] -  \mathbb{E}_{\tilde{h} \sim Q}[\hat{\mathcal{L}}_{S}(\tilde{h}) ] \leq \sqrt{2\epsilon_{\textrm{pb}}(P,Q,m,\delta)} + 2\epsilon_{\textrm{pb}}(P,Q,m,\delta).
 \numberthis \label{eq:stochastic-pac-bayes}
\end{align*}

This bound is then manipulated in different ways to obtain bounds on the deterministic network. In the rest of this discussion, we focus on the two major such derandomizing techniques and argue that both these techniques boil down to two-sided convergence. While, we do not formally establish that there may exist other techniques which ensure that the resulting deterministic bound is strictly one-sided, we suspect that no such techniques may exist. This is because the KL-divergence bound in Equation~\ref{eq:pb} is in itself two-sided in the sense that for the right hand side bound to be small, both the stochastic test and train errors must be close to each other; it is not sufficient if the stochastic test error is smaller than the stochastic train error.

\subsection{Deterministic PAC-Bayesian Bounds of Type A}
To derive a deterministic generalization bound, one approach is to add extra terms that account for the perturbation in the loss of the network \cite{neyshabur17exploring,mcallester03simplified,nagarajan2018deterministic}. That is, define:
\begin{align*}
\Delta(h,Q,\mathcal{D}) &= | \mathcal{L}_{\mathcal{D}}(h) -  \mathbb{E}_{\tilde{h} \sim Q}[\mathcal{L}_{\mathcal{D}}(\tilde{h}) ]|, \\
\Delta(h,Q,S) &= \left|  \hat{\mathcal{L}}_{S}(h) -  \mathbb{E}_{\tilde{h} \sim Q}[\hat{\mathcal{L}}_{S}(\tilde{h}) ]\right|. 
\end{align*}
Then, one can get a deterministic upper bound as:
\begin{align*}
  &\mathcal{L}_{\mathcal{D}}(h)-   \hat{\mathcal{L}}_{S}(h) \leq\sqrt{2\epsilon_{\text{pb}}(P,Q_h,m,\delta)} +  2\epsilon_{\text{pb}}(P,Q_h,m,\delta)  + { \Delta(h,Q,\mathcal{D}) + \Delta(h,Q,S)}.
\end{align*}

Note that while applying this technique, for any hypothesis $h$, one picks a posterior $Q_h$ specific to that hypothesis (typically, centered at that hypothesis). 

We formally define the deterministic bound resulting from this technique below. We consider the algorithm-dependent version and furthermore, we consider a bound that results from the best possible choice of $Q_h$ for all $h$. 
We define this deterministic bound in the format of $\epsilon_{\text{unif-alg}}$ as follows: \\

\begin{definition} 
The distribution-dependent, algorithm-dependent, deterministic PAC-Bayesian bound of (the hypothesis class $\mathcal{H}$, algorithm $\mathcal{A}$)-pair with respect to $\mathcal{L}$ is defined to be the smallest value $\epsilon_{\text{pb-det-A}}(m, \delta)$ such that the following holds:
\begin{enumerate}
\item there exists a set of $m$-sized samples $\mathcal{S}_{\delta} \subseteq (\mathcal{X} \times \{-1, +1 \})^m$ for which:
\[
Pr_{S \sim \mathcal{D}^m} [S \notin \mathcal{S}_{\delta}]  \leq \delta,
\] 
\item and if we define $\mathcal{H}_{\delta} = \bigcup_{S \in \mathcal{S}_{\delta}}  \{ h_{S} \}$ to be the space of hypotheses explored only on these samples, then there must exist a prior $P$ and for each $h \in \mathcal{H}_{\delta}$, a distribution $Q_h$, such that uniform convergence must hold as follows:
\begin{align*}
& \sup_{S \in \mathcal{S}_{\delta}}\ \sup_{h \in \mathcal{H}_{\delta}}  \sqrt{2\epsilon_{\text{pb}}(P,Q_h,m,\delta)} +  2\epsilon_{\text{pb}}(P,Q_h,m,\delta)\\
& + \Delta(h,Q_h,\mathcal{D}) + \Delta(h,Q_h,S) < \epsilon_{\text{pb-det-A}}(m,\delta), 
\numberthis \label{eq:pb-det-A}
\end{align*}
%\[
% \mathbb{E}_{\tilde{h} \sim Q_h}[ {\mathcal{L}}_{\mathcal{D}}(\tilde{h}) ] - \mathbb{E}_{\tilde{h} \sim Q_h}[ \hat{\mathcal{L}}_{S}(\tilde{h}) ] + \Delta(h,Q_h,\mathcal{D}) + \Delta(h,Q_h,S) < \epsilon_{\text{pb-det-A}}(m,\delta)  
%\]
as a result of which, by Equation~\ref{eq:stochastic-pac-bayes}, the following one-sided uniform convergence also holds:
\begin{equation}
\label{eq:pb-one-side}
\sup_{S \in \mathcal{S}_{\delta}}\sup_{h \in \mathcal{H}_{\delta}}  \mathcal{L}_{\mathcal{D}}(h)-   \hat{\mathcal{L}}_{S}(h) < \epsilon_{\text{pb-det-A}}(m,\delta).
\end{equation}
\end{enumerate}

\end{definition}

%TODO on delta being small

Now, recall that $\epsilon_{\text{unif-alg}}(m,\delta)$ is a two-sided bound, and in fact our main proof crucially depended on this fact in order to lower bound   $\epsilon_{\text{unif-alg}}(m,\delta)$. Hence, to extend our lower bound to $\epsilon_{\text{pb-det-A}}(m,\delta) $ we need to show that it is also two-sided in that it is lower bounded by 
 $\epsilon_{\text{unif-alg}}(m,\delta)$.  The following result establishes this:

\begin{theorem}
\label{thm:pb-det-A}
Let $\mathcal{A}$ be an algorithm such that on at least $1-\delta$ draws of the training dataset $S$, the algorithm outputs a hypothesis $h_S$ that has $\hat{\epsilon}(m,\delta)$ loss on the training data $S$. Then

\begin{align*}
&e^{-3/2} \cdot \epsilon_{\text{unif-alg}}(m,3\delta) - (1-e^{-3/2})(\hat{\epsilon}(m,\delta) + \epsilon_{\text{gen}}(m,\delta)) \leq \epsilon_{\textrm{pb-det-A}} (m,\delta).
\end{align*}
 
\end{theorem}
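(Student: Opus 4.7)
The overall approach is to show that the deterministic PAC-Bayes bound $\epsilon_{\text{pb-det-A}}$ implicitly carries two-sided information through the KL form of Equation~\ref{eq:pb}, even though its derandomization (Equation~\ref{eq:pb-one-side}) is typically invoked only one-sidedly. I would fix a sample set $\mathcal{S}_\delta$, prior $P$, and family of posteriors $\{Q_h\}_{h\in\mathcal{H}_\delta}$ that witness $\epsilon_{\text{pb-det-A}}(m,\delta)$, so that for every $S\in\mathcal{S}_\delta$ and $h\in\mathcal{H}_\delta$, the quantity on the left of Equation~\ref{eq:pb-det-A} is at most $\epsilon_{\text{pb-det-A}}(m,\delta)$; in particular each of $\epsilon_{\text{pb}}(P,Q_h,m,\delta)$, $\Delta(h,Q_h,\mathcal{D})$, and $\Delta(h,Q_h,S)$ is separately controlled.

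Next, I would enlarge the ``bad event'' by at most $2\delta$ via a union bound over three sub-events: $S\notin\mathcal{S}_\delta$; $\hat{\mathcal{L}}_S(h_S) > \hat{\epsilon}(m,\delta)$ (ruled out with probability $\geq 1-\delta$ by assumption on $\mathcal{A}$); and $\mathcal{L}_\mathcal{D}(h_S) - \hat{\mathcal{L}}_S(h_S) > \epsilon_{\text{gen}}(m,\delta)$ (ruled out with probability $\geq 1-\delta$ by Definition~\ref{def:gen-error}). Call the complement $\tilde{\mathcal{S}}_{3\delta}\subseteq\mathcal{S}_\delta$, with $\Pr_{S\sim\mathcal{D}^m}[S\in\tilde{\mathcal{S}}_{3\delta}]\geq 1-3\delta$, and let $\tilde{\mathcal{H}}_{3\delta}=\{h_S:S\in\tilde{\mathcal{S}}_{3\delta}\}\subseteq\mathcal{H}_\delta$. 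By construction every $h\in\tilde{\mathcal{H}}_{3\delta}$ satisfies $\mathcal{L}_\mathcal{D}(h)\leq\hat{\epsilon}(m,\delta)+\epsilon_{\text{gen}}(m,\delta)$; this uniform upper bound on the true error is the crucial ingredient for inverting KL in the ``optimistic'' regime.

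With these at hand, I would bound the two directions of uniform convergence separately for any $S\in\tilde{\mathcal{S}}_{3\delta}$ and any $h\in\tilde{\mathcal{H}}_{3\delta}$. The forward direction $\mathcal{L}_\mathcal{D}(h)-\hat{\mathcal{L}}_S(h)\leq\epsilon_{\text{pb-det-A}}(m,\delta)$ follows immediately from Equation~\ref{eq:pb-one-side} together with $\tilde{\mathcal{S}}_{3\delta}\subseteq\mathcal{S}_\delta$ and $\tilde{\mathcal{H}}_{3\delta}\subseteq\mathcal{H}_\delta$. For the reverse direction I would return to the KL form of Equation~\ref{eq:pb} and apply a ``Chernoff in the other direction'' inversion to $KL(\mathbb{E}_{Q_h}[\hat{\mathcal{L}}_S]\|\mathbb{E}_{Q_h}[\mathcal{L}_\mathcal{D}])\leq\epsilon_{\text{pb}}(P,Q_h,m,\delta)$. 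Concretely, I would use an inequality of the form: if $KL(q\|p)\leq\eta$ then $q\leq p+(1-p)\bigl(1-e^{-\eta/(1-p)}\bigr)$, i.e.\ $1-q\geq(1-p)e^{-\eta/(1-p)}$. Combining this with the $\Delta$ inequalities transferring from $\mathbb{E}_{Q_h}[\cdot]$ to the deterministic $h$, and using $\mathbb{E}_{Q_h}[\mathcal{L}_\mathcal{D}]\leq\mathcal{L}_\mathcal{D}(h)+\Delta(h,Q_h,\mathcal{D})\leq\hat{\epsilon}+\epsilon_{\text{gen}}+\epsilon_{\text{pb-det-A}}$, would yield a bound of the form $\hat{\mathcal{L}}_S(h)\leq(\hat{\epsilon}+\epsilon_{\text{gen}})+(1-\hat{\epsilon}-\epsilon_{\text{gen}})(1-e^{-3/2})+O(\epsilon_{\text{pb-det-A}})$. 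Taking the supremum over $S\in\tilde{\mathcal{S}}_{3\delta}$ and $h\in\tilde{\mathcal{H}}_{3\delta}$ plugs directly into the definition of $\epsilon_{\text{unif-alg}}(m,3\delta)$, and rearrangement produces the stated affine inequality.

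The main obstacle is producing exactly the constant $e^{-3/2}$: it requires a careful matching between the KL-inversion inequality and the quadratic inequality $\sqrt{2\epsilon_{\text{pb}}}+2\epsilon_{\text{pb}}\leq\epsilon_{\text{pb-det-A}}$. The cleanest way I would organize this is by case-splitting on the magnitude of $\epsilon_{\text{pb-det-A}}$: when $\epsilon_{\text{pb-det-A}}\geq e^{-3/2}$ the claimed inequality follows trivially from $\epsilon_{\text{unif-alg}}\leq 1$ and $\hat{\epsilon}+\epsilon_{\text{gen}}\geq 0$; in the complementary regime $\epsilon_{\text{pb-det-A}}<e^{-3/2}$, the quadratic inequality forces $\epsilon_{\text{pb}}$ below a specific threshold that is exactly what the KL inversion needs to produce a factor $e^{-\epsilon_{\text{pb}}}\geq e^{-3/2}$ on the ``slack'' $1-(\hat{\epsilon}+\epsilon_{\text{gen}})$. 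Verifying that these constants align and that the $\Delta$ terms can be absorbed without degrading the coefficient is the bulk of the technical work.
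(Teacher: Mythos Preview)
Your high-level scaffolding matches the paper's exactly: pass to $\mathcal{S}_{3\delta}\subseteq\mathcal{S}_\delta$ via a union bound over $\{S\notin\mathcal{S}_\delta\}$, $\{\hat{\mathcal{L}}_S(h_S)>\hat\epsilon\}$, and $\{\mathcal{L}_\mathcal{D}(h_S)-\hat{\mathcal{L}}_S(h_S)>\epsilon_{\text{gen}}\}$; obtain a uniform cap on $b:=\mathbb{E}_{Q_h}[\mathcal{L}_\mathcal{D}]$ via the $\Delta$ term; then argue the reverse direction $\hat{\mathcal{L}}_S(h)-\mathcal{L}_\mathcal{D}(h)$ is controlled using the two-sided nature of the KL bound. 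That part is fine.

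The concrete gap is your KL inversion. The inequality ``$KL(q\|p)\leq\eta\Rightarrow 1-q\geq(1-p)e^{-\eta/(1-p)}$'' is false: take $q=1$, $p=\tfrac12$, so $KL(1\|\tfrac12)=\ln 2=:\eta$, yet $(1-p)e^{-\eta/(1-p)}=\tfrac12 e^{-2\ln 2}=\tfrac18>0=1-q$. Any argument built on this step will not go through, and in particular your case split on the size of $\epsilon_{\text{pb-det-A}}$ does not by itself force the factor $e^{-3/2}$ to appear in the non-trivial regime.

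The paper gets the constant via a different case split, on the ratio of the stochastic errors rather than on $\epsilon_{\text{pb-det-A}}$. Writing $a:=\mathbb{E}_{Q_h}[\hat{\mathcal{L}}_S]$ and $b:=\mathbb{E}_{Q_h}[\mathcal{L}_\mathcal{D}]$: if $a\leq e^{3/2}b$ then $a-b\leq(e^{3/2}-1)b$ and the cap $b\leq\hat\epsilon+\epsilon_{\text{gen}}+\epsilon_{\text{pb-det-A}}$ finishes; if $a>e^{3/2}b$ then $\ln(a/b)\geq 3/2$, and lower-bounding the second KL term via $\ln x\geq\tfrac12(x-1/x)$ for $x\in(0,1]$ applied to $x=\tfrac{1-a}{1-b}$ yields, after elementary algebra, $\epsilon_{\text{pb}}(P,Q_h,m,\delta)\geq\tfrac12(a-b)$. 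In both cases $\hat{\mathcal{L}}_S(h)-\mathcal{L}_\mathcal{D}(h)\leq a-b+\Delta(h,Q_h,\mathcal{D})+\Delta(h,Q_h,S)$ is then bounded by the right combination of $\epsilon_{\text{pb-det-A}}$ and $(e^{3/2}-1)(\hat\epsilon+\epsilon_{\text{gen}})$, and dividing through by $e^{3/2}$ gives exactly the stated inequality. The threshold $e^{3/2}$ is thus baked in as the ratio at which $a\ln(a/b)\geq\tfrac32 a$ starts to dominate the (negative) second KL term, not as a threshold on $\epsilon_{\text{pb-det-A}}$.
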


\begin{proof}

% Fix a $h, Q_h$ and $S$. 

First, by the definition of the generalization error, we know that with probability at least $1-\delta$ over the draws of $S$,
 \[{\mathcal{L}}_{D}(h_S)\leq \hat{\mathcal{L}}_{S}(h_S)  + \epsilon_{\text{gen}}(m,\delta).  \]

Furthermore since the training loss it at most $\hat{\epsilon}(m,\delta)$ on at least $1-\delta$ draws %, and the difference between test and train error is at most $\epsilon_{\text{gen}}(m,\delta) $ on at least $1-\delta$ draws,
 we have that on at least $1-2\delta$ draws of the dataset,
 \[{\mathcal{L}}_{D}(h_S)\leq \hat{\epsilon}(m,\delta) + \epsilon_{\text{gen}}(m,\delta).  \]
 Let $\mathcal{H}_{\delta}$
 and $\mathcal{S}_{\delta}$ be the subset of hypotheses and sample sets as in the definition of $\epsilon_{\text{pb-det-A}}$. Then, from the above,
 there exist $\mathcal{H}_{3\delta} \subseteq \mathcal{H}_{\delta}$ and $\mathcal{S}_{3\delta} \subseteq \mathcal{S}_{\delta}$ such that

\[
Pr_{S \sim \mathcal{D}^m} [S \notin \mathcal{S}_{3\delta}]  \leq 3\delta
\]
and $\mathcal{H}_{3\delta} = \bigcup_{S \in \mathcal{S}_{3\delta}}  \{ h_{S} \}$, and furthermore,  \begin{equation*}\sup_{h\in\mathcal{H}_{3\delta}}{\mathcal{L}}_{\mathcal{D}}(h) \leq \hat{\epsilon}(m,\delta) + \epsilon_{\text{gen}}(m,\delta).
\end{equation*}.

Using the above, and the definition of $\Delta$, we have for all $h\in\mathcal{H}_{3\delta}$, the following upper bound on its stochastic test error:

 \begin{align*} &  \mathbb{E}_{\tilde{h} \sim Q_h}[ {\mathcal{L}}_{\mathcal{D}}(\tilde{h}) ] \leq {\mathcal{L}}_{\mathcal{D}}(h)  +\Delta(h,Q_h,\mathcal{D})  \leq \hat{\epsilon}(m,\delta) + \epsilon_{\text{gen}}(m,\delta) + \underbrace{\Delta(h,Q_h,\mathcal{D})}_{\text{applying Equation}~\ref{eq:pb-det-A}}\\
 &  \leq {\hat{\epsilon}(m,\delta) + \epsilon_{\text{gen}}(m,\delta) + \epsilon_{\text{pb-det-A}}(m,\delta)}. \numberthis \label{eq:stochastic-test-ub}
\end{align*}.

Now, for each pair of $h\in\mathcal{H}_{3\delta}$ and $S \in \mathcal{S}_{3\delta}$, we will bound its empirical error minus the expected error in terms of $\epsilon_{\text{pb-det-A}}(m,\delta)$. For convenience, let us denote by $a:=\mathbb{E}_{\tilde{h} \sim Q_h}[ \hat{\mathcal{L}}_{S}(\tilde{h}) ]$ and  $b := \mathbb{E}_{\tilde{h} \sim Q_h}[ {\mathcal{L}}_{\mathcal{D}}(\tilde{h}) ]$ (note that $a$ and $b$ are terms that depend on a hypothesis $h$ and a sample set $S$).  
 
We consider two cases. 
First, for some $h\in\mathcal{H}_{3\delta}$ and $S \in \mathcal{S}_{3\delta}$, consider the case that $e^{3/2} b > a$. Then, we have
\begin{align*}
  \hat{\mathcal{L}}_{S}(h)-  {\mathcal{L}}_{\mathcal{D}}(h)   \leq & 
a-b + \underbrace{\Delta(h,Q_h,\mathcal{D}) + \Delta(h,Q_h,S)}_{\text{applying Equation}~\ref{eq:pb-det-A}}\\
\leq & (e^{3/2} - 1)\underbrace{b}_{\text{apply Equation}~\ref{eq:stochastic-test-ub}} + \epsilon_{\text{pb-det-A}}(m,\delta)  \\
\leq & (e^{3/2} - 1) ({\hat{\epsilon}}(m,\delta) 
+ \epsilon_{\text{gen}}(m,\delta) + \epsilon_{\text{pb-det-A}}(m,\delta)) \\
&+ \epsilon_{\text{pb-det-A}}(m,\delta)  \\
\leq & (e^{3/2} - 1) ({\hat{\epsilon}}(m,\delta) + \epsilon_{\text{gen}}(m,\delta))+
 e^{3/2} \cdot \epsilon_{\text{pb-det-A}}(m,\delta). \numberthis \label{eq:case-1} \\
\end{align*}

Now consider the case where $a > e^{3/2} b$. This means that $(1-a) < (1-b)$. Then, if we consider the PAC-Bayesian bound of Equation~\ref{eq:pb},

\begin{equation}
\label{eq:pb-a-b}
 a \ln \frac{a}{b} + (1-a) \ln \frac{1-a}{1-b} \leq \epsilon_{\textrm{pb}}(P,Q_h,m,\delta),
\end{equation}
on the second term, 
we can apply the inequality
$\ln x \geq \frac{(x-1)(x+1)}{2x} = \frac{1}{2}\left( x - \frac{1}{x} \right)$ which holds for $x \in [0,1]$ to get:

\begin{align*}
 (1-a) \ln \frac{1-a}{1-b} \geq\frac{1}{2} (1-a) \left( \frac{1-a}{1-b} - \frac{1-b}{1-a}\right) &=\left(  \frac{(b-a)(2-a-b)}{2(1-b)}\right) \\
&\geq - (a-b)\left(  \frac{(2-a-b)}{2(1-b)}\right) \geq  - (a-b)\left(  \frac{(2-b)}{2(1-b)}\right)\\
& \geq  - \frac{(a-b)}{2}\left(  \frac{1}{(1-b)} + 1\right).
\end{align*}
Plugging this back in Equation~\ref{eq:pb-a-b}, we have,

\begin{align*}
 \epsilon_{\textrm{pb}}(P,Q_h,m,\delta)  & \geq a \underbrace{\ln \frac{a}{b}}_{\geq 3/2}   - \frac{(a-b)}{2}\left(  \frac{1}{(1-b)} + 1\right) \\
& \geq \frac{2a(1-b)-(a-b)}{2(1-b)} + \frac{b}{2} \\
& \geq \frac{2a(1-b)-(a-b)}{2(1-b)} \geq \frac{a-2ab+b}{2(1-b)} \\
& \geq \frac{a-2ab+ab}{2(1-b)} \geq \frac{a}{2} \geq \frac{a-b}{2}\\
 & \geq \frac{1}{2} \left(  \hat{\mathcal{L}}_{S}(h)-  {\mathcal{L}}_{\mathcal{D}}(h)   - (\Delta(h, Q_h,\mathcal{D}) + \Delta(h,Q_h, S) )\right).\\
\end{align*}

Rearranging, we get:
\begin{align*}
 \hat{\mathcal{L}}_{S}(h)-  {\mathcal{L}}_{\mathcal{D}}(h) &\leq \underbrace{2 \epsilon_{\textrm{pb}}(P,Q_h,m,\delta)  +  (\Delta(h, Q_h,\mathcal{D}) + \Delta(h,Q_h, S) )}_{\text{Applying Equation}~\ref{eq:pb-det-A} } \\
 & \leq \epsilon_{\textrm{pb-det-A}}(m,\delta).  \numberthis \label{eq:case-2}
\end{align*}

Since, for all $h \in \mathcal{H}_{3\delta}$ and $S \in \mathcal{S}_{3\delta}$, one of Equations~\ref{eq:case-1} and ~\ref{eq:case-2}
hold, we have that:

\begin{align*}
&\frac{1}{e^{3/2}}\left(\sup_{h \in \mathcal{H}_{3\delta}} \sup_{S \in \mathcal{S}_{3\delta}}   \hat{\mathcal{L}}_{S}(h) -
 {\mathcal{L}}_{\mathcal{D}}(h)\right)  - \frac{(e^{3/2} - 1)}{e^{3/2}}  (\hat{\epsilon}(m,\delta) + \epsilon_{\text{gen}}(m,\delta)) \leq \epsilon_{\textrm{pb-det-A}}(m,\delta).
\end{align*}

It follows from Equation~\ref{eq:pb-one-side} that the above bound holds good even after we take the absolute value of the first term in the left hand side. However, the absolute value is lower-bounded by $\epsilon_{\text{unif-alg}}(m,3\delta)$ (which follows from how $\epsilon_{\text{unif-alg}}(m,3\delta)$ is defined to be the smallest possible value over the choices of $\mathcal{H}_{3\delta}, \mathcal{S}_{3\delta}$).  % that the term on the RHS is lower bounded by $\epsilon_{\text{unif-alg}}(m,3\delta)$.

\end{proof}%

As a result of the above theorem, we can show   that $\epsilon_{\textrm{pb-det-A}}(m,\delta) = {\Omega}(1) - \mathcal{O}(\epsilon)$, thus establishing that, for sufficiently large  $D$, even though the generalization error would be negligibly small, the PAC-Bayes based bound would be as large as a constant. \\

\begin{corollary}
In the setup of Section~\ref{sec:setup}, for any $\epsilon,\delta > 0, \delta < 1/12$, when
 $D = \Omega\left(\max\left( m \ln \frac{3}{\delta}, m \ln\frac{1}{\epsilon}\right) \right)$, we have,
\[e^{-3/2} \cdot (1-\epsilon) - (1-e^{-3/2})(\epsilon) \leq \epsilon_{\textrm{pb-det-A}} (m,\delta).\]
\end{corollary}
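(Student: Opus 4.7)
The plan is to derive this corollary by direct substitution of the quantities analyzed for the high-dimensional linear setup of Section~\ref{sec:linear} into the general relation provided by Theorem~\ref{thm:pb-det-A}, namely
\[
e^{-3/2} \cdot \epsilon_{\text{unif-alg}}(m,3\delta) - (1-e^{-3/2})\bigl(\hat{\epsilon}(m,\delta) + \epsilon_{\text{gen}}(m,\delta)\bigr) \leq \epsilon_{\textrm{pb-det-A}} (m,\delta).
\]
So the task reduces to supplying concrete values for $\epsilon_{\text{unif-alg}}(m,3\delta)$, $\hat{\epsilon}(m,\delta)$, and $\epsilon_{\text{gen}}(m,\delta)$ in the linear setting, all of which have already been established.

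First I would apply Lemma~\ref{lem:unif} with confidence parameter $3\delta$ in place of $\delta$ to conclude $\epsilon_{\text{unif-alg}}(m,3\delta) \geq 1-\epsilon_{\text{gen}}(m,3\delta) \geq 1 - \epsilon$, where the last step uses Lemma~\ref{lem:gen}. The hypothesis $\delta < 1/12$ ensures $3\delta \leq 1/4$, which is the regime in which Lemma~\ref{lem:unif} was proved, and the dimensional assumption $D = \Omega\bigl(\max(m\ln(3/\delta),\, m\ln(1/\epsilon))\bigr)$ is precisely what is needed to invoke the dimensional lower bounds of Theorem~\ref{thm:example} at the confidence level $3\delta$. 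In the same stroke, Lemma~\ref{lem:gen} gives $\epsilon_{\text{gen}}(m,\delta) \leq \epsilon$ at the original confidence level $\delta$.

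Next I would observe that in the linear setup, the learning rule achieves zero empirical error with probability at least $1-\delta$: Equation~\ref{eq:train-margin} in the proof of Lemma~\ref{lem:gen} shows that with high probability over draws of $S$, $y^{(i)} h(\vec x^{(i)}) \geq 10$ for every training point, so the $\mathcal{L}^{(\gamma)}$ training loss vanishes for all $\gamma \in [0,1]$. Thus we may take $\hat{\epsilon}(m,\delta) = 0$.

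Substituting $\epsilon_{\text{unif-alg}}(m,3\delta) \geq 1-\epsilon$, $\hat{\epsilon}(m,\delta) = 0$, and $\epsilon_{\text{gen}}(m,\delta) \leq \epsilon$ into the inequality of Theorem~\ref{thm:pb-det-A} yields the claimed bound $e^{-3/2}(1-\epsilon) - (1-e^{-3/2})\epsilon \leq \epsilon_{\textrm{pb-det-A}}(m,\delta)$. The proof is essentially a bookkeeping exercise; the only subtlety is tracking the factor-of-three slack between the confidence levels at which uniform convergence, generalization, and training error are controlled, and verifying that the stated lower bound on $D$ is simultaneously sufficient for all three appeals. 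I do not anticipate any serious obstacle, since all of the nontrivial work has already been carried out in the proof of Theorem~\ref{thm:pb-det-A} and in the linear-setup lemmas.
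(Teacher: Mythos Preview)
Your proposal is correct and follows essentially the same approach as the paper: both substitute $\epsilon_{\text{gen}}(m,\delta)\le\epsilon$, $\hat{\epsilon}(m,\delta)=0$, and $\epsilon_{\text{unif-alg}}(m,3\delta)\ge 1-\epsilon$ (the latter obtained by applying the linear-setup results at confidence $3\delta$, valid since $\delta<1/12$ gives $3\delta<1/4$) into Theorem~\ref{thm:pb-det-A}. The only cosmetic difference is that you cite Lemmas~\ref{lem:gen} and~\ref{lem:unif} directly, whereas the paper cites Theorem~\ref{thm:example}, which is proved from those lemmas.
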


\begin{proof}
The fact that $\epsilon_{\text{gen}}(m,\delta) \leq \epsilon$ follows from  Theorem~\ref{thm:example}. 
Additionally, $\hat{\epsilon}(m,\delta)=0$ follows from the proof of Theorem~\ref{thm:example}. Now, as long as $3\delta < 1/4$, and $D$ is sufficiently large (i.e., in the lower bounds on $D$ in Theorem~\ref{thm:example}, if we replace $\delta$ by $3\delta$), we have from Theorem~\ref{thm:example}
 that $\epsilon_{\text{unif-alg}}(m,3\delta) > 1-\epsilon$. Plugging these in Theorem~\ref{thm:pb-det-A}, we get the result in the above corollary.
 \end{proof}

\subsection{Deterministic PAC-Bayesian Bounds of Type B}

In this section, we consider another standard approach to making PAC-Bayesian bounds deterministic \citep{neyshabur18pacbayes,langford02pacbayes}. Here, the idea is to pick for each $h$ a distribution $Q_h$ such that for all $\vec{x}$:
\begin{equation*}
\mathcal{L}^{(0)}(h(\vec{x}),y) \leq \mathbb{E}_{\tilde{h} \sim Q_h} [\mathcal{L}'^{(\gamma/2)}(\tilde{h}(\vec{x}),y)] \leq  \mathcal{L}'^{(\gamma)}(h(\vec{x}),y),
\end{equation*}

where 
\[
\mathcal{L}'^{(\gamma)}(y,y') = \begin{cases} 0 & y \cdot y' \geq \gamma \\
1 & \text{else}.
\end{cases}
\]

Then, by applying the PAC-Bayesian bound of Equation~\ref{eq:stochastic-pac-bayes} for the loss $\mathcal{L}'_{\gamma/2}$, one can get a deterministic upper bound as follows, without having to introduce the extra $\Delta$ terms,

\begin{align*}
 {\mathcal{L}}^{(0)}_{\mathcal{D}}(h)  -  \hat{\mathcal{L}}^{(\gamma)}_{S}(h)  \leq 
&  \mathbb{E}_{\tilde{h} \sim Q_h} [\mathcal{L'}^{(\gamma/2)}(\tilde{h})] -  \mathbb{E}_{\tilde{h} \sim Q_h} [\hat{\mathcal{L}}_{S}'^{(\gamma/2)}(\tilde{h})]   \\
& \leq \sqrt{2\epsilon_{\text{pb}}(P,Q_h,m,\delta)} +  2\epsilon_{\text{pb}}(P,Q_h,m,\delta).
 \end{align*}

We first define this technique formally:

\begin{definition} 
The distribution-dependent, algorithm-dependent, deterministic PAC-Bayesian bound of (the hypothesis class $\mathcal{H}$, algorithm $\mathcal{A}$)-pair is defined to be the smallest value $\epsilon_{\text{pb-det-B}}(m, \delta)$ such that the following holds:
\begin{enumerate}
\item there exists a set of $m$-sized samples $\mathcal{S}_{\delta} \subseteq (\mathcal{X} \times \{-1, +1 \})^m$ for which:
\[
Pr_{S \sim \mathcal{D}^m} [S \notin \mathcal{S}_{\delta}]  \leq \delta.\]
\item and if we define $\mathcal{H}_{\delta} = \bigcup_{S \in \mathcal{S}_{\delta}}  \{ h_{S} \}$ to be the space of hypotheses explored only on these samples, then there must exist a prior $P$ and  for each $h$ a distribution $Q_h$, such that uniform convergence must hold as follows: for all $S \in \mathcal{S}_{\delta}$ and for all $h \in \mathcal{H}_{\delta}$,

\begin{align*}
 \sqrt{2\epsilon_{\text{pb}}(P,Q_h,m,\delta)} +  2\epsilon_{\text{pb}}(P,Q_h,m,\delta) < \epsilon_{\text{pb-det-B}}(m,\delta).
 \numberthis\label{eq:pb-det-B} 
\end{align*}

and for all $\vec{x}$: 
\begin{align*}
\mathcal{L}^{(0)}(h(\vec{x}),y) \leq \mathbb{E}_{\tilde{h} \sim Q_h} [\mathcal{L}'^{(\gamma/2)}(\tilde{h}(\vec{x}),y)] 
\leq  \mathcal{L}'^{(\gamma)}(h(\vec{x}),y)
\numberthis \label{eq:losses}
\end{align*}
as a result of which the following one-sided uniform convergence also holds:
\begin{align*}
\label{eq:pb-one-side-B}
&\sup_{S \in \mathcal{S}_{\delta}}\sup_{h \in \mathcal{H}_{\delta}}{\mathcal{L}}^{(0)}_{\mathcal{D}}(h)  -  \hat{\mathcal{L}}_{S}'^{(\gamma)}(h)  < \epsilon_{\text{pb-det-B}}(m,\delta).
\end{align*}
\end{enumerate}

\end{definition}

We can similarly show that $\epsilon_{\text{pb-det-B}}(m,\delta)$ is lower-bounded by the uniform convergence bound of $\epsilon_{\text{unif-alg}}$ too.

\begin{theorem}
\label{thm:pb-det-B}
Let $\mathcal{A}$ be an algorithm such that on at least $1-\delta$ draws of the training dataset $S$, the algorithm outputs a hypothesis $h_S$ such that the margin-based training loss can be bounded as:

\[
\hat{\mathcal{L}}_{S}'^{(\gamma)}(h_S) \leq \hat{\epsilon}(m,\delta)
\]
and with high probability $1-\delta$ over the draws of $S$, the generalization error can be bounded as:
\[
 \mathcal{L}'^{(\gamma)}_{\mathcal{D}}(h_{S})    - \mathcal{L}'^{(\gamma)}_{S}(h_{S}) \leq \epsilon_{\text{gen}}(m, \delta)  
\]

Then there exists a set of samples $\mathcal{S}_{3\delta}$ of mass at least $1-3\delta$, and a corresponding set of hypothesis $\mathcal{H}_{3\delta}$ learned on these sample sets such that:
\begin{align*}
&\left(\sup_{h \in \mathcal{H}_{3\delta}} \sup_{S \in \mathcal{S}_{3\delta}}  \mathcal{L}^{(0)}_{S}(h) -  \mathcal{L}'^{(\gamma)}_{\mathcal{D}}(h)\right)  - (e^{3/2}-1) (\hat{\epsilon}(m,\delta) + \epsilon_{\text{gen}}(m,\delta)) \leq \epsilon_{\textrm{pb-det-B}} (m,\delta).
\end{align*} 
\end{theorem}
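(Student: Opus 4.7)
The plan is to mirror the structure of the proof of Theorem~\ref{thm:pb-det-A}, but exploit the loss--sandwich from Equation~\ref{eq:losses} in place of the $\Delta$ correction terms. First I would carve out the set of samples on which the hypothesis behaves well: let $\mathcal{S}_{2\delta}$ denote the sample sets for which simultaneously $\hat{\mathcal{L}}'^{(\gamma)}_{S}(h_S) \leq \hat{\epsilon}(m,\delta)$ and $\mathcal{L}'^{(\gamma)}_{\mathcal{D}}(h_S)-\hat{\mathcal{L}}'^{(\gamma)}_{S}(h_S) \leq \epsilon_{\textrm{gen}}(m,\delta)$; by a union bound this has mass at least $1-2\delta$. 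Intersecting with the $\mathcal{S}_{\delta}$ from the definition of $\epsilon_{\textrm{pb-det-B}}$, I obtain a set $\mathcal{S}_{3\delta}$ of mass at least $1-3\delta$, and define $\mathcal{H}_{3\delta}=\bigcup_{S\in\mathcal{S}_{3\delta}}\{h_S\}\subseteq \mathcal{H}_{\delta}$. For every $h\in\mathcal{H}_{3\delta}$ this gives $\mathcal{L}'^{(\gamma)}_{\mathcal{D}}(h)\leq \hat{\epsilon}(m,\delta)+\epsilon_{\textrm{gen}}(m,\delta)$.

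Next, for each pair $(h,S)\in\mathcal{H}_{3\delta}\times\mathcal{S}_{3\delta}$, write $a:=\mathbb{E}_{\tilde{h}\sim Q_h}[\hat{\mathcal{L}}'^{(\gamma/2)}_{S}(\tilde{h})]$ and $b:=\mathbb{E}_{\tilde{h}\sim Q_h}[\mathcal{L}'^{(\gamma/2)}_{\mathcal{D}}(\tilde{h})]$. Equation~\ref{eq:losses}, averaged over $S$ and $\mathcal{D}$ respectively, gives the sandwich $\mathcal{L}^{(0)}_{S}(h)\leq a$ and $b\leq \mathcal{L}'^{(\gamma)}_{\mathcal{D}}(h)$, so
\begin{equation*}
\mathcal{L}^{(0)}_{S}(h)-\mathcal{L}'^{(\gamma)}_{\mathcal{D}}(h)\;\leq\;a-b.
\end{equation*}
The aim is then to bound $a-b$ by $\max\bigl\{(e^{3/2}-1)(\hat{\epsilon}+\epsilon_{\textrm{gen}}),\,\epsilon_{\textrm{pb-det-B}}(m,\delta)\bigr\}$ using a case split exactly as in Theorem~\ref{thm:pb-det-A}.

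In the easy case $e^{3/2}b\geq a$, one just writes $a-b\leq (e^{3/2}-1)b\leq (e^{3/2}-1)(\hat{\epsilon}(m,\delta)+\epsilon_{\textrm{gen}}(m,\delta))$. In the complementary case $a>e^{3/2}b$, I would plug into the PAC--Bayes KL inequality $KL(a\|b)\leq \epsilon_{\textrm{pb}}(P,Q_h,m,\delta)$ (which is the source of Equation~\ref{eq:pb-det-B} since $\sqrt{2\epsilon_{\textrm{pb}}}+2\epsilon_{\textrm{pb}}<\epsilon_{\textrm{pb-det-B}}$ gives $\epsilon_{\textrm{pb}}<\tfrac{1}{2}\epsilon_{\textrm{pb-det-B}}$). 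Exactly as in the proof of Theorem~\ref{thm:pb-det-A}, the first term $a\ln(a/b)$ is at least $\tfrac{3}{2}a$, and applying $\ln x\geq \tfrac{1}{2}(x-1/x)$ on $(1-a)/(1-b)$ lets one lower-bound $KL(a\|b)$ by $(a-b)/2$; hence $a-b\leq 2\epsilon_{\textrm{pb}}(P,Q_h,m,\delta)< \epsilon_{\textrm{pb-det-B}}(m,\delta)$.

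Combining the two cases, taking $\sup$ over $(h,S)\in\mathcal{H}_{3\delta}\times\mathcal{S}_{3\delta}$, and rearranging yields the stated inequality. I do not anticipate a significant obstacle here since every nontrivial inequality has been proved already for Theorem~\ref{thm:pb-det-A}; the only conceptual step is replacing the use of $\Delta(h,Q_h,\mathcal{D})$ and $\Delta(h,Q_h,S)$ by the loss--sandwich in Equation~\ref{eq:losses}, which cleanly collapses $\mathcal{L}^{(0)}_{S}(h)-\mathcal{L}'^{(\gamma)}_{\mathcal{D}}(h)$ into $a-b$ without incurring perturbation terms.
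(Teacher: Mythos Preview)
Your proposal is correct and follows essentially the same argument as the paper: construct $\mathcal{S}_{3\delta}$ by intersecting the event in the definition of $\epsilon_{\text{pb-det-B}}$ with the two high-probability events from the hypotheses, use Equation~\ref{eq:losses} to sandwich $\mathcal{L}^{(0)}_{S}(h)-\mathcal{L}'^{(\gamma)}_{\mathcal{D}}(h)$ between $a-b$, and then perform the identical case split and KL manipulation from Theorem~\ref{thm:pb-det-A}. Your case~1 bound is in fact slightly tighter than the paper's (you omit an unnecessary $+\epsilon_{\text{pb-det-B}}$ term), but both versions suffice for the claimed inequality.
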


Note that the above statement is slightly different from how Theorem~\ref{thm:pb-det-A} is stated as it is not expressed in terms of $\epsilon_{\text{unif-alg}}$. In the corollary that follows the proof of this statement, we will see how it can be reduced in terms of $\epsilon_{\text{unif-alg}}$.

\begin{proof}
Most of the proof is similar to the proof of Theorem~\ref{thm:pb-det-A}. Like in the proof of Theorem~\ref{thm:pb-det-A}, we can argue that there exists $\mathcal{S}_{3\delta}$ and $\mathcal{H}_{3\delta}$
 for which the test error can be bounded as,

 \begin{equation*}
 %\label{eq:stochastic-test-ub-B}
\mathbb{E}_{\tilde{h} \sim Q_h}[ \mathcal{L}'^{(\gamma/2)}_{\mathcal{D}}(\tilde{h})]
\leq \mathcal{L}'^{(\gamma)}_{\mathcal{D}}(h)
 \leq \hat{\epsilon}(m,\delta) + \epsilon_{\text{gen}}(m,\delta),
\end{equation*}
where we have used $\epsilon_{\text{gen}}(m,\delta)$ to denote the generalization error of $\mathcal{L}'^{(\gamma)}$ and not the 0-1 error (we note that this is ambiguous notation, but we keep it this way for simplicity).

 For convenience, let us denote by $a:=\mathbb{E}_{\tilde{h} \sim Q_h}[ \hat{\mathcal{L}}'^{(\gamma/2)}_{S}(\tilde{h})]$ and  $b := \mathbb{E}_{\tilde{h} \sim Q_h}[ \mathcal{L}'^{(\gamma/2)}_{\mathcal{D}}(\tilde{h})]$. Again, let us consider, for some $h\in\mathcal{H}_{3\delta}$ and $S \in \mathcal{S}_{3\delta}$, the case that $e^{3/2} b \geq a$. Then, we have, using the above equation,

\begin{align*}
\hat{\mathcal{L}}^{(0)}_{S}({h}) -  \hat{\mathcal{L}}^{(\gamma)}_{\mathcal{D}}({h})
\leq &  a-b \\
\leq & (e^{3/2} - 1) b \\
\leq & (e^{3/2} - 1) (\hat{\epsilon}(m,\delta) + \epsilon_{\text{gen}}(m,\delta))   \\
\leq & (e^{3/2} - 1) (\hat{\epsilon}(m,\delta) + \epsilon_{\text{gen}}(m,\delta) + \epsilon_{\text{pb-det-B}}(m,\delta)).  \numberthis \label{eq:case-B-1} \\
\end{align*}

Now consider the case where $a  > e^{3/2} b$. Again, by similar arithmetic manipulation in the PAC-Bayesian bound of Equation~\ref{eq:stochastic-pac-bayes} applied on $\mathcal{L}'^{(\gamma/2)}$, we get,

\begin{align*}
 \epsilon_{\textrm{pb}}(P,Q_h,m,\delta) & \geq 
a \underbrace{\ln \frac{a}{b}}_{\geq 3/2}   - \frac{(a-b)}{2}\left(  \frac{1}{(1-b)} + 1\right)\\
&  \geq \frac{a-b}{2}\\
 & \geq \frac{1}{2} \left( \mathcal{L}^{(0)}_{S}(h) -  \mathcal{L}'^{(\gamma)}_{\mathcal{D}}(h)  \right). \\
\end{align*}

Rearranging, we get:
\begin{align*}
 & \mathcal{L}^{(0)}_{S}(h) -  \mathcal{L}'^{(\gamma)}_{\mathcal{D}}(h) \leq \underbrace{2 \epsilon_{\textrm{pb}}(P,Q_h,m,\delta)}_{\text{Applying Equation}~\ref{eq:pb-det-B}}\\
 & \leq  \epsilon_{\textrm{pb-det-B}}(m,\delta).  \numberthis \label{eq:case-B-2}
\end{align*}

Since, for all $h \in \mathcal{H}_{3\delta}$ and $S \in \mathcal{S}_{3\delta}$, one of Equations~\ref{eq:case-B-1} and ~\ref{eq:case-B-2}
hold, we have the claimed result.

\end{proof}%
Similarly,  as a result of the above theorem, we can show   that $\epsilon_{\textrm{pb-det-B}}(m,\delta) = {\Omega}(1) - \mathcal{O}(\epsilon)$, thus establishing that, for sufficiently large  $D$, even though the generalization error would be negligibly small, the PAC-Bayes based bound would be as large as a constant and hence cannot explain generalization. \\

\begin{corollary}
In the setup of Section~\ref{sec:setup}, for any $\epsilon,\delta > 0, \delta < 1/12$, when
 $D = \Omega\left(\max\left( m \ln \frac{3}{\delta}, m \ln\frac{1}{\epsilon}\right) \right)$, we have,
\[1-(e^{3/2}-1)\epsilon \leq \epsilon_{\textrm{pb-det-B}} (m,\delta).\]
\end{corollary}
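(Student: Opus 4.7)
The plan is to instantiate Theorem~\ref{thm:pb-det-B} on the linear-classifier setup of Section~\ref{sec:linear}, in exact parallel with the Type~A corollary that appears just above it. Fix any margin level $\gamma$ in a small range such as $\gamma \in (0, 2]$; the prior $P$ and per-hypothesis posteriors $Q_h$ whose existence Theorem~\ref{thm:pb-det-B} quantifies over do not affect the resulting lower bound on $\epsilon_{\textrm{pb-det-B}}$.

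The first task is to verify the two hypotheses of Theorem~\ref{thm:pb-det-B}. Equation~\ref{eq:train-margin} from the proof of Lemma~\ref{lem:gen} shows that with probability at least $1-2\delta/3$ over $S$ every training example satisfies $y^{(i)} h_S(\vec{x}^{(i)}) \geq 10 \geq \gamma$, so $\hat{\mathcal{L}}_S'^{(\gamma)}(h_S) = 0$, giving $\hat{\epsilon}(m,\delta)=0$. Equation~\ref{eq:test-margin} of the same proof shows the test margin exceeds $2 \geq \gamma$ on all but an $\epsilon$-fraction of inputs, yielding $\mathcal{L}'^{(\gamma)}_{\mathcal{D}}(h_S) \leq \epsilon$ w.h.p.\ and hence $\epsilon_{\text{gen}}(m,\delta) \leq \epsilon$ under $\mathcal{L}'^{(\gamma)}$ as well. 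The second task is to lower bound the sup in Theorem~\ref{thm:pb-det-B}; I would do this by rerunning the noise-negation argument of Lemma~\ref{lem:unif} at the loosened confidence $3\delta$ rather than $\delta$. Namely, for any $\mathcal{S}_{3\delta}$ of mass $\geq 1 - 3\delta$, the four-way union bound over (i) $S \in \mathcal{S}_{3\delta}$, (ii) $S' \in \mathcal{S}_{3\delta}$, (iii) $\mathcal{L}_{\mathcal{D}}(h_S) \leq \epsilon$, and (iv) $h_S$ completely $0/1$-misclassifies its noise-negation $S'$, has positive mass exactly when $4\cdot 3\delta < 1$, which is the stated hypothesis $\delta < 1/12$. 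This produces an $S_\star$ with $S_\star' \in \mathcal{S}_{3\delta}$, $\mathcal{L}^{(0)}_{S_\star'}(h_{S_\star}) = 1$, and $\mathcal{L}'^{(\gamma)}_{\mathcal{D}}(h_{S_\star}) \leq \epsilon$, so that
\[
\sup_{h \in \mathcal{H}_{3\delta},\, S \in \mathcal{S}_{3\delta}} \mathcal{L}^{(0)}_{S}(h) - \mathcal{L}'^{(\gamma)}_{\mathcal{D}}(h) \;\geq\; 1 - \epsilon.
\]

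Combining these in Theorem~\ref{thm:pb-det-B} gives $\epsilon_{\textrm{pb-det-B}}(m,\delta) \geq (1-\epsilon) - (e^{3/2}-1)(0 + \epsilon)$, which rearranges to the claimed bound (up to at most an $O(\epsilon)$ slack in how the test-error contribution is absorbed into the constant). There is no genuine obstacle beyond bookkeeping on top of the existing proofs of Theorem~\ref{thm:example} and Theorem~\ref{thm:pb-det-B}. The only subtlety is that re-running Lemma~\ref{lem:unif} at confidence $3\delta$ instead of $\delta$ is what forces the three-fold scaling of $1/\delta$ inside the lower bound on $D$ (Equations~\ref{eq:d1}--\ref{eq:d3} with $\delta \mapsto 3\delta$, producing the $\ln(3/\delta)$ factor appearing in the statement) and it is also what gives rise to the restriction $\delta < 1/12$.
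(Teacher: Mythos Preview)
Your proposal is correct and follows essentially the same route as the paper: verify $\hat{\epsilon}(m,\delta)=0$ and $\epsilon_{\text{gen}}(m,\delta)\leq \epsilon$ from Equations~\ref{eq:train-margin} and~\ref{eq:test-margin}, rerun the noise-negation argument of Lemma~\ref{lem:unif} at confidence $3\delta$ (hence $\delta<1/12$ and the $\ln(3/\delta)$ in $D$), and plug into Theorem~\ref{thm:pb-det-B}. You are in fact slightly more careful than the paper's own proof in tracking the $-\mathcal{L}'^{(\gamma)}_{\mathcal{D}}(h_{S_\star})\leq -\epsilon$ contribution to the sup, which is why your final constant comes out as $1-e^{3/2}\epsilon$ rather than the paper's $1-(e^{3/2}-1)\epsilon$; this $O(\epsilon)$ discrepancy, which you already flag, is immaterial to the result.
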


\begin{proof}
It follows from the proof of Theorem~\ref{thm:example} that $\hat{\epsilon}(m,\delta)=0$, since all training points are classified by a margin of $\gamma$ (see Equation~\ref{eq:train-margin}). Similarly, from Equation~\ref{eq:test-margin} in that proof,  since most test points are classified by a margin of $\gamma$,  $\epsilon_{\text{gen}}(m,\delta) \leq \epsilon$. Now, as long as $3\delta < 1/4$, and $D$ is sufficiently large (i.e., in the lower bounds on $D$ in Theorem~\ref{thm:example}, if we replace $\delta$ by $3\delta$), we will get that there exists $S \in \mathcal{S}_{3\delta}$ and $h \in \mathcal{H}_{3\delta}$ for which the empirical loss  $\mathcal{L}^{(0)}$ loss is $1$. Then, by Theorem~\ref{thm:pb-det-B}, we get the result in the above corollary.
 \end{proof}

% \mytodo{Replicate this argum
%\input{hypersphere}
\end{document}